\documentclass[12pt,3p]{elsarticle}

\usepackage[utf8]{inputenc}
 \usepackage{csquotes}
 \usepackage{amsmath}
 \usepackage{amssymb}
 \usepackage{setspace}
 \usepackage{graphicx} 
 \hyphenation{methods}
 \usepackage{csquotes}
 \usepackage{mathtools}
 \usepackage{algorithm}
\usepackage{algorithmic}
\usepackage{caption}
\usepackage{enumitem}
\usepackage{amsthm,xspace}
\usepackage{booktabs}
\usepackage{color}
\usepackage{lineno,hyperref}
\modulolinenumbers[5]

\newtheorem{theorem}{Theorem}
\newtheorem{corollary}{Corollary}
\newtheorem{lemma}{Lemma}
\newtheorem{definition}{Definition}

\newcommand{\oneoneea}{(1+1)~EA\xspace}
\newcommand{\oneoneiahype}{(1+1)~IA$^\text{hyp}$\xspace}
\newcommand{\oneoneiahypequal}{{(1+1)~IA}$^\text{hyp}_{\geq}$\xspace}
\newcommand{\oneoneiahypgreater}{{(1+1)~IA}$_>^\text{hyp}$\xspace}

\newcommand{\oneonerlsk}{(1+1)~RLS$_k$\xspace}
\newcommand{\muonerlsageing}{($\mu$+1)~RLS$_p^{\text{ageing}}$\xspace}
\newcommand{\realmuonerlsageing}{($\mu$+1)~RLS$^{\text{ageing}}$\xspace}
\newcommand{\muoneeaageing}{($\mu$+1)~EA$^{\text{ageing}}$\xspace}

\usepackage[symbol]{footmisc}

\hyphenation{a-na-ly-sis}
\hyphenation{Leading-Ones}
\hyphenation{ge-ne-ration}
\hyphenation{ge-ne-rations}
\hyphenation{cha-racte-ris-tics}

\interfootnotelinepenalty=10000

\journal{Journal of \LaTeX\ Templates}









\bibliographystyle{model1-num-names}

\begin{document}

\begin{frontmatter}

\title{When Hypermutations and Ageing Enable Artificial Immune Systems to Outperform Evolutionary Algorithms\footnote{An extended abstract of this paper has 
been published at the 2017 Genetic and Evolutionary Computation Conference~\cite{CorusOlivetoYazdani2017}.}}

\author{Dogan Corus}
\ead{d.corus@sheffield.ac.uk}
\author{Pietro S. Oliveto}
\ead{p.oliveto@sheffield.ac.uk}
\author{Donya Yazdani}
\ead{dyazdani1@sheffield.ac.uk}
\address{Rigorous Research, Department of Computer Science, University of Sheffield}
\address{Sheffield, UK}
\address{S1 4DP}




\begin{abstract}
We present a time complexity analysis of the Opt-IA artificial immune system (AIS).
We first highlight the power and limitations of its distinguishing operators (i.e., hypermutations with mutation potential and ageing) by analysing them in isolation.
Recent work has shown that ageing combined with local mutations can help escape local optima on a dynamic optimisation benchmark function.
We generalise this result by rigorously proving that, compared to evolutionary algorithms (EAs), ageing leads to impressive speed-ups
on the standard \textsc{Cliff$_d$} benchmark function both when using local and global mutations.
Unless the {\it stop at first constructive mutation} (FCM) mechanism is applied, we show that hypermutations require exponential
expected runtime to optimise any function with a polynomial number of optima. 
If instead FCM is used, the expected runtime is at most a linear
factor larger than the upper bound achieved for any random local search algorithm using the artificial fitness levels method.
Nevertheless, we prove that algorithms using hypermutations can be considerably faster than EAs at escaping local optima.
An analysis of the complete Opt-IA reveals that it is efficient on the previously considered functions and highlights 
problems where the use of the full algorithm is crucial. We complete the picture by presenting a class of functions for which Opt-IA fails with overwhelming probability while standard EAs are efficient. 
\end{abstract}

\begin{keyword}
Artificial Immune Systems \sep Opt-IA\sep Runtime Analysis \sep Evolutionary Algorithms \sep Hypermutation \sep Ageing
\end{keyword}

\end{frontmatter}


\section{Introduction}

Artificial immune systems (AIS) are a class of bio-inspired computing techniques that take inspiration from the immune system of vertebrates~\cite{decastrobook}.
Burnet's clonal selection theory{\color{black}~\cite{Burnet1959}} has inspired various AIS for function optimisation. 
The most popular ones are Clonalg{\color{black}~\cite{decastro}}, the B-Cell Algorithm \cite{kelsey} and Opt-IA \cite{optia-transaction}.

After numerous successful applications of AIS were reported, a growing body of theoretical work
has gradually been built to shed light on the working principles of AIS. 
While initial work derived conditions that allowed to prove whether 
an AIS converges or not \cite{Oliveto2007a},
nowadays rigorous time complexity analyses of AIS are available. 
Initial runtime analyses focused on studying the performance of typical AIS operators in isolation to explain when and why they are effective.
Such studies have been extensively performed for the contiguous somatic hypermutation operator employed by the B-Cell algorithm \cite{JansenZarges2011a, EasiestFunctions}, 
the inversely proportional hypermutation operator of Clonalg \cite{Zarges2008,Zarges2009} and the ageing 
operator used by Opt-IA \cite{JansenZarges2011c,HorobaJansenZarges09,ageing}.
These studies formed a foundational basis which allowed the subsequent {\color{black}analysis} of the complete B-Cell algorithm as used in practice for
standard combinatorial optimisation~\cite{JansenOlivetoZarges2011,JansenZarges2012}.

Compared to the relatively well understood B-Cell algorithm, the theoretical understanding of other AIS for optimisation is particularly limited.
In this paper we consider the complete Opt-IA algorithm~\cite{optia,optia-transaction}. This algorithm has been shown experimentally to be successful at optimising instances of problems 
such as protein structure prediction~\cite{optia-transaction}, graph colouring~\cite{CutelloGecco03} and hitting set~\cite{cutello2006}.
%
The main distinguishing features of Opt-IA compared to other AIS is their use of an ageing operator and of hypermutations with mutation potentials.
In this work we will first analyse the characteristics of these operators
respectively in isolation 
and afterwards consider a simple, but complete, Opt-IA algorithm. 
The aim is to highlight function characteristics for which Opt-IA and its main components are particularly effective, hence when it may be 
preferable to standard Evolutionary Algorithms (EAs).

The idea behind the ageing operator is that old individuals should have a lower probability of surviving compared to younger ones.
Ageing was originally introduced as a mechanism to maintain diversity. 
Theoretical analyses have strived to justify this initial motivation because the new random individuals (introduced to replace old individuals) typically have very low fitness and die out quickly. 
On the other hand, it is well understood that ageing can be used as a substitute 
for a standard restart strategy if the whole population dies at the same 
generation~\cite{JansenZarges2011c} 
 and to escape local optima if most of the population dies 
except for one survivor that, at the same generation, moves out of the local optimum~\cite{ageing}. 
This effect was shown for a Random Local Search (RLS) algorithm equipped with ageing on the \textsc{Balance} dynamic optimisation benchmark function.
An evolutionary algorithm (EA) using standard bit mutation (SBM)~\cite{OlivetoBookChapter,jansenbook} and ageing would not be able to escape the local optima due to their very large basin of attraction.
Herein, we carefully analyse the ability of ageing to escape local optima on the more general \textsc{Cliff$_d$} benchmark function 
and show that using the operator with both RLS and EAs can make a difference between polynomial and exponential runtimes.

Hypermutation operators are inspired by the high mutation rates occurring in the immune system.
In Opt-IA the {\it mutation potential} is linear in the problem size{\color{black},} and in different algorithmic variants may be static or either increase by a factor that is proportional to the fitness of the solution (i.e., the b-cell)  undergoing the mutation or decrease by a factor that is  inversely proportional to the fitness.
The theoretical understanding of hypermutations with mutation potential is very limited. 
To the best of our knowledge the only runtime analysis available is~\cite{Jansen2011}, where inversely proportional hypermutations were considered, with and without
the {\it stop at first constructive mutation (FCM)} strategy\footnote{An analysis of high mutation rates in the context of population-based evolutionary algorithms was performed
in~\cite{OlivetoLehreNeumann2009}. Increasing the mutation rate
above the standard 1/n value has gained interest in recent years \cite{CorusOlivetoTEVC2018,LissovoiOlivetoWarwickerGECCO2017,DoerrLissovoiOlivetoWarwickerGECCO2017,Lengler2018,FriedrichPPSN2018}.}. 
The analysis revealed that, without FCM, the operator requires exponential runtime to optimise the standard \textsc{OneMax} function,
while by using FCM the algorithm is efficient.
We consider a different hypermutation variant using static mutation potentials and argue that it is just as effective if not superior to other variants. We first show that the use of FCM is essential by rigorously proving that a \oneoneea equipped with hypermutations and no FCM requires exponential expected runtime to optimise any function with a polynomial number of optima. 
We then consider the operator with FCM for any objective function that can be analysed using the artificial fitness level (AFL) method~\cite{OlivetoBookChapter, jansenbook}  and show an upper bound on its runtime that is at most by a linear factor larger than the upper bound obtained for any RLS algorithm using AFL. 
To achieve this, we present a theorem that allows to derive an upper bound on the runtime of sophisticated hypermutation operators by analysing much simpler RLS algorithms with an arbitrary neighbourhood. As a result, all existing results achieved via AFL for RLS may be translated into upper bounds on the runtime of static hypermutations. 
Finally, we use the standard \textsc{Cliff$_d$} and \textsc{Jump$_k$} benchmark functions to show that hypermutations can achieve considerable speed-ups for escaping local optima compared
to well studied EAs.

We then concentrate on the analysis of the complete Opt-IA algorithm.
The standard Opt-IA uses both hypermutations and hypermacromutation (both with 
FCM) mainly because preliminary 
experimental studies for trap functions indicated that this setting led to the best results~\cite{optia,optia-transaction}.
Our analysis reveals that it is unnecessary to use both operators for Opt-IA to be efficient on trap functions. 
To this end, we will consider the simple version using only static 
hypermutations as in~\cite{optia}. We will first consider the algorithm 
with the simplification that we allow genotypic duplicates in the population, to simplify the analysis and enhance the probabilities of ageing to create copies and escape from local optima. Afterwards we extend the analysis to the standard version using a genotype diversity mechanism.
Apart from proving that the algorithm is efficient for the previously considered functions, we present a class of functions called \textsc{HiddenPath}, where
it is necessary to use both ageing and hypermutations in conjunction, hence where the 
use of Opt-IA in its totality is crucial.
Having shown several general settings where Opt-IA is advantageous compared to standard EAs, we conclude the paper by pointing out limitations of the algorithm. In particular, we present a class of functions called \textsc{HyperTrap} that is deceptive for Opt-IA while standard EAs optimise it efficiently {\color{black}with overwhelming probability}.  

Compared to its conference version~\cite{CorusOlivetoYazdani2017}, this paper has been improved in several ways. Firstly, we have extended our analyses of the ageing operator and Opt-IA to include the genotype diversity mechanism as in the algorithm proposed in the literature~\cite{optia,optia-transaction}. Another addition is the introduction of a class of functions where Opt-IA fails to find the optimum efficiently, allowing us to complete the picture by highlighting problem characteristics where Opt-IA succeeds and where it does not. Finally, this paper includes some proofs which were omitted from the conference version due to page limitations.

The rest of the paper is structured as follows. In Section \ref{preliminaries}, we introduce and define Opt-IA and its operators. In Section \ref{sec:hyper}, we present the results of our analyses of the static hypermutation operator in a simple framework to shed light on its power and limitations in isolation. In Section \ref{sec:ageing}, we present our analyses of the ageing operator in isolation and highlight its ability to escape from local optima. In Section \ref{sec:optia}, we present the results of our analyses of the complete algorithm. In Section \ref{sec:diversity}, we extend the analyses to include the genotype diversity mechanism as applied in the standard Opt-IA~\cite{optia,optia-transaction}. Finally, we conclude the paper with a discussion of the results and directions for future work.


\section{Preliminaries} \label{preliminaries}
 \begin{algorithm}[t]
 \begin{algorithmic}[1]
 \STATE{$t=0$,}
 \STATE{{\color{black}initialise $P^{(t)}=\{x_1,...,x_\mu\}$, a population of 
$\mu$ b-cells uniformly at random and set $x_i^{age}=0$ for $i=\{1,...\mu\}$}.}
 \WHILE {termination condition is not reached}
 \STATE{$P^{(clo)}$=~Cloning~($P^{(t)}, dup$),}
 \STATE{$P^{(hyp)}$=~Hypermutation~($P^{(clo)}, c)$,}
 \STATE{$P^{(macro)}$=~Hypermacromutation~($P^{(clo)})$,}
 \STATE{Ageing~$(P^{(t)},P^{(hyp)} \cup P^{(macro)}, \tau, \mu)$,}
 \STATE{$P^{(t+1)}=~\text{Selection}~(P^{(t)},P^{(hyp)} \cup P^{(macro)}, \mu,1)$.}

\STATE{$t=t+1$.}
 \ENDWHILE
  \end{algorithmic}
  \caption{{\color{black}{Opt-IA$^{*}$}}~\cite{optia-transaction}.  {\color{black} Subroutines are described in Algorithms \ref{alg:cloning}, \ref{alg:static}, \ref{alg:h-ageing} and \ref{alg:selection}.}}
 \label{optia-frame}
 \end{algorithm}
{\color{black}
In this section we first present the standard Opt-IA as applied in~\cite{optia-transaction} (called Opt-IA$^*$ from now on) for the maximisation of $f:\{0,1\}^n\rightarrow 
\mathbb{R}$ and then a slightly different version which we will analyse.}

The {\color{black}Opt-IA$^*$} pseudo-code is given in Algorithm \ref{optia-frame}.
It is initialised with a {\color{black}{\mbox{population}}} of $\mu$ b-cells, {\color{black}representing candidate solutions,} generated uniformly at random with $\mathit{age=0}$. 
In each generation, the algorithm creates a new parent population consisting of {\it dup} copies of each b-cell (i.e., Cloning)
which will be the subject of variation. The pseudo-code of the Cloning operator is given in Algorithm \ref{alg:cloning}.

\begin{algorithm}[t]
\begin{algorithmic}[1]
\STATE{$P^{(clo)}=\emptyset$.}
\FOR{all $x_i \in P^{(t)}$}
\STATE{copy $x_i$ $dup$ times,}
\STATE{add the copies to $P^{(clo)}$.}
\ENDFOR
\end{algorithmic}
\caption{Cloning~$(P^{(t)}, dup)$}
\label{alg:cloning}
\end{algorithm}

The variation stage in {\color{black}Opt-IA$^*$} uses a {\it hypermutation operator with mutation 
potential} sometimes followed by {\it hypermacromutation}~\cite{optia-transaction}, sometimes not~\cite{optia}. The {\color{black}Hypermacromutation operator is essentially the same as the well-studied contiguous somatic mutation operator of the B-Cell algorithm; it chooses two integers $i$ and $j$ at random such that $(i+1)\leq j\leq n$, then mutates at most $j-i+1$ values in the range of $[i,j]$.}
If both operators are applied, they act on the clone population (i.e., not in sequence) such that they generate $\mu$ mutants each.
The number of bits $M$  that are flipped by the hypermutation operator is determined by a function called {\it mutation potential}. Three different potentials have been considered in the literature:
{\it static}, where the number of bits that are flipped is linear in the problem size and does not depend on the fitness function\footnote{In~\cite{optia} the mutation potential is declared to be a constant $0<c \leq 1$. This is obviously a typo: the authors intended the mutation potential to be $cn$, 
where $0<c \leq 1$.}, 
{\it fitness proportional} (i.e., a linear number 
of bits are always flipped but increasing proportionally with the fitness of the mutated b-cell) 
and {\it inversely fitness proportional}. The latter potential was previously theoretically analysed in~\cite{Jansen2011}.
What is unclear from the literature is whether the $M$ bits to be flipped should be distinct or not and, when using FCM, whether a {\it constructive mutation} is a strictly improving move 
or whether a solution of equal fitness suffices.
In this paper we will consider the static hypermutation operator with pseudo-code given in Algorithm~\ref{alg:static}. 
In particular, the $M$ flipped bits will always be distinct and both kinds of 
constructive mutation{\color{black}s} will be considered.
At the end of the variation stage all created individuals have $age=0$ if their fitness is higher than that of their parent cell, otherwise they
inherit their parent's age. Then the whole population (i.e., parents and offspring) undergoes the ageing process in which the age of each b-cell is increased by one.
Additionally, the ageing operator removes old individuals. Three methods have been proposed in the literature for such an operator: {\it static ageing}, which deterministically removes all individuals
who exceed age $\tau$; {\it stochastic ageing}, which removes each individual at each generation with probability $p_{die}$; 
and the recently introduced {\it hybrid ageing}~\cite{ageing}, where individuals have a probability $p_{die}$ of dying only once they reach an age of $\tau$.
In~\cite{ageing} it was shown that the hybrid version allows to escape local optima, hence we employ this version in this paper and give its pseudo-code in  Algorithm~\ref{alg:h-ageing}. 

The generation ends with a selection phase {\color{black}for which the 
pseudo-code is given in Algorithm~\ref{alg:selection}}. 
If the total number of b-cells that have survived the ageing operator is larger than $\mu$, 
then a standard $(\mu+\lambda)$ selection scheme is used with the exception that
genotype duplicates are not allowed. 
If the population size is less than $\mu$, then a birth phase fills the population up to size $\mu$ by introducing random b-cells of $age=0$.

In this paper we give evidence that disallowing genotypic duplicates may be detrimental because, as we will show, genotypic copies may help the ageing operator to escape local optima more efficiently. {\color{black}Considering the results of the investigations made in Sections~\ref{sec:hyper}~and~\ref{sec:ageing},
the Opt-IA we will analyse does not apply hypermacromutation and uses static hypermutation coupled with FCM as variation operator, hybrid ageing 
and  standard $(\mu+\lambda)$ selection which allows genotypic duplicates. The pseudo-code is given in Algorithm~\ref{modified-optia} for clarity.}

\begin{algorithm}[t]
\begin{algorithmic}[1]
\STATE {$M=cn$,}
\STATE {$P^{(hyp)}=\emptyset$.}
\FOR{all $x_i \in P^{(clo)}$}
\IF{FCM is not used}
\STATE{create $y_i$ by flipping $M$ distinct bits selected uniformly at random.}
\ELSE
\STATE{create $y_i$ by flipping at most $M$ distinct bits selected uniformly at random one after another
until a \textit{constructive mutation} happens.}
\ENDIF
\IF{$f(y_i) > f(x_i)$}
\STATE{$y_i^{age}=0$.}
\ELSE
\STATE{$y_i^{age}=x_i^{age}$.}
\ENDIF
\STATE{add $y_i$ to $P^{(hyp)}$.}
\ENDFOR
\end{algorithmic}
\caption{Static hypermutation~($P^{(clo)}, c)$}
\label{alg:static}
\end{algorithm}


\begin{algorithm}[t]
\begin{algorithmic}[1]
\FOR{all $x_i \in (P^{(t)} \cup P^{(hyp)})$}
\STATE{$x_i^{age}=x_i^{age}+1$,}
\IF{$x_i^{age}>\tau$ }
\STATE{remove $x_i$ with probability $p_{die}=1-1/\mu$.}
\ENDIF
\ENDFOR
\end{algorithmic}
 \caption{Hybrid ageing~$(P^{(t)},P^{(hyp)}, \tau, \mu)$}
  \label{alg:h-ageing}
\end{algorithm}

\begin{algorithm}[t]
\begin{algorithmic}[1]

\STATE{$P^{(t+1)}= \left(P^{(t)}\cup P^{(hyp)}\right) $.}
\IF{{\color{black}$div$=1}}  
\STATE{remove any offspring with the same genotype as individuals in $P^{(t)}$.}
\ENDIF
\IF{$|P^{(t+1)}|> \mu$}
\STATE{remove the $\left(|P^{(t+1)}|-\mu \right)$ individuals with the lowest fitness breaking ties uniformly at random.}
\ENDIF
\IF{$|P^{(t+1)}|< \mu$}
\STATE{add $\left(\mu-|P^{(t+1)}|\right)$ individuals initialised uniformly at random.}
\ENDIF

\end{algorithmic}
 \caption{Selection~$(P^{(t)},P^{(hyp)}, \mu, div)$}
  \label{alg:selection}
\end{algorithm}

%
%
%


\begin{algorithm}[t]
\begin{algorithmic}[1]
\STATE{$t=0$,} 
\STATE{{\color{black}initialise $P^{(t)}=\{x_1,...,x_\mu\}$, a population of $\mu$ b-cells uniformly at random and set $x_i^{age}=0$ for $i=\{1,...\mu\}$}.}
\WHILE {the optimum is not found}
\STATE{
$P^{(clo)}$=~Cloning~($P^{(t)}, dup$),}
\STATE{
$P^{(hyp)}$=~Static hypermutation~($P^{clo}, c)$,}
\STATE{
Hybrid ageing~$(P^{(t)},P^{(hyp)}, \mu, \tau)$,}
\STATE{Selection~$(P^{(t)},P^{(hyp)}, \mu, div)$, 
}
\STATE{$t=t+1$.}
\ENDWHILE
  \end{algorithmic}
  \caption{{\color{black}Opt-IA}}
 \label{modified-optia}
\end{algorithm}

\section{Static Hypermutation} \label{sec:hyper} 
The aim of this section is to highlight the power and limitations of the static hypermutation operator in isolation.
For this purpose we embed the operator into a minimal AIS framework
that uses a population of only one b-cell and creates exactly one clone per generation. 
The resulting \oneoneiahype, depicted in Algorithm \ref{IAhype}, is essentially a \oneoneea 
that applies the static hypermutation operator instead of using SBM. 
We will first show that, without the use of FCM, hypermutations are inefficient variation operators  for virtually any optimisation function of \mbox{interest}.
From there on we will only consider the operator equipped with FCM.
Then we will prove that the \oneoneiahype  has a runtime that is at most a linear factor larger than that obtained for any RLS algorithm
using the artificial fitness levels method.
{\color{black}If no improvement is found in the first step}, then the operator will perform at most $cn$ useless fitness function evaluations before one hypermutation process is concluded.
{\color{black}We formalise this result in Theorem \ref{thm:afl} for two cases: when FCM only accepts strict improvements as constructive solutions (we formally call such algorithm \oneoneiahypgreater) and  for the case when FCM also accepts points of equal fitness as constructive solutions (we name such algorithm \oneoneiahypequal).}

We prove in Theorem \ref{thm:afl} that the \oneoneiahypequal cannot be too slow 
compared to the standard RLS$_1$ (i.e., flipping one bit per iteration).
We show that the presented results are tight for some standard benchmark functions 
by proving that the \oneoneiahype has expected runtimes of $\Theta(n^2\log n)$ for \textsc{OneMax} and $\Theta(n^3)$ for {\color{black}\textsc{LeadingOnes}},
respectively, versus the expected $\Theta(n\log n)$ and $\Theta(n^2)$ fitness function evaluations required by \textsc{RLS}$_1$~\cite{OlivetoBookChapter}. 
Nevertheless, we conclude the section by showing for the standard benchmark functions \textsc{Jump$_k$} and \textsc{Cliff$_d$} that the \oneoneiahype can be particularly efficient on functions 
with local optima that are generally difficult to escape from.

We start by highlighting the limitations of static hypermutation 
when FCM is not used. Since $M=cn$ distinct bits have to be flipped at once, the 
outcome of the hypermutation operator is characterised by a uniform 
distribution over the set of all solutions which have Hamming distance $M$ to the 
parent. Since $M$ is linear in $n$, the size of this set of points is 
exponentially large and thus the probability of a particular 
outcome is exponentially small. In the following theorem, we formalise 
this limitation.
{\color{black}
\begin{algorithm}[t]
\begin{algorithmic}[1]
\STATE{$t=0$,}
\STATE{initialise a solution uniformly at random and assign it to $P^{(0)}$.}
\WHILE {an optimum is not found}
\STATE{Cloning~($P^{(t)}, 1$),}
\STATE{Static hypermutation~($P^{(clo)}, c)$,}
\IF{$f(P^{(hyp)})\geq f(P^{(t)})$} 
\STATE{$P^{(t+1)}=P^{(hyp)}$.}
\ELSE
\STATE{$P^{(t+1)}=P^{(t)}$.}
\ENDIF
\STATE{$t=t+1$.}
\ENDWHILE
  \end{algorithmic}
  \caption{(1+1)~IA$^{hyp}$}
 \label{IAhype}
\end{algorithm}
}
\begin{theorem} \label{woFCM}
For any function with a polynomial number of optima, the \oneoneiahype 
without FCM needs expected exponential time to find any of the optima. 
\end{theorem}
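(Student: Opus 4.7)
The plan is to exploit the fact that without FCM the hypermutation operator is extremely ``rigid'': it always produces an offspring uniformly distributed on the Hamming sphere of radius $cn$ around the current parent. Since this sphere has exponential size but there are only polynomially many optima, the per-step probability of landing on any optimum is exponentially small, irrespective of where the parent currently is.

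First I would fix notation. Let $O \subseteq \{0,1\}^n$ denote the set of optima and let $|O| \leq p(n)$ for some polynomial $p$. For a current search point $x$, let $S(x) = \{ y \in \{0,1\}^n : H(x,y) = cn \}$ be the Hamming sphere of radius $cn$ around $x$. From the pseudocode of Algorithm~\ref{alg:static} without FCM, the offspring $y$ is distributed uniformly on $S(x)$, and $|S(x)| = \binom{n}{cn}$ independently of $x$. Using Stirling's formula (or the standard entropy bound $\binom{n}{cn} \geq 2^{n H(c)}/(n+1)$ where $H$ denotes the binary entropy), we get $\binom{n}{cn} = 2^{\Omega(n)}$ for any constant $c \in (0,1]$ (the case $c=1$ being trivial since then only a single point is reachable).

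Next I would bound the per-generation hit probability. For any parent $x$,
\begin{equation*}
\Pr[\,y \in O \mid \text{parent } x\,] = \frac{|S(x) \cap O|}{|S(x)|} \leq \frac{|O|}{\binom{n}{cn}} \leq \frac{p(n)}{2^{\Omega(n)}} = 2^{-\Omega(n)}.
\end{equation*}
Crucially, this bound is uniform in $x$, so it applies no matter how the current parent has evolved through the accepted non-worsening moves. The initial random point is an optimum with probability at most $p(n)/2^n = 2^{-\Omega(n)}$. Letting $T$ be the first generation at which an optimum is observed and $q = p(n)/\binom{n}{cn}$, we obtain $\Pr[T > t] \geq (1 - 2^{-\Omega(n)})(1 - q)^t$, and summing over $t$ yields $E[T] = \Omega(1/q) = 2^{\Omega(n)}$, giving the claimed exponential lower bound.

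The only subtlety (rather than a real obstacle) is to make clear that conditioning on the full history of accepted parents does not help: the argument works because the bound on $\Pr[y \in O \mid x]$ holds pointwise in $x$, so one can dominate $T-1$ by a geometric random variable with success probability $q$ regardless of the trajectory of the algorithm. I would state this explicitly, and then a one-line calculation closes the proof. No structural assumption on the locations of the optima is needed beyond their polynomial count.
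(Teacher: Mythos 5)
Your proposal follows essentially the same route as the paper's proof: the offspring is uniform on the Hamming sphere of radius $cn$, whose size $\binom{n}{cn}$ is exponential for constant $c<1$; hence the per-generation probability of hitting one of the polynomially many optima is $2^{-\Omega(n)}$ uniformly in the parent, and a waiting-time argument (together with the exponentially small chance that the initial point is already optimal) yields the bound. The one place where your argument as written breaks is the case $c=1$: there $\binom{n}{cn}=\binom{n}{n}=1$, so the entropy bound does not give $2^{\Omega(n)}$ and your displayed inequality $\Pr[y\in O\mid x]\leq p(n)/\binom{n}{cn}$ becomes vacuous. Your parenthetical remark that this case is ``trivial since only a single point is reachable'' points in the right direction but is not yet an argument: for $c=1$ the hypermutation deterministically maps $x$ to its complement $\bar{x}$, so the only points the algorithm can ever visit are the initial point and its complement. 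Since the set consisting of the optima together with their complements still has polynomial size, the probability that the initial sample lies in it is $2\,\mathrm{poly}(n)/2^{n}=2^{-\Omega(n)}$, and otherwise no optimum is ever found, so the expected time is certainly exponential. This is precisely how the paper closes the $c=1$ case (it folds the complements of the optima into the initialisation argument and notes that the per-step hit probability is then zero). With that one additional sentence your proof is complete; everything else, including your explicit observation that the pointwise bound makes conditioning on the history harmless, is correct.
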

\begin{proof}
We will first consider the probability that the initial solution is optimal and 
then the probability of finding an optimal solution in a single step given that 
the current solution is suboptimal. Note that if $c=1$, sampling the 
complementary bit string of an optimal solution allows the hypermutation to 
flip all bits in the next iteration and find an optimal solution. However, 
since each optimal solution has a single complementary bit string, the 
total number of solutions which are either optimal or complementary to an 
optimal solution is also polynomial in $n$. Since the initial solution is 
sampled uniformly at random among $2^n$ possible bit strings, the probability 
that one of these polynomially many  solutions is sampled is $ 
 poly(n)/2^{n}=2^{-\Omega(n)}$.  

We now analyse the expected time of the last step before an optimal solution is 
found given that the current solution is neither an optimal solution nor the
complementary bit string of an optimal solution. When $c=1$ the probability of 
finding the optima is zero since the hyper mutation deterministically 
samples the complementary bit strings of current b-cells.
For $c<1$, we optimistically assume that all 
the optima are at Hamming distance $M=cn$ from the current b-cell. 
Otherwise, if none of the optima are at Hamming distance $M$, the probability of reaching an optimum would be zero.
Then, given that the number of different points at Hamming distance $cn$ from any point is $\binom{n}{cn}$ and they all are reachable with equal probability,  the 
probability of finding this optimum in the last step, for any $c \neq 1$, is $p 
\leq \frac{poly(n)}{\binom{n}{cn}} \leq 
\frac{poly(n)}{e^{\Omega(n)}}={e^{-\Omega(n)}}$. 
By a simple waiting time argument, the expected time to reach any optimum is at 
least $e^{\Omega(n)}$. 
\end{proof}
The theorem explains why poor results were achieved in previous experimental 
work both on benchmark functions and real world applications such 
as the hard protein folding problem~\cite{optia,optia-transaction}.
The authors indeed state that {\it ``With this policy, however, and for the problems which are faced in this paper, 
the implemented IA did not provide good results''}~\cite{optia-transaction}. Theorem~\ref{woFCM} shows that this is the case for any optimisation function of practical interest.
In~\cite{Jansen2011} it had already been shown that inversely proportional hypermutations cannot optimise \textsc{OneMax} in less than exponential time (both in expectation and w.o.p.).
Although static hypermutations are the focus of this paper, we point out that Theorem \ref{woFCM} can easily be extended to both the inversely proportional hypermutations considered in~\cite{Jansen2011}
and to the proportional hypermutations from the literature~\cite{optia}.
From now on we will only consider hypermutations coupled with FCM. 
We start by showing that hypermutations cannot be too slow compared to local search operators.
We first state and prove the following helper lemma.

\begin{lemma} \label{lem:fcm}
The probability that the static hypermutation applied to $x \in \{0,1\}^n$ either evaluates a 
specific $y \in \{0,1\}^n$ with Hamming distance $k\leq cn$ to $x$ (i.e., 
event $E_y$), or that it stops earlier on a constructive mutation (i.e., event 
$E_c$) {\color{black}is lower bounded by $\binom{n}{k}^{-1}$ (i.e., $Pr \{ E_y \vee E_c \} \geq \binom{n}{k}^{-1}$)}. Moreover, if there are no 
constructive mutations with Hamming distance smaller than $k$, then $Pr\{E_y\}= 
\binom{n}{k}^{-1}$. 

\end{lemma}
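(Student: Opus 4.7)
The plan is to analyse the hypermutation process as a uniformly random ordering of bit flips that is stopped as soon as a constructive mutation is encountered (or after $M = cn$ flips, but since $k \leq cn$ only the first $k$ positions of the ordering matter). Concretely, I would couple the bit-flip process to a uniformly random permutation $\sigma$ of $\{1,\dots,n\}$: the hypermutation deterministically flips the bits $\sigma(1), \sigma(2), \dots$ in order until FCM halts it. This reformulation is sound because, conditional on the first $j-1$ bits chosen, the $j$-th bit flipped is uniform over the remaining $n-j+1$ bits, which matches the distribution induced by $\sigma$.

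Next I would let $S \subseteq \{1,\dots,n\}$ denote the set of $k$ positions where $x$ and $y$ differ, and introduce the event
\[
A \;=\; \bigl\{\,\{\sigma(1),\dots,\sigma(k)\} = S\,\bigr\}.
\]
Since $\sigma$ is a uniform permutation, the first $k$ positions form a uniformly random $k$-subset of $\{1,\dots,n\}$, so $\Pr\{A\} = \binom{n}{k}^{-1}$. The key observation is that, conditional on $A$, the intermediate bit string produced after exactly $k$ flips is precisely $y$. Hence if the process has not already been halted by FCM at some step $j < k$, it must evaluate $y$ on step $k$. In symbols, $A$ is contained in $E_y \cup E_c$: either a constructive mutation fires before step $k$ (giving $E_c$), or the process survives to step $k$ and evaluates $y$ (giving $E_y$). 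This yields $\Pr\{E_y \vee E_c\} \geq \Pr\{A\} = \binom{n}{k}^{-1}$, proving the first claim.

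For the second claim, I would use the additional assumption that no point at Hamming distance strictly less than $k$ from $x$ is a constructive mutation. Under this hypothesis, FCM cannot trigger at any of the intermediate steps $1,2,\dots,k-1$, regardless of which bits have been chosen, because every such intermediate solution lies at Hamming distance $< k$ from $x$. Therefore, in every realisation of $\sigma$, the process runs at least up to step $k$, so $E_y$ occurs if and only if the step-$k$ point equals $y$, which is exactly the event $A$. Hence $\Pr\{E_y\} = \Pr\{A\} = \binom{n}{k}^{-1}$.

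The argument is essentially a careful bookkeeping exercise rather than a deep probabilistic one; the only subtlety I anticipate is justifying the coupling with a uniformly random permutation cleanly, since FCM introduces data-dependent stopping. The coupling needs to be defined before conditioning on stopping, which is why I would first describe $\sigma$ as an auxiliary object sampled independently of the algorithm's decisions, and then check that the induced law on the sequence of evaluated bit strings agrees with the actual hypermutation process up to the stopping time.
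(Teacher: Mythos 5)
Your proof is correct and follows essentially the same route as the paper's: both arguments rest on the facts that the first $k$ flipped positions form a uniformly random $k$-subset of $\{1,\dots,n\}$ (so the target set $S$ is hit with probability $\binom{n}{k}^{-1}$) and that the only way the $k$-th intermediate solution fails to be evaluated is that FCM triggers within the first $k-1$ steps. Your explicit coupling with a random permutation sampled before the stopping rule is applied is just a more formal packaging of the paper's informal counting argument.
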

\begin{proof}
{\color{black}Since the bits to be flipped are picked without replacement, each successive 
bit-flip increases the Hamming distance between the current solution and the 
original solution by one. 
The lower bound is based on the fact that the first $k$ bit positions to be flipped
have $\binom{n}{k}$ different and equally probable outcomes. Since the 
only event that can prevent the static hypermutation to evaluate the $k$-th 
solution in the sequence is the discovery of a constructive mutation in one of 
the first $k-1$ evaluations, $Pr \{ E_y \vee E_c \}$ {\color{black}is at least} $\binom{n}{k}^{-1}$. If no such constructive mutation exists (i.e., $Pr \{  E_c \}=0$) then, $Pr \{  E_y \}$ is exactly equal to $\binom{n}{k}^{-1}$.} 
\end{proof}

We are ready to show that static hypermutations cannot be too slow compared to  
any upper bound obtained by applying the AFL method 
on the expected runtime of the \oneonerlsk{\color{black},} which flips exactly $k$ bits to produce a 
new solution and applies non-strict elitist selection.
AFL require{\color{black}s} a partition of the search space $\mathcal{X}$ into $m$ mutually 
exclusive sets  $\bigcup_{s\in \{1,\ldots,m\}} A_{s}=\mathcal{X}$ such that $\forall 
i<j, \quad x \in A_i \wedge y \in A_j \implies f(x) < f(y)$. 
The expected runtime of a $(1+1)$ algorithm $\mathcal{A}$ with 
variation operator \text{HM}$(x):\mathcal{X} \rightarrow \mathcal{X}$  to 
solve any 
function defined on $\mathcal{X}$ can be upper bounded via AFL
by $E(T) \leq \sum_{i=1}^m \frac {1}{p_i}$, where $p_{i}=\min\limits_{x\in A_i}
\left(Pr\{\text{HM}(x) \in \bigcup\limits_{j=i+1}^{m} A_j\} 
\right)$~\cite{OlivetoBookChapter, jansenbook}. 

{\color{black}
\begin{theorem} \label{thm:afl}
Let $E\left(T^{AFL}_{A}\right)$ be any upper bound on the expected runtime of algorithm A established via the artificial fitness levels method. Then, $E\left(T_{(1+1)~\text{IA}_{>}^{hyp}}\right) \leq cn \cdot E\left(T^{AFL}_{(1+1)~RLS_k}\right)$. Additionally, for the special case of $k=1$, $E\left(T_{(1+1)~\text{IA}_{\geq}^{hyp}}\right) \leq cn \cdot E\left(T^{AFL}_{(1+1)~RLS_1}\right)$.
\end{theorem}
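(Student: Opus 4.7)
The plan is to compare the per-iteration transition probabilities of the hypermutation across the artificial fitness levels with those of RLS$_k$, and then to pay a factor of $cn$ for the fact that a single call of \textit{Static hypermutation} uses at most $cn$ fitness evaluations. First I would fix an AFL partition $\bigcup_{s \in \{1,\dots,m\}} A_s = \{0,1\}^n$ for the function at hand, write $S_x^k$ for the set of $k$-Hamming neighbours $y$ of $x$ with $f(y) > f(x)$ (equivalently, by the AFL ordering, in a strictly higher level than $x$), and recall that the AFL upper bound for \oneonerlsk is $E(T^{AFL}_{\oneonerlsk}) \leq \sum_{i=1}^m 1/p_i^{RLS_k}$ with $p_i^{RLS_k} = \min_{x \in A_i} |S_x^k|/\binom{n}{k}$. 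The goal then reduces to showing that the probability of a single hypermutation call from any $x \in A_i$ producing a strict improvement is at least $|S_x^k|/\binom{n}{k}$; once this is in hand, AFL applied to the hypermutation gives at most $\sum_{i=1}^m 1/p_i^{RLS_k}$ iterations, and multiplying by $cn$ evaluations per iteration yields the claimed bound.

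The key step will be a coupling (or virtual-continuation) argument. I would regard a hypermutation call from $x$ as drawing a uniformly random permutation $(b_1,\dots,b_n)$ of bit positions and flipping them in that order, writing $X_j$ for the state after the first $j$ flips; this is well defined even for $j$ past the actual FCM stopping step. Since $X_k$ is produced by flipping a uniformly random $k$-subset of bits, $Pr\{X_k \in S_x^k\} = |S_x^k|/\binom{n}{k}$. For the strictly-improving FCM rule of \oneoneiahypgreater, the event $\{X_k \in S_x^k\}$ forces the algorithm to have already stopped by step $k$ on some $X_j$ with $f(X_j) > f(x)$ (either on $X_k$ itself, or earlier on an intermediate strict improvement that triggered FCM); in either case the returned point is a strict improvement and elitist selection accepts it, so the algorithm leaves level $A_i$. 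Taking the minimum over $x \in A_i$ gives $p_i^{hyp,>} \geq p_i^{RLS_k}$, which combined with the $cn$-factor per iteration proves the first statement. An alternative packaging would use Lemma \ref{lem:fcm} together with the observation that the events $E_y$ for distinct $y \in S_x^k$ and $E_c$ are pairwise disjoint, but the coupling view is cleaner.

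For the case $k=1$ of \oneoneiahypequal I would restrict attention to the very first flip. The hard part here is that the $\geq$-FCM rule may terminate a call on an equal-fitness neighbour, and for $k \geq 2$ such an early stop could mask a strict improvement that would only be revealed at step $k$; this is precisely the obstacle that prevents the statement from extending beyond $k=1$, and it is why the $\geq$-variant must be handled separately. At $k=1$ there is only a single flip to analyse: with probability $|S_x^1|/n$ the bit chosen lies in $S_x^1$, the hypermutation stops at step $1$ with a strictly better output, and the level is left. Discarding every other outcome as a lower bound yields $q_i^{hyp,\geq}(x) \geq |S_x^1|/n$, hence $p_i^{hyp,\geq} \geq p_i^{RLS_1}$, and the same AFL-plus-$cn$ accounting then gives $E(T_{\oneoneiahypequal}) \leq cn \cdot E(T^{AFL}_{\oneonerlsone})$, as claimed.
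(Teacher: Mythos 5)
Your proposal is correct and follows essentially the same route as the paper: lower-bound the per-level improvement probability of the hypermutation by the RLS$_k$ transition probability $c_k(x)/\binom{n}{k}$, charge $cn$ evaluations per failed call, and handle the $\geq$-variant only at $k=1$ because an early stop on an equal-fitness point can mask the $k$-th flip. Your permutation/virtual-continuation coupling is just a more explicit packaging of the paper's Lemma~\ref{lem:fcm} (whose proof is exactly the uniform-random-$k$-subset observation), and it has the minor virtue of making the disjointness of the events for distinct improving targets explicit when summing to $c_k(x)/\binom{n}{k}$.
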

\begin{proof}
Let the function $c_k(x)$ for solution $x\in A_i$ return the number of 
solutions which are at Hamming distance $k$ away from $x$ and belong to set 
$A_j$ 
for some $j>i$.
The upper bound on the expected runtime of the \oneonerlsk to solve any 
function obtained by applying the AFL method is
$E\left(T^{AFL}_{(1+1)~RLS_k}\right) \leq \sum_{i=1}^m \frac {1}{p_{i}}$,
where
$p_{i}=\min\limits_{x\in A_i}
\left(c_k(x)/\binom{n}{k} \right)$.
Since the hypermutation operator wastes at most $cn$ bit mutations when it fails to improve, to prove 
the first claim it is sufficient to show that for any current solution $x\in A_i$, the 
probability that the \oneoneiahypgreater finds an improvement is at least 
$c_k(x)/\binom{n}{k}$. This follows from Lemma~\ref{lem:fcm} and the 
definition of a constructive mutation for the \oneoneiahypgreater, since for each one of
the $c_{k}(x)$ search points, the probability of either finding it or finding a 
constructive mutation is lower bounded by $\binom{n}{k}^{-1}$. 

Note that, for the \oneoneiahypequal, the probability of improving the 
current solution $x\in A_i$ can be smaller than $c_k(x)/\binom{n}{k}$ since it 
is also necessary that the first $k-1$ sampled solutions  have strictly worse 
fitness than $x$ to allow the hypermutation operator to flip at least $k$ bits 
before stopping. However, for the special case of $k=1$, the hypermutation 
cannot be prematurely stopped. The probability that static hypermutation 
produces a particular Hamming neighbour of the input solution in the first 
mutation step is $1/n$, which is equal to the probability that the 
$\text{RLS}_1$ produces the same solution. Considering the fact that static 
hypermutation wastes at most $c n$ fitness evaluations in every failure to 
obtain an improvement in the first step, the second claim follows. 
%
\end{proof}
 }


In the following we show that the upper bounds of the previous theorem are 
tight for well-known benchmark functions.
\begin{theorem} \label{onemax}
The expected runtime of the \oneoneiahypgreater and of the \oneoneiahypequal to optimise 
\textsc{OneMax}$(x):=\sum_{i=1}^n x_i$
is $\Theta(n^2 log\,n)$.
\end{theorem}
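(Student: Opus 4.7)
The plan is to establish $\Theta(n^2 \log n)$ by proving matching upper and lower bounds. The upper bound is immediate from Theorem~\ref{thm:afl} applied with $k=1$: the standard artificial fitness levels analysis of \oneonerlsone on \textsc{OneMax}, using the partition $A_i := \{x : |x|_1 = i\}$ with improvement probability $(n-i)/n$, gives $E(T^{AFL}_{\oneonerlsone}) \leq \sum_{i=0}^{n-1} n/(n-i) = nH_n = O(n \log n)$, and multiplying by $cn$ yields the $O(n^2 \log n)$ upper bound for both variants.

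For the lower bound I would analyse the cost at each fitness level via a random-walk analysis of a single hypermutation. Define $S_t := a_t - b_t$, where $a_t$ and $b_t$ count the zero- and one-bits respectively that have been flipped in the first $t$ substeps of a hypermutation applied to a parent of fitness $k$; then $S_t$ is a $\pm 1$ walk on $\mathbb{Z}$ starting at $0$ and the intermediate offspring has fitness $k + S_t$. The FCM rule fires at the first $t \geq 1$ with $S_t = 1$ in the \oneoneiahypgreater and with $S_t \in \{0,1\}$ in the \oneoneiahypequal, so any strictly improving generation exits at fitness exactly $k+1$. Consequently the current fitness is non-decreasing and grows by exactly one per improvement; since a Chernoff bound shows the initial fitness is at most $n/2 + O(\sqrt{n})$ with overwhelming probability, the run visits every level $n-i$ for $i = 1, \dots, n/4$ with overwhelming probability.

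Next, at a level $n-i$ with $1 \leq i \leq \alpha n$ for a sufficiently small constant $\alpha=\alpha(c)>0$, I would establish two estimates. \emph{(i)} The per-generation strict-improvement probability is $O(i/n)$: for the \oneoneiahypequal this equals $i/n$, the probability that the first substep flips a zero-bit, since a first flip of a one-bit sends $S$ to $-1$ and the operator is then absorbed at the next return to $0$, a non-improving plateau; for the \oneoneiahypgreater an additional gambler's-ruin term of order $i/(n-i)=O(i/n)$ bounds the probability that the walk reaches $+1$ from $-1$. \emph{(ii)} The expected number of evaluations per generation is $\Omega(n)$: conditioning on the first substep flipping a one-bit (probability $(n-i)/n \geq 3/4$), the up-probability at any later substep is at most $i/(n(1-c)) \leq 1/4$, so a gambler's-ruin argument shows the walk fails to return to $\{0,+1\}$ within the $cn$ available substeps with constant probability. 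Combining (i) and (ii) via a waiting-time argument gives $\Omega(n^2/i)$ expected evaluations per visited level, and $\sum_{i=1}^{\alpha n} \Omega(n^2/i) = \Omega(n^2 \log n)$ matches the upper bound.

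The main obstacle is the gambler's-ruin step in (ii), because hypermutation samples bits \emph{without replacement}, so the true up-probability drifts as bits are consumed. I would handle this by coupling the true walk with an i.i.d.\ Bernoulli walk whose up-probability is the worst-case ratio $i/(n(1-c))$ attainable within $cn$ substeps; since $c$ is a constant strictly smaller than $1$ and $i \leq \alpha n$ for $\alpha$ small, the coupled walk is biased downward with a constant margin, and the classical gambler's-ruin formula then transfers the required constant lower bound on the survival probability back to the sampling-without-replacement setting.
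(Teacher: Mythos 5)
Your proof is correct, and while the upper bound is obtained exactly as in the paper (Theorem~\ref{thm:afl} with $k=1$ plus the standard AFL bound for \oneonerlsone), your lower bound takes a genuinely different route. The paper disposes of the lower bound in three sentences: it invokes Theorem~3 of~\cite{Jansen2011} for inversely proportional hypermutations on \textsc{ZeroMin} (an argument resting on the Ballot theorem, which bounds the per-generation improvement probability by $2i/n$ and charges $cn$ wasted evaluations to every failure), notes that the same argument transfers to static hypermutations, and then asserts that the \oneoneiahypequal cannot be faster because accepting equal-fitness offspring does not increase the improvement probability. You instead give a self-contained first-step/gambler's-ruin analysis of the random walk $S_t=a_t-b_t$, which yields the sharper observation that the strict-improvement probability of the \oneoneiahypequal is \emph{exactly} $i/n$, and -- more importantly -- your step (ii) explicitly justifies the one point the paper's citation-based argument glosses over for the $\geq$ variant: a failed generation of the \oneoneiahypequal can terminate early on an equal-fitness offspring (when the walk returns to $0$), so the claim that a failure wastes $\Omega(n)$ evaluations genuinely requires your argument that the downward-biased walk fails to return to $\{0,+1\}$ within the budget with constant probability. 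Your approach therefore buys self-containedness and a cleaner treatment of the $\geq$ case at the cost of length. One minor wrinkle: your worst-case up-probability $i/(n(1-c))$ degenerates when $c=1$ (a value the paper allows); this is repaired by truncating the coupling to the first $\min(cn,n/2)$ substeps, where the up-probability is at most $2i/n$, since failing to return within $n/2$ substeps already costs $\Omega(n)$ evaluations.
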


\begin{proof}
The upper bounds for the $>$ and $\geq$ FCM selection versions follow from Theorem \ref{thm:afl}
since it is easy to derive an upper bound of $O(n \log n)$ for RLS using AFL~\cite{OlivetoBookChapter}. 
For the lower bound of the $>$ FCM selection version, we follow the analysis 
in Theorem 3 of~\cite{Jansen2011} for inversely proportional 
hypermutation (IPH) with FCM to optimise \textsc{ZeroMin}. 
The proof there relies on IPH wasting $cn$ function evaluations every time it fails to find an improvement.
This is obviously also true for static hypermutations (albeit for a different constant $c$), hence the proof also applies to our algorithm.
The lower bound also holds for the $\geq$ FCM selection algorithm as this algorithm cannot be faster 
on \textsc{OneMax} by accepting solutions of equal fitness (i.e., the probability 
of finding an improved solution does not increase). 
\end{proof}
We now turn to the \textsc{LeadingOnes} benchmark function, which simply returns the number of consecutive $1$-bits before the first 0-bit.

\begin{theorem}
The expected runtime of the \oneoneiahypequal on 
\textsc{LeadingOnes}$:= \sum_{i=1}^{n}\prod_{j=1}^{i}x_i$
is $\Theta(n^3)$.
\end{theorem}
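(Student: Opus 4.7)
The upper bound is a direct consequence of Theorem~\ref{thm:afl}. Taking the canonical fitness levels $A_k=\{x:\textsc{LeadingOnes}(x)=k\}$ for $k\in\{0,1,\dots,n\}$, the probability that \oneonerlsone flips position $k+1$ is $1/n$, so the AFL method immediately gives $E\!\left(T^{AFL}_{(1+1)~\text{RLS}_1}\right)=O(n^2)$; Theorem~\ref{thm:afl} then yields the upper bound $cn\cdot O(n^2)=O(n^3)$ for the \oneoneiahypequal.

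For the matching lower bound, the plan is a cost-per-level argument. The first observation is that at any state $x$ with $\textsc{LeadingOnes}(x)=k$, a single hypermutation call can produce a strict improvement only if the very first bit-flip lands at position $k+1$: if instead the first flipped bit is some position $i\in\{1,\dots,k\}$ (a leading one), then $i$ becomes $0$ and, because bits are flipped without replacement within a call, $i$ stays $0$ for the remainder of the call; every subsequent intermediate solution then has fewer than $k$ leading ones and so is not constructive even under $\geq$-FCM. Hence the per-call improvement probability is exactly $1/n$, and with probability $k/n$ the call wastes its full budget of $cn$ fitness evaluations. Since the expected number of calls at level $k$ (conditional on visiting the level) is $n$, a Wald-style computation gives an expected cost at level $k$ of at least $n\cdot(k/n)\cdot cn=ckn$.

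To lift this into an $\Omega(n^3)$ bound, the plan is to show that with constant probability the algorithm visits a linear number of distinct levels in $[n/2,n-1]$. The key invariant is that conditional on the current LeadingOnes value being $k$, the tail bits at positions $k+2,\dots,n$ are i.i.d.\ Bernoulli$(1/2)$: this holds after uniform initialisation, is preserved by equal-fitness-accepted moves (which flip a single uniformly chosen tail bit), and is preserved by improving moves (which only reveal a prefix of already-uniform tail bits). Under this invariant each improvement jumps LeadingOnes by $1+\mathrm{Geom}(1/2)$, an expected increase of $2$ with exponential tails, so a standard renewal argument yields $\Pr[\text{level } k \text{ is visited}]=\Omega(1)$ for every $k\leq n-\omega(1)$. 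Summing the $\Omega(ckn)$ per-level lower bound over $\Omega(n)$ visited levels in $[n/2,n-1]$ gives $E(T)=\Omega(n^3)$. The most delicate step is verifying the tail-uniformity invariant cleanly across all three transition types (improving, equal-fitness-accepted, and fully wasted calls); once that is in hand, the remainder of the argument is routine bookkeeping.
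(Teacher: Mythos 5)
Your proposal is correct and follows essentially the same route as the paper's proof: the upper bound via Theorem~\ref{thm:afl}, and a lower bound obtained by observing that at level $k$ a call improves only if the first flip hits position $k+1$ (probability exactly $1/n$) and wastes all $cn$ evaluations exactly when the first flip hits a leading one (probability $k/n$), giving expected cost $\Theta(ckn)$ per level, summed over $\Omega(n)$ levels each visited with constant probability. The only differences are presentational: the paper derives the per-level cost via a total-expectation recurrence $E(f_i)=icn+n-i$ rather than Wald, and simply cites the known fact that the expected number of 1-bits after the leftmost 0-bit is below two instead of re-deriving your tail-uniformity/renewal argument.
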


\begin{proof}
The upper bound is implied by Theorem \ref{thm:afl} because AFL gives 
an $O(n^2)$ runtime of RLS for \textsc{LeadingOnes}~\cite{OlivetoBookChapter}.
Let {\color{black}$E(f_i)$} be the expected number of fitness function evaluations until an improvement is 
made, considering that the initial solution has $i$ leading 1-bits. The 
initial 
solutions consist of $i$ leading 1-bits, followed by a 0-bit and $n-i-1$ 
bits which each can be either one or zero with equal probability. Let events $E_1$, 
$E_2$ and $E_3$ be that the first mutation step flips one of the 
leading ones (with probability $i/n$), the first 0-bit (with probability $1/n$) 
or any of the remaining bits (with probability $(n-i-1)/n$), respectively. If $E_1$ 
occurs, then the following mutation steps cannot reach any solution 
with fitness value $i$ or higher and all $cn$ mutation steps are 
executed. Since no improvements have been achieved, the remaining expected 
number of evaluations will be the same as the initial expectation (i.e., 
{\color{black}$E(f_i|E_1)=cn+E(f_i)$}). If $E_2$ occurs, then a new solution with higher 
fitness value is acquired and the mutation process stops (i.e.,
{\color{black}$E(f_i|E_2)=1$}). However if $E_3$ occurs, since the number of leading 1-bits in 
the new solution is $i$, the hypermutation operator stops without any improvement (i.e., {\color{black}$E(f_i|E_3)=1+E(f_i)$}). According to the law of total expectation:
{\color{black}$ E(f_i) = \frac{i}{n} \left(cn + E(f_i)\right) + \frac{1}{n} \cdot 1 
+\frac{n-i-1}{n} \left( 1+ E(f_i)\right) $}. When, this equation is solved for 
{\color{black}$E(f_i)$}, we obtain, {\color{black}$E(f_i)= i cn + n - i$}.
%
%
Since the expected number of consecutive 1-bits that follow the leftmost 0-bit is less than two \cite{droste}, 
the probability of not skipping a level $i$ is $\Omega(1)$. The initial 
solution on the other hand will have more than $n/2$ leading ones with 
probability at most $2^{-n/2}$. Thus, we obtain a lower 
bound on the expectation, 
{\color{black}$(1-2^{-n/2})\sum\limits_{i=n/2}^{n}(f_i)= 
\Omega(1)\sum\limits_{i=n/2}^{n} (i cn + n - i)=\Omega(n^3)$} by summing over 
fitness levels starting from level $i=n/2$.
%
\end{proof}
We now focus on establishing that hypermutations may produce considerable speed-ups if local optima need to be overcome. 
The \textsc{Jump}$_{k}$ function, introduced in~\cite{droste}, consists of a 
\textsc{OneMax} slope with a gap of length $k$ bits that needs to be overcome 
for the optimum to be found.
The function is formally defined as:

 \begin{figure}[t!]
 \centering
  \includegraphics[width=0.5\textwidth]{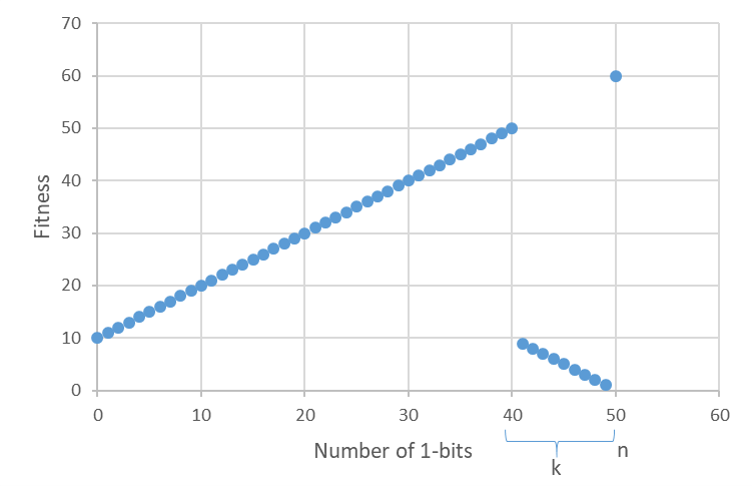}
 \caption{{\color{black} The $\textsc{Jump}_{k}$ function for $k=10$ and $n=50$.}}
 \label{fig:jump}
 \end{figure}

\begin{align*}
\textsc{Jump}_{k}(x):= \begin{cases}
		k+\sum_{i=1}^n x_i & \text{if}\; \sum_{i=1}^n x_i \leq n-k\\
        &\text{or}\; \sum_{i=1}^n x_i=n, \\
        n-\sum_{i=1}^n x_i & \text{otherwise},
 \end{cases}
\end{align*}
for $n>1$ and $k \in \{1...n\}$. {\color{black}Figure \ref{fig:jump} illustrates this function}.

Mutation-based EAs require $\Theta(n^k)$ expected function evaluations to optimise \textsc{Jump}$_{k}$ and recently a faster upper 
bound by a linear factor has been proved for standard crossover-based steady-state GAs~\cite{OlivetoTEVC2017}.
Hence, EAs require increasing runtimes as the length of the gap increases, from superpolynomial to exponential as soon as $k=\omega(1)$.
The following theorem shows that hypermutations allow speed-ups by an exponential factor of $(e/k)^k$, when the jump is hard to perform. A similar result has been shown for the recently introduced fast-GA~\cite{DoerrGecco2017}.

\begin{theorem}
Let $cn>k$. Then the expected runtime of the \oneoneiahype to optimise \textsc{Jump$_k$} is at most $O(\frac{n^{k+1} \cdot e^k}{k^k})$. 
\end{theorem}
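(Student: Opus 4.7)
I split the analysis into two phases: Phase~1, during which the algorithm climbs the \textsc{OneMax}-like slope up to a configuration with $n-k$ ones (fitness $n$), and Phase~2, during which it escapes this local optimum to the global optimum. For Phase~1, if the current b-cell has $i<n-k$ ones, then the very first bit flipped by the hypermutation is a $0$-bit with probability $(n-i)/n$, which immediately produces a solution of fitness $k+i+1>k+i$, i.e.\ a constructive mutation under either FCM variant; the hypermutation therefore halts after a single evaluation and the level strictly increases. Each iteration costs at most $cn$ evaluations, and by a standard geometric-waiting-time argument summed across levels, the expected number of evaluations spent in Phase~1 is at most
\begin{equation*}
\sum_{i=0}^{n-k-1}\frac{cn^{2}}{n-i}=O(n^{2}\log n).
\end{equation*}

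For Phase~2 I start from a b-cell with exactly $n-k$ ones and argue that a single hypermutation reaches the optimum with probability at least $\binom{n}{k}^{-1}$. The assumption $cn>k$ ensures that flipping $k$ distinct bits is within the operator's budget. Applying Lemma~\ref{lem:fcm} with $y$ the unique global optimum at Hamming distance $k$, the event $E_y\vee E_c$ has probability at least $\binom{n}{k}^{-1}$. From the local optimum no strictly improving mutation exists at any Hamming distance smaller than $k$, because every intermediate bit-count between $n-k$ and $n$ lies in the gap and hence has fitness strictly below $n$; for the strict FCM variant this means $E_c$ cannot occur and so $\Pr(E_y)\ge\binom{n}{k}^{-1}$. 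For the $\geq$ FCM variant an early stop can only happen when equal numbers of $0$- and $1$-bits have been flipped at some even index $j<k$, which is incompatible with the event that the first $k$ flipped positions are exactly the $k$ current $0$-bits; hence the probability of producing the optimum is again at least $\binom{n}{k}^{-1}$, and any intermediate equal-fitness stop merely moves the algorithm to another $(n-k)$-ones configuration, so the bound is reusable in the next iteration.

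Combining the two phases with the standard estimate $\binom{n}{k}\le (ne/k)^{k}$, Phase~2 requires at most $cn\cdot\binom{n}{k}=O(n^{k+1}e^{k}/k^{k})$ expected evaluations, and adding the Phase~1 bound gives the claimed $O(n^{k+1}e^{k}/k^{k})$ since Phase~2 dominates for $k\ge 2$. The main obstacle I expect is the clean treatment of the $\geq$ FCM variant in Phase~2: one must rule out that intermediate neutral stops slow down the escape. This is handled by the short combinatorial observation that the flip sequence witnessing the jump to the optimum (all $k$ initial flips being $0$-bits) automatically avoids any balanced prefix, together with the fact that every neutral FCM stop keeps the b-cell on the plateau of $(n-k)$-ones configurations and therefore does not alter the per-iteration success probability of $\binom{n}{k}^{-1}$.
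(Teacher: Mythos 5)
Your proof is correct and takes essentially the same route as the paper: climb the \textsc{OneMax} slope in $O(n^2\log n)$ evaluations (the paper simply cites its \textsc{OneMax} theorem where you redo the level-by-level argument), then escape the plateau of $(n-k)$-ones points using Lemma~\ref{lem:fcm} at cost $cn\cdot\binom{n}{k}$ per the waiting-time argument. Your additional care with the $\geq$ FCM variant (ruling out harmful neutral stops via the balanced-prefix observation) is a detail the paper leaves implicit, but it does not change the approach.
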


\begin{proof}
The \oneoneiahype reaches the fitness level $n-k$ (i.e., {\color{black}local 
optima}) 
in $O(n^2 \log\,n)$ steps according to Theorem \ref{onemax}. 
All local optima have Hamming distance $k$ to the optimum and the 
probability that static hypermutation finds the optimum is lower bounded in 
Lemma~\ref{lem:fcm}  by $\binom {n}{k}^{-1}$. Hence, the total expected time to 
find the optimum is at most $O(n^2 log\,n) +cn \cdot \binom {n}{k}= 
O\left(\frac{n^{k+1} \cdot e^k}{k^k}\right)$.
\end{proof}
Obviously, hypermutations can jump over large fitness valleys also on functions with other characteristics.
For instance the \textsc{Cliff}$_{d}$ function was originally introduced to show when non-elitist EAs may outperform elitist ones~\cite{JaegerskuepperStorch}. {\color{black}This function is formally defined as follows:}
{\color{black}
\begin{displaymath}
\textsc{Cliff}_{d}(x)=\begin{cases}
		\sum_{i=1}^n x_i & \text{if}\; \sum_{i=1}^n x_i \leq n-d, \\
        \sum_{i=1}^n x_i -d+ 1/2 & \text{otherwise.}
 \end{cases}
\end{displaymath}
}

 \begin{figure}[t!]
 \centering
  \includegraphics[width=0.5\textwidth]{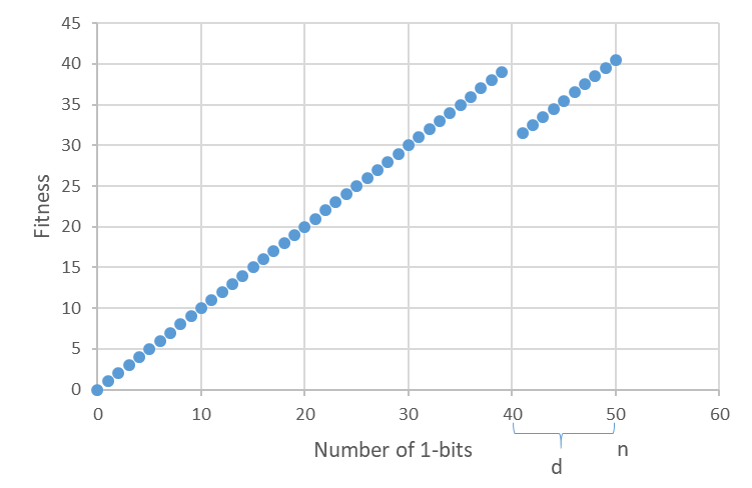}
 \caption{{\color{black} The $\textsc{Cliff}_{d}$ function for $d=10$ and $n=50$.}}
 \label{fig:cliff}
 \end{figure}
 
{\color{black}Figure \ref{fig:cliff} shows an illustration of \textsc{Cliff}$_{d}$}. Similarly to \textsc{Jump$_k$}, this function has a \textsc{OneMax} slope with a gap of length $d$ bits that needs to be overcome for the optimum to be found. 
Differently to \textsc{Jump$_k$} though, the local optimum is followed by another \textsc{OneMax} slope leading to the optimum. Hence, algorithms
that accept a move jumping to the bottom of the cliff, can then optimise the following slope and reach the optimum.
While elitist mutation-based EAs obviously have a runtime of $\Theta(n^d)$ (i.e., they do not accept the jump to the bottom of the cliff),
the following corollary shows how hypermutations lead to speed-ups that increase exponentially with the distance $d$ between the cliff and the optimum.

\begin{corollary}
Let $cn>d$. Then the expected runtime of the \oneoneiahype to optimise \textsc{Cliff$_d$} is $O\left(\frac{n^{d+1} \cdot e^d}{d^d}\right)$. 
\end{corollary}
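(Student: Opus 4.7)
I would mirror the two-phase structure used for \textsc{Jump$_k$} in the preceding theorem. In the first phase, note that for every $x$ with $\sum_i x_i \leq n-d$ the function \textsc{Cliff$_d$} agrees with \textsc{OneMax}, so the \oneoneiahype behaves identically to its \textsc{OneMax} run until it reaches a solution of fitness at least $n-d$. By Theorem \ref{onemax} this takes expected $O(n^2 \log n)$ evaluations. Since any $x$ with $n-d < \sum_i x_i < n$ satisfies $\textsc{Cliff}_d(x) = \sum_i x_i - d + 1/2 < n-d$, the first such solution must be either a local optimum (exactly $n-d$ ones, fitness $n-d$) or already the global optimum.

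In the second phase, starting from a local optimum, I would apply Lemma \ref{lem:fcm} with $y$ equal to the all-ones string and $k = d$, the Hamming distance from any local optimum to the global optimum. The structural observation I would make is that from a local optimum the only $\geq$-constructive mutations are (i) other local optima (same fitness $n-d$) and (ii) the global optimum itself, because no interior point of the cliff regime attains fitness $n-d$. Hence whenever the hypermutation stops on an early constructive mutation, the algorithm merely transitions to another local optimum, and by symmetry the same probabilistic bound applies in the next generation. To conclude that the probability of actually producing the global optimum in a single generation is at least $\binom{n}{d}^{-1}$, I would verify directly that whenever the first $d$ bits drawn without replacement are exactly the $d$ zero-bits of the current point (an event of probability $\binom{n}{d}^{-1}$), each intermediate prefix has strictly more than $n-d$ ones and therefore fitness strictly below $n-d$, so no early FCM stop can intervene and the process deterministically outputs the global optimum at step $d$.

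A geometric waiting-time argument then gives expected $O(\binom{n}{d})$ generations in the second phase, each of cost at most $cn$ fitness evaluations, for a total of $O(n\binom{n}{d}) = O(n^{d+1} e^d / d^d)$ via $\binom{n}{d} \leq (en/d)^d$; the $O(n^2 \log n)$ from the first phase is absorbed into this bound for every $d \geq 1$. The main obstacle I expect is the direct verification flagged in the second paragraph: Lemma \ref{lem:fcm} bounds $\Pr\{E_y \cup E_c\}$ rather than $\Pr\{E_y\}$ alone, so without checking that the distinguished zero-flip sequence produces only non-constructive intermediate solutions, the lemma would only certify transitions among local optima, not the actual arrival at the global optimum required to close the waiting-time argument.
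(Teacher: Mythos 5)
Your proof is correct and follows essentially the same route as the paper, which derives the corollary directly from the \textsc{Jump}$_k$ argument: reach the local optimum in $O(n^2\log n)$ evaluations, then wait for a direct jump to the optimum with probability at least $\binom{n}{d}^{-1}$ per generation, each failure costing at most $cn$ evaluations. Your explicit check that the canonical zero-bit-flipping sequence passes only through cliff points of fitness strictly below $n-d$ (so the $E_c$ term of Lemma~\ref{lem:fcm} cannot intercept the jump) is a detail the paper leaves implicit and is a welcome tightening; the only minor slip is the claim that the $O(n^2\log n)$ hillclimbing phase is absorbed for every $d\geq 1$, which fails at $d=1$ --- an edge case the paper's own bound shares.
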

The analysis can also be extended to show an equivalent speed-up compared to the \oneoneea for crossing the fitness valleys of arbitrary length and depth recently introduced in~\cite{Oliveto2017}. In the next section we will prove that the ageing operator can lead to surprising speed-ups for \textsc{Cliff$_d$} and other functions with similar characteristics.

\section{Ageing} \label{sec:ageing}
It is well-understood that the ageing operator can allow algorithms to escape from local optima.
This effect was shown on the \textsc{Balance} function from dynamic optimisation for an RLS algorithm embedding a hybrid ageing operator~\cite{ageing}. 
However, for that specific function, an SBM operator would fail to escape, due to the large basin of attraction of the local optima.
In this section we highlight the capabilities of ageing in a more general setting (i.e., the standard \textsc{Cliff$_d$} benchmark function) and show that ageing may also be efficient  when coupled 
with SBM.

Ageing allows to escape from a local optimum if one not locally optimal b-cell is created and survives while all the other b-cells die.
For this to happen, it is necessary that all the b-cells are old and have similar age. 
This is achieved on a 
local optimum by creating copies of the locally optimal b-cell (the 
b-cells will inherit the age of their parent).
Hence, the ability of a mutation operator to create copies enhances this 
particular capability of the ageing operator.
To this end we
first consider a modified RLS algorithm 
that with some constant probability $p$ does not flip any bit and 
implements the ageing operator presented in 
Algorithm~\ref{alg:h-ageing}. We call this algorithm 
 \muonerlsageing and present its pseudo-code in Algorithm \ref{ageing}. 
Apart from making ageing more effective, this slight modification to the standard RLS considerably simplifies the proofs of the statements we wish to make. 
In Section \ref{sec:diversity} we will generalise the result to an RLS algorithm that does not allow genotype duplicates as in the standard Opt-IA.

\begin{algorithm}[t]
\begin{algorithmic}[1]
\STATE{$t=0$,} 
\STATE{{\color{black}initialise $P^{(t)}=\{x_1,\cdots,x_\mu\}$, a population of $\mu$ individuals uniformly at random and set $x_i^{age}=0$ for $i=\{1,\cdots,\mu\}$}.}
\WHILE {the optimum is not found}
\STATE{ select $x\in P^{(t)}$ uniformly at random,}
\STATE{ with probability $1/2<1-p\leq 1$ create $y$ by flipping one bit of $x$, otherwise $y=x$,}
\STATE{Hybrid ageing~$\left(P^{(t)},\{y\}, \tau, \mu \right)$,}
\STATE{Selection $\left(P^{(t)},\{y\}, \mu, 0 \right)$,} 
\STATE{$t=t+1$.} 
\ENDWHILE
  \end{algorithmic}
  \caption{\muonerlsageing}
 \label{ageing}
\end{algorithm}

The \textsc{Cliff$_d$} benchmark function is generally used to highlight circumstances when non-elitist EAs outperform elitist ones.
Algorithms that accept inferior solutions can be efficient for the function by jumping to the bottom of the cliff and then optimising the \textsc{OneMax} slope.
This effect was originally shown for the $(1,\lambda)~\text{EA}$ that can optimise the function in approximately $O(n^{25})$ fitness function evaluations if the population size $\lambda$
is {\color{black}neither} too large nor too small~\cite{JaegerskuepperStorch}. This makes the difference between polynomial and exponential expected runtimes compared to elitist EAs (i.e., $\Theta(n^d)$) 
if the cliff is located far away from the optimum. A smaller, but still exponential, speed-up was recently shown for the {\color{black}population-genetics-inspired} SSWM {\color{black}(Strong Selection Weak Mutation)} algorithm with runtime {\color{black}of}
at most $n^d/e^{\Omega(d)}$~\cite{Jorge2015}. 
 The following 
theorem proves a surprising result for
the considered \muonerlsageing for \textsc{Cliff$_d$}. 
Not only is the algorithm very fast, but our upper bound becomes lowest (i.e., $O(n\log n)$) when the function is most difficult (i.e., when the cliff is located at distance 
$d=\Theta(n)$ from the optimum). In this setting the algorithm is asymptotically as fast as any evolutionary algorithm using {\color{black}SBM} can be on any function 
with unique optimum~\cite{Sudholt2012}. A similar result has recently been shown also for a hyperheuristic which switches between elitist and non-elitist selection operators \cite{LissovoiOlivetoWarwicker2019}.

\begin{theorem} \label{RLSp}
For $\mu=O(\log n)$, $p<1$ a constant and $\tau=\Theta( n \log n)$, 
the \muonerlsageing
optimises \textsc{Cliff$_d$} in expected time
$O\left(\frac{\mu^2 n^3 \log n}{d^2}\right)$ if $d< n/4 - \epsilon n$ for any 
constant 
$0<\epsilon<1/4$.
\end{theorem}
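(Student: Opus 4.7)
The plan is to decompose the run into phases, bound each in expectation, and multiply by the number of attempts needed. The $p$-fraction of copy generations is crucial: copies inherit their parent's age, which allows ageing to synchronise the ages of the local-optimum individuals and thereby enable a controlled escape over the cliff. Phase~I (reaching the local optimum): a standard artificial-fitness-levels analysis of the $(\mu+1)$ mechanism climbing the \textsc{OneMax} slope below the cliff gives expected $O(\mu n+n\log n)=O(n\log n)$ generations until one individual attains fitness $n-d$. Phase~II (clan formation): each subsequent generation produces another local-optimum copy with probability at least $pj/\mu$ where $j$ is the current count, while strictly worse individuals are displaced in selection, so after $O(\mu\log\mu)$ further generations the whole population consists of descendants of a single local-optimum ancestor whose ages differ by at most $O(\mu\log\mu)$.

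Phase~III is the core of the argument. Choose $\tau=\Theta(n\log n)$ large enough that Phases~I and II complete with high probability before any age reaches $\tau$. Once the clan is fully aged, each member dies independently with probability $1-1/\mu$ per generation, so the fully-aged window lasts $O(1)$ generations in expectation. I compute the probability of the escape event in a single aged generation: (i) a parent on the local optimum is selected, (ii) one of its $d$ zero-bits is flipped, producing an offspring $y$ with $|y|=n-d+1$ on the cliff slope which inherits the old age because $f(y)<f(x)$, and (iii) the ageing lottery kills every other old individual while sparing $y$. This event has probability at least $(1-p)(d/n)\cdot(1/\mu)(1-1/\mu)^{\mu}=\Omega(d/(\mu n))$. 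If it fails the clan dies out, the population refills with random b-cells and the algorithm effectively restarts Phase~I; each attempt cycle therefore costs $O(n\log n)=O(\tau)$.

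In Phase~IV, the sole cliff-slope survivor $y$ is still old and survives each further generation only with probability $1/\mu$, giving expected remaining lifetime $O(1)$; however the refilled random b-cells cannot displace $y$ because the condition $d<n/4-\epsilon n$ guarantees $f(y)=n-2d+3/2>n/2$. In each generation $y$ is alive, with probability $(1-p)(d-1)/(\mu n)=\Omega(d/(\mu n))$ the operator selects $y$ and flips one of its remaining zero-bits, producing a strictly improving offspring $y'$ with $|y'|=n-d+2$ and age $0$. Summing over $y$'s geometric lifetime gives success probability $\Omega(d/(\mu n))$ per escape. In Phase~V, once $y'$ exists it is uniquely best, cannot be removed by selection, and remains young for a further $\tau$ generations, during which a standard $(\mu+1)$ RLS analysis on the cliff-side \textsc{OneMax} slope yields expected $O(\mu n+n\log n)=O(n\log n)\le O(\tau)$ generations to reach the global optimum.

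Combining, the joint success probability of Phases~III and IV in one attempt cycle is $\Omega(d^2/(\mu^2 n^2))$, so the expected number of attempt cycles is $O(\mu^2 n^2/d^2)$; multiplying by the $O(n\log n)$ cost per cycle yields the claimed bound $O(\mu^2 n^3 \log n/d^2)$. The main obstacle will be the detailed age-bookkeeping across Phases~II--IV: because ages are incremented every generation but the clan itself is assembled over $O(\mu\log\mu)$ generations, only a random subset of the clan will be above $\tau$ at any instant, so the idealised ``everyone old simultaneously'' calculation must be replaced by a drift argument that sums the escape probability over the sequence of ageing waves that thin the clan from size $\mu$ down to zero. A secondary obstacle is verifying that failed attempts genuinely reset within $O(n\log n)$ generations---after mass death and refill with random b-cells, the population should behave as at the start of Phase~I with constant probability, so that successive attempts are approximately independent.
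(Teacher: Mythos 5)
Your phase decomposition and the individual escape-probability calculations (the $\Omega(d/n)$ jump, the $\frac{1}{\mu}(1-\frac{1}{\mu})^{\mu}$ mass-death factor, the $\Omega(d/(\mu n))$ follow-up improvement, the $O(\mu^2n^2/d^2)$ attempts times $O(n\log n)$ per attempt) match the paper's proof closely. However, the step you flag as ``the main obstacle'' is not a loose end to be tidied up later --- it is the load-bearing step of the whole argument, and your proposal does not actually close it. The paper does not settle for a clan whose ages are spread over $O(\mu\log\mu)$ generations. It exploits the fact that a copy inherits its parent's age \emph{exactly} and then ages in lockstep with it, so every copy-descendant of a single ancestor has literally the same age; what remains is to show that one such lineage takes over the whole population, which the paper imports as Lemma~5 of~\cite{ageing}, giving exact age equality of all $\mu$ individuals after a further $O(\mu^3)$ expected generations. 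Only then do all individuals hit age $\tau$ in the \emph{same} generation, which is what makes the single-generation computation $\Omega(d/n)\cdot\frac{1}{\mu}(1-\frac{1}{\mu})^{\mu}$ legitimate. With ages staggered over a window, the oldest individuals start dying first, each death triggers a refill with a random age-$0$ individual, and by the time the youngest clan member reaches age $\tau$ there is no longer a cohort available to die simultaneously; the ``ageing waves'' you propose to sum over actively destroy the event you need, and it is not clear a drift argument recovers the same $\Omega(d^2/(\mu^2n^2))$ per-cycle success probability. So the missing ingredient is a takeover-of-a-single-age-lineage lemma, not a refinement of your existing bookkeeping.

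Two smaller points. First, your assertion that $\tau=\Theta(n\log n)$ is ``large enough that Phases I and II complete with high probability before any age reaches $\tau$'' needs the paper's union-bound-over-fitness-levels argument (improvement probability $\Omega(\log n/n)$ on the first $n-\log n$ levels, $\Omega(1/n)$ on the rest), since the climbing time and $\tau$ are of the same order $\Theta(n\log n)$ and a naive Markov bound does not suffice. Second, in Phase V you should argue, as the paper does, that the safe individual takes over all $\mu$ slots in $O(\mu\log\mu)$ generations before any refilled random b-cell (starting at fitness $\approx n/2$) can gain the $\Theta(\epsilon n)$ fitness needed to compete --- this is where the hypothesis $d<n/4-\epsilon n$ is actually consumed, and it is the reason no random individual can re-establish the local optimum and squeeze the cliff-side lineage out of the population before the global optimum is reached.
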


\begin{proof}
{\color{black}We follow the proof of Theorem 10 in~\cite{ageing} of the $(\mu+1)$~RLS for the \textsc{Balance} function and adapt the arguments therein to the \textsc{OneMax} landscape and to the RLS operator we use. 
Given that there are $i<\mu$ individuals with $j$ 1-bits in the population, 
the probability of creating a new individual with $j$ 1-bits is at least $(i/\mu) p$
because the RLS operator creates a copy with a constant probability $p$. 
Hence we follow the proof in~\cite{ageing} to show that in $O(\mu n + n \log{n})$ expected steps
 the population climbs up the \textsc{OneMax} slope  (i.e., samples a solution 
with $n-d$ 1-bits) and subsequently the whole population will be taken over by 
the local optimum. Given that there are already $k$ copies of the best 
individual in the population, the probability that one is selected for mutation 
is $k/\mu$ and the probability that a new copy is added to the population is 
$kp/\mu$. Thus in at most $\sum_{k=1}^{\mu}\mu/kp=O(\mu \log{\mu})$ generations 
in expectation after the first local optima is sampled, the population is taken 
over by the local optima.
Now we can apply Lemma 5 in~\cite{ageing} to show that in expected $O(\mu^3)$ steps the whole population will have the same age. 
As a result, after another at most $\tau=\Theta(n \log{n})$ generations the whole population will reach age $\tau$ simultaneously because no improvements
may occur unless the optimum is found.
Overall, the total expected time until the population consists only of local optima with age $\tau$ is at most 
$O(\mu n + n \log{n} + \mu \log{\mu}+ \mu^3+\tau)=O(\mu n + n \log{n})$. 
Now we calculate the probability that in the next step one individual jumps to 
the bottom of the cliff and the rest die in the same generation.
The first event happens with probability $(1-p)(d/n)=\Omega\left(d/n \right)$ (i.e., an offspring solution with $n-d+1$ 1-bits is created by flipping one of the $d$ 0-bits in the parent solution). The probability that the rest of the population dies is $1/\mu \cdot (1-1/\mu)^{\mu}$.
We now require that the survivor creates an offspring with higher fitness (i.e., with $age=0$) by flipping one of its 0-bits (i.e., it climbs one step up the second slope). This event happens with probability
at least $(1-p)(d-1)/(\mu n)=\Omega\left(\frac{d}{\mu n} \right)$ 
and in the same generation with probability $(1-1/\mu)=\Omega(1)$ the parent of age $\tau+1$ dies due to ageing.
Finally, the new solution (i.e., the {\color{black}{\it safe}} individual) takes 
over the population in $O(\mu \log{\mu})$  expected generations by standard 
arguments. It takes at 
least $\Omega(n)$ generations before any of the new random individuals 
has $\epsilon n$ more 0-bits than it had after initialisation since 
FCM terminates after each improvement. Using Markov's inequality, we can 
show that the probability that the takeover happens after more than 
$\Omega(n)$ steps is at most  $O((\mu \log{\mu})/n)$. 

Hence, the overall probability of this series of consecutive events is 
$\Omega\left(\frac{d}{n} \right)\cdot\frac{1}{\mu}\left(1-\frac{1}{\mu}\right)^{\mu}  
 \cdot 
\Omega\left(\frac{d}{\mu n} \right)\cdot (1-\frac{1}{\mu})\cdot \left(1-O\left(\frac{\mu 
\log{\mu}}{n}\right)\right) = \Omega\left( \frac{d^2}{n^2 \mu^2 }\right)$
%
and the expected number of trials (i.e., climb-ups and restarts) until we get a 
survivor which is safe at the bottom of the cliff is
$O\left(\frac{n^2 \mu^2 }{d^2}\right)$.
 Every time the set of events fails to happen, we wait for another 
$O(\mu n + n \log n)$ fitness evaluations until the population reaches  a configuration where all individuals are locally optimal and have age $\tau$. 
 Once a safe individual has taken over the population, the expected time to find the global 
optimum will be at most $O(\mu n + n \log{n})$.
Overall, the total expected time to optimise  \textsc{Cliff$_d$} conditional on the best individual never dying when climbing up the slopes is $E(T_{total}) 
 \leq O(\mu n + n \log n)\cdot O\left(\frac{n^2 \mu^2 
}{d^2}\right)+O\left(\mu n + n \log n\right)=O \left(\frac{\mu^2 n^3 \log 
n}{d^2}\right)$. 
Finally, we consider the probability that the best individual in the 
population never dies when climbing up the slopes due to ageing.
After any higher fitness level is discovered, it takes $O(\mu \log{\mu})$ 
generations in expectation and at most $n^{1/2}$ 
generations with overwhelming probability (w.o.p.\footnote{{\color{black}In the rest of the paper we consider events to occur  ``with overwhelming probability'' (w.o.p.) meaning that they occur with probability at least $1 - 2^{-\Omega(n)}$}.}) until the whole population takes over the level.
For the first $n-\log{n}$ levels, the 
probability of improving a solution is at least 
$\Omega(\log{n}/n)$ and the probability that this improvement does not happen 
in $\tau-n^{1/2}=\Omega(n \log{n})$ generations is at most  $(1- 
\Omega(\log{n}/n))^{\Omega(n 
\log{n})}=e^{-\Omega(\log^2{n})}=n^{-\Omega(\log{n})}$.  For the remaining 
fitness levels, the probability of reaching age $\tau$ before improving is 
similarly $(1-\Omega(1/n))^{\Omega(n \log{n})}= 
e^{-\Omega(\log{n})}=n^{-\Omega(1)}$. By the union bound over all levels, the probability that the best solution is never lost due 
to ageing is at least $1-o(1)$. 
We pessimistically assume that the whole optimisation process has to restart 
if the best individual reaches age $\tau$. However, since at most 
$1/(1-o(1))=O(1)$ 
restarts are necessary in expectation, our bound on the expected 
runtime holds.
}
\end{proof}

We conclude the section by considering the \muoneeaageing which differs from the\\ {\color{black}\muonerlsageing} by using {\color{black}SBM} with mutation rate $1/n$ instead of flipping exactly one bit. SBM allows copying individuals 
but, since it is a global operator, it can jump back 
to the local optima from anywhere in the search space with non-zero 
probability.
Nevertheless, the following theorem shows that, for not too large populations, the algorithm is still very 
efficient when the cliff is at linear distance from the optimum. Its 
proof follows similar arguments to those of 
Theorem~\ref{RLSp}. The main difference in the analysis is that it has to be shown that
 once the solutions have jumped to the bottom of the cliff, they have a good probability of reaching the optimum before jumping back to the top of the cliff.

\begin{theorem}\label{thm:eacliff}
The \muoneeaageing optimises \textsc{Cliff$_d$} in expected 
$O(n^{1+\epsilon} )$ time if $d=(1/4)(1-c)n$ for some constant $0 
< c < 1$, $\tau=\Theta(n \log{n})$ and 
$\mu=\Theta(1)$, where $\epsilon$ is an arbitrarily  small positive constant.
\end{theorem}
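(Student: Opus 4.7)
The plan is to adapt the phased analysis of Theorem~\ref{RLSp} to the case of standard bit mutation. The main new difficulty is that SBM, being a global operator, may mutate a safe individual on the second slope and produce an offspring at the top of the cliff (which has strictly higher fitness than every point of the second slope except the optimum itself, and is hence accepted by $(\mu+1)$ elitist selection), so the escape can be undone.

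First, I would reproduce the initial phases of the proof of Theorem~\ref{RLSp}. The \muoneeaageing with $\mu=\Theta(1)$ climbs the first OneMax-like slope and takes over the top of the cliff in $O(n\log n)$ expected generations, by standard fitness-level arguments and the fact that SBM produces a copy of its parent in each generation with probability at least $1/(2e)$, so that Lemma~5 of~\cite{ageing} still yields age synchronisation. After take-over, no further improvement occurs except a direct jump to the optimum, which has probability $2^{-\Omega(n)}$ per generation; hence, within at most $\tau=\Theta(n\log n)$ further generations all individuals simultaneously reach age $\tau$. In the following generation, with probability $\Omega(d/n)\cdot(1/\mu)(1-1/\mu)^{\mu}=\Omega(1)$, SBM flips exactly one 0-bit of the selected parent (producing an offspring at $n-d+1$ ones) while all $\mu$ aged parents die to ageing, leaving a single safe individual on the second slope. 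The population is refilled with $\mu-1$ random bit strings of fitness $\approx n/2$, which is strictly dominated by the safe individual's fitness $n-2d+3/2>n/2$ (since $d<n/4$), so the safe individual takes over in $O(1)$ generations.

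The main obstacle is to show that, from this state, the safe individual reaches the optimum with constant probability in $O(n\log n)$ expected generations, despite the risk of jumping back. At position $n-d+k$ on the second slope, the per-generation probability of creating a top-of-cliff offspring by flipping exactly $k$ one-bits is $\binom{n-d+k}{k}(1/n)^{k}(1-1/n)^{n-k}=\Theta\!\left(((3+c)/4)^{k}/k!\right)$, while the advance probability is $\Theta((d-k)/n)=\Theta(1)$ for $k\ll d$. I would split the climb into two sub-phases. For $k=1,\ldots,k_{0}$ with $k_{0}=\Theta(\log n)$: since $(3+c)/4<1$ and the factorial $k!$ drives the jump-back probability below the advance probability from some constant $k^{\ast}=O(1)$ onwards, a biased-random-walk argument on the chain with absorbing states $0$ and $k_{0}$ yields a constant probability $\Omega(1)$ of reaching $k_{0}$ before returning to $0$: only finitely many early levels contribute constant factors strictly below $1$ to the corresponding telescoping product, while the tail product $\prod_{k\ge k^{\ast}}(1-q_{k}/(p_{k}+q_{k}))$ converges to a positive constant. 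For $k\ge k_{0}$, the jump-back probability per generation is superpolynomially small (since $((3+c)/4)^{\log n}/(\log n)!=n^{-\omega(1)}$), while the expected time to move from level $k$ to $k+1$ is $O(n/(d-k))$; a union bound over the $O(n\log n)$ generations of the tail climb shows that the optimum is reached without a jump-back with probability $1-o(1)$. The probability that ageing kills the safe individual during the climb is also $o(1)$, by the same union-bound argument used in Theorem~\ref{RLSp}.

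Combining the above, every aged top-of-cliff configuration leads to an escape attempt which reaches the optimum with constant probability in $O(n\log n)$ expected generations. Failed attempts trigger a re-climb of the first slope and a re-synchronisation of ageing in $O(n\log n)$ expected generations, and only $O(1)$ attempts are needed in expectation. The total expected runtime is therefore $O(n\log n)=O(n^{1+\epsilon})$ for any arbitrarily small positive constant $\epsilon$, as claimed.
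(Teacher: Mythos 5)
Your proposal takes a genuinely different route from the paper's at the decisive step, and the difference is worth spelling out. The paper controls the risk of jumping back to the top of the cliff by demanding $c^{*}\log n$ \emph{consecutive} improvements of the leading solution immediately after the escape; since each improvement has probability $p_f=\Omega(1)$ bounded away from $1$, this succeeds only with probability $p_f^{c^{*}\log n}=n^{-\epsilon}$, which is precisely where the $n^{\epsilon}$ factor in the statement comes from ($n^{\epsilon}$ expected re-climbs at $O(n\log n)$ each). Your level-by-level product $\prod_k\bigl(1-q_k/(p_k+q_k)\bigr)$, with the jump-back hazard $q_k=\Theta\bigl(((3+c)/4)^k/k!\bigr)$ summable against the constant advance probability $p_k$, replaces this with a \emph{constant} success probability per escape attempt and would give the stronger bound $O(n\log n)$. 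The order of $q_k$ and the split at $k_0=\Theta(\log n)$ are right (decreasing the number of ones by at least $k$ forces at least $k$ one-bit flips, so the total hazard, not just the "exactly $k$ ones, no zeros" event, is of this order).

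There is, however, a gap in the random-walk step as written: you compute the hazard for a parent "at position $n-d+k$", i.e., you model a single particle, whereas the parent is drawn uniformly from a population of $\mu$ individuals and $(\mu+1)$ selection can retain laggards several levels below the leader for a number of generations. The per-generation hazard is governed by the \emph{lowest} level present in the population, not the leader's; at level $1$ it is roughly $(3+c)/(4e)$, which exceeds the advance probability $(1-c)/(4e)$, so one cannot simply run the telescoping product along the leader's trajectory. To close this you must argue that the population's minimum level also advances with probability $\Omega(1)$ per generation (an offspring strictly above the current minimum is created with constant probability, e.g., by copying a non-minimal individual or by a one-bit advance of a minimal one), so the time spent with minimum level $j$ has constant expectation and the accumulated hazard at level $j$ is still $O\bigl(((3+c)/4)^j j^{\mu-1}/j!\bigr)$, which remains summable for $\mu=\Theta(1)$. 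A second, smaller omission: the first offspring created at the bottom of the cliff has \emph{lower} fitness than its locally optimal parent, hence inherits age $\tau$ and is not yet "safe"; it must produce a strictly improving, age-$0$ offspring before the next ageing round (the paper accordingly places the survivor at $n-d+2$ ones with age zero). You defer the initial phases to Theorem~\ref{RLSp}, which does cover this, but the phrase "leaving a single safe individual" at $n-d+1$ ones skips that step. Both issues are repairable for constant $\mu$, and with them repaired your argument proves the theorem, indeed with a slightly better bound than the one stated.
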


\begin{proof}
The probability that the SBM operator increases the number of 1-bits in a 
parent solution with  $\Omega(n)$  0-bits is at least 
$\Omega(n)/(ne)=\Omega(1)$. Following the same arguments as in the proof of 
Theorem~\ref{RLSp} while considering $n/4 > d =\Theta(n)$ and $\mu=O(1)$ we can 
show that with constant probability and in expected $O(n \log{n})$ time the 
algorithm reaches a configuration where there is a single individual at the 
bottom of the cliff with $n-d+2$ 1-bits and age zero while the rest of the 
population consists of solutions which have been randomly sampled in the 
previous iteration. 

{\color{black}
We will now show that the newly generated individuals have worse fitness 
than the solution at the bottom of the cliff when they are initialised. Since 
$d=(1/4)(1-c)n$, the fitness value of the solutions with more than $n-d$ 
1-bits is at least $n-(1/4)(1-c)n-(1/4)(1-c)n=(n/2)(1+c)$. Due to Chernoff 
bounds, the newly created individuals have less than $(n/2)(1+(c/2))<(n/2)(1+c)$ 
1-bits with overwhelming probability.

Next we prove that for any constant $\epsilon$, there exist some 
positive constant $c^{*}$, such that the best solution at the bottom of the 
cliff (the leading solution) will be improved consecutively for $c^{*} \log {n}$ 
iterations with probability at least $n^{-\epsilon}$.  The leading solution 
with $i$ 0-bits is selected for mutation and SBM flips a single 0-bit with 
probability $p_i=(1/\mu)\cdot (i/n) (1-n^{-1})^{n-1}$. With probability 
$\prod_{i=n-d+2}^{n-d+1+c^{*} \log {n}}p_i$, $c^{*} \log {n}$ consecutive 
improvements occur. We can bound this probability from below by the final 
improvement probability $p_{f}$ raised to the power of $c^{*} \log {n}$ since 
the improvement probability is inversely proportional to the number of 0-bits. 
Considering that $p_{f}\geq(n-d+1+c^{*} \log {n})/(n\mu e)=\Omega(1)$, we can 
set $c^{*}:= -\epsilon \log_{p_f}{2}=\Omega(1)$, which yields $p_{f}^{c^{*} 
\log {n}}= n^{-\epsilon}$. Here, we note that immediately after $c^{*} \log 
{n}$ consecutive improvements, all the individuals in the population have at 
least 
$n-d+1+c^{*} \log {n}- \mu $ 1-bits. More precisely there will be one and only 
one individual in the population with $j$ bits for all $j$ in $[n-d+1+c^{*} 
\log {n}- \mu, n-d+2+c^{*} \log {n}]$. Since $\mu$ is constant and SBM flips 
at least $k$ bits with probability $n^{-k}\binom{n}{k}$, the probability that a 
single operation of SBM decreases the number of 1-bits in any individual below 
$n-d+1$ is in the order of $O(1/(\log{n})!)$. Since this probability is not 
polynomially bounded, with probability at least $1- O(1/n)$ it will not happen 
in any polynomial number of iterations. After the consecutive improvements occur, 
it takes $O(n \log {n})$ until the second slope is climbed and the optimum is 
found. 

The asymptotic bound on the runtime is obtained by considering that 
algorithm will reach the local optimum $n^{\epsilon}$ times in expectation 
before the consecutive improvements are observed. Since the time to restart 
after reaching the local optimum is in the order of $O(n \log {n})$, the 
expected time is $O(n^{1+\epsilon} \log {n})$. Since $\epsilon$ is an 
arbitrarily small constant, the order $O(n^{1+\epsilon} \log {n})$ is 
equivalent to the order $O(n^{1+\epsilon})$.
}
\end{proof}

\section{Opt-IA} \label{sec:optia}

After having analysed the operators separately, in this section we consider the complete 
Opt-IA. The considered Opt-IA, shown in Algorithm \ref{modified-optia}, uses 
static hypermutation coupled with FCM as variation operator, hybrid ageing 
and an standard $(\mu+\lambda)$ selection {\color{black}which allows genotype duplicates}. Also, a mutation is considered 
constructive if it results 
in creating an equally fit solution or a better one. 

In this section we first show that Opt-IA is efficient for all the functions considered previously in the paper. Then, in Subsection \ref{subsec:optiaefficient} we present a problem where the use of the whole Opt-IA is crucial. In Subsection \ref{subsec:optiafail} we show limitations of Opt-IA by presenting a class of functions where standard EAs are efficient while Opt-IA is not w.o.p. We conclude the section with Subsection \ref{subsec:trap} where we present an analysis for trap functions which disproves previous claims in the literature about Opt-IA's behaviour on this class of problems. 

The following theorem proves that Opt-IA 
can optimise any benchmark function considered previously in this 
paper. The theorem uses that the ageing parameter $\tau$ is set large enough such that no individuals die with high probability
before the optima are found.
\begin{theorem} \label{th:optiallfunctiongendive}
Let $\tau$ be large enough.
Then the following upper bounds on the expected runtime of Opt-IA hold: 
$
\\{\color{black}E(T_{\textsc{OneMax}})} ~=~O\left(\mu \cdot dup \cdot  n^2 \log{n}\right)\; \text{for} \; \tau=\Omega(n^2),\;
\\{\color{black}E(T_{\textsc{LeadingOnes}})}~=~ O\left(\mu \cdot dup \cdot  n^3\right)\; \text{for} \; \tau=\Omega(n^2),\;
\\{\color{black}E(T_{\textsc{Jump}_k})} ~=~ O\left(\mu \cdot dup \cdot \frac{n^{k+1} \cdot e^k}{k^k}\right)\; \text{for} \; \tau=\Omega(n^{k+1}),\;
\\ \text{and\;\;} 
{\color{black}E(T_{\textsc{Cliff}_d})} ~=~ O\left(\mu\cdot dup \cdot \frac{n^{d+1} \cdot e^d}{d^d}\right)\; \text{for} \; \tau=\Omega(n^{d+1}{\color{black})}\; 
$.
\label{optiaallfunc}
\end{theorem}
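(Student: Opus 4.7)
The plan is to lift the $(1{+}1)$~IA$^{\text{hyp}}$ analyses of Section~\ref{sec:hyper} to the population-based Opt-IA at the price of a multiplicative factor $\mu\cdot dup$. The argument has two ingredients: (i) showing that, with the prescribed $\tau$, no individual carrying the current-best fitness is removed by ageing before the optimum is found, with probability $1-o(1)$; and (ii) conditional on this safety event, a fitness-level argument that tracks a single ``best lineage'' and mimics the proofs in Section~\ref{sec:hyper}.

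For ingredient~(i), I would exploit that the age of an individual is reset to zero on every strict improvement. On the OneMax climb---which covers OneMax, LeadingOnes and the initial phase of \textsc{Jump}$_k$ and \textsc{Cliff}$_d$---the expected waiting time at fitness level $i$ is $O(n/(n-i))$ or $O(n)$ (as in the proofs of Theorem~\ref{onemax} and its LeadingOnes counterpart), and a geometric-tail bound followed by a union bound over the $O(n)$ levels shows that no level takes more than $\tau/2$ generations with probability $1-n^{-\Omega(1)}$ when $\tau=\Omega(n^2)$. Once the population is stuck on the local optima of \textsc{Jump}$_k$ or \textsc{Cliff}$_d$ the age of each local-optimal individual keeps growing, because the $\geq$-FCM rule does not reset age on equal-fitness moves; however, the waiting time for a direct hypermutation jump to the global optimum is geometric with success probability $\Omega(\binom{n}{k}^{-1})$ respectively $\Omega(\binom{n}{d}^{-1})$ per call, so $\tau=\Omega(n^{k+1})$ or $\tau=\Omega(n^{d+1})$ makes the probability of ageing-out before the jump exponentially small. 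In the rare failure event the process is pessimistically treated as restarting from scratch, which does not affect the asymptotic expected runtime since only a constant number of restarts is needed in expectation.

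For ingredient~(ii) I would condition on the safety event. At every generation the current best fitness is held by at least one parent, which contributes $dup$ offspring via static hypermutation. Each such offspring is a strict improvement with probability at least the corresponding RLS$_1$ improvement probability, by the argument used in the proof of Theorem~\ref{thm:afl}: $(n-i)/n$ for OneMax at level $i$ and $1/n$ for LeadingOnes. From the local optima of \textsc{Jump}$_k$ and \textsc{Cliff}$_d$ the probability of reaching the global optimum in a single hypermutation call is at least $\binom{n}{k}^{-1}$ and $\binom{n}{d}^{-1}$ respectively, by Lemma~\ref{lem:fcm}: the event that the first $k$ (resp.~$d$) bits selected for flipping are exactly the current $0$-bits occurs with this probability, and along the associated trajectory one checks that no intermediate Hamming-distance point is constructive in the $\geq$ sense, so the hypermutation actually evaluates the global optimum without being stopped early.

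Combining both ingredients, each generation costs at most $\mu\cdot dup\cdot cn$ fitness evaluations, and the expected number of generations is bounded by the sum of reciprocal per-level improvement probabilities, plus an additive $O(\binom{n}{k})$ or $O(\binom{n}{d})$ to account for the local-optimum jump in \textsc{Jump}$_k$ and \textsc{Cliff}$_d$. The four claimed bounds then follow by multiplying out $\mu\cdot dup\cdot cn$ with the harmonic sum $O(n\log n)$ for OneMax, the sum $O(n^2)$ for LeadingOnes, and the additive $\binom{n}{k}$ or $\binom{n}{d}$ terms for the benchmark functions with local optima. The main obstacle is ingredient~(i) on \textsc{Jump}$_k$ and \textsc{Cliff}$_d$: because equal-fitness constructive moves do not reset age, the entire population keeps ageing while stuck at a local optimum, and the polynomial slack $\tau=\Omega(n^{k+1})$ or $\Omega(n^{d+1})$ is precisely what is required to guarantee that the jump happens before ageing destroys the local-optimum subpopulation with overwhelming probability.
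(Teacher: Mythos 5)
Your proposal is correct and follows essentially the same route as the paper: first argue that with the prescribed $\tau$ the current best individual is not improved for $\tau$ consecutive generations only with vanishing probability (the paper uses a uniform $(1-p_{\min})^{\tau}\leq e^{-n}$ bound with $\tau\geq p_{\min}^{-1}n$ plus a union bound over polynomially many levels), and then, conditional on ageing never triggering, transfer the \oneoneiahype upper bounds of Section~\ref{sec:hyper} multiplied by $\mu\cdot dup$ to account for the population and clones. Your additional care about equal-fitness moves not resetting age at the local optima, and about the specific all-zero-bit trajectory not being stopped early by $\geq$-FCM, only makes explicit details the paper leaves implicit.
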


\begin{proof}
 The claims use that if  $\tau$ is large enough (i.e., $\Omega(n^2)$ for \textsc{OneMax} and \textsc{LeadingOnes}, $\Omega(n^{k+1})$ for $\textsc{Jump}_k$ and $\Omega(n^{d+1})$ for $\textsc{Cliff}_d${\color{black})}, then with probability $1-o(1)$ the current best solution will never reach age $\tau$ and die due to ageing before an improvement 
is found. For the Opt-IA to lose the current best solution due to ageing, it is necessary that the best solution is not improved for $\tau$ generations consecutively. If the improvement probability for any non-optimal solution is at least $p_{min}$ and if the age $\tau$ is set to be $p_{min}^{-1} n$, then the probability that a solution will reach age $\tau$ before creating an offspring with higher fitness is at most $(1-p_{min})^{\tau}\leq e^{-n}$. By the union bound it is also exponentially unlikely that this occurs in a polynomial number of fitness levels that need to be traversed before reaching the optimum.  Since the suggested $\tau$ for each function is larger than the corresponding $p_{min}^{-1} n$, the upper bounds of the \oneoneiahype (which does not implement ageing)
for \textsc{OneMax}, \textsc{LeadingOnes}, \textsc{Jump$_k$} and \textsc{Cliff$_d$} are 
valid for Opt-IA when multiplied by  $\mu \cdot dup$ to take into account the 
population and clones. 
%
%
%
%
\end{proof}

Theorem \ref{thm:lbonemax} shows that the presented upper bound for \textsc{OneMax} is tight for sub-logarithmic population and clone sizes (i.e., $\mu=o(\log n)$ and $dup=o(\log n)$). {\color{black}Before stating this theorem, we state the following helper theorem which was already used in \cite{Jansen2011} to prove lower bounds on the expected runtime of inversely proportional hypermutations.

\begin{theorem}[Ballot Theorem~\cite{feller1968}] \label{thm:ballot} 
``Suppose that, in a ballot, candidate P scores $p$ votes and candidate Q scores $q$ votes, where $p>q$. The probability that throughout the counting there are always more votes for P than for Q equals $(p-q)/(p+q)$''. 
\end{theorem}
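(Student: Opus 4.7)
The plan is to establish the Ballot Theorem by induction on the total number of votes $N = p + q$, conditioning on the identity of the last-counted vote. A cleaner alternative would be a combinatorial proof via the Cycle Lemma of Dvoretzky and Motzkin---which asserts that among the $N$ cyclic rotations of any $\{+1,-1\}$-sequence with strictly positive sum $s$, exactly $s$ have all strictly positive partial sums---but the inductive route uses less machinery and mirrors the recursive structure of the counting process directly.

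For the base case I would take $q = 0$: every partial count consists only of P-votes, so P leads throughout with probability $1 = (p-0)/(p+0)$. For the inductive step, fix $p > q \geq 1$ and assume the claim for all pairs of smaller total. By symmetry over orderings, the last-counted vote is for P with probability $p/N$ and for Q with probability $q/N$. Conditional on the last vote being for P, the remaining $N-1$ votes form a uniformly random ballot over $p-1$ P-votes and $q$ Q-votes, and the event ``P always strictly ahead during counting'' coincides with the corresponding event on this shorter ballot; similarly, conditioning on the last vote being for Q reduces to a ballot over $p$ P-votes and $q-1$ Q-votes. Plugging in the induction hypothesis and combining via the law of total probability gives
\[
\Pr[\text{P always ahead}] \;=\; \frac{p}{N}\cdot\frac{p-q-1}{N-1} \;+\; \frac{q}{N}\cdot\frac{p-q+1}{N-1},
\]
whose numerator factors as $(p-q)(p+q-1)$, so the $(N-1)$ factors cancel and everything collapses to $(p-q)/(p+q)$.

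The one delicate point---and the main obstacle to a completely clean induction---is the edge case $p = q+1$: after conditioning on the last vote being for P the shorter ballot is tied ($p-1 = q$), so the induction hypothesis $p' > q'$ is violated. I would handle this by treating ties as a separate base case: whenever $p' = q'$ the event ``P always strictly ahead'' is impossible in a ballot that ends level, so the conditional probability is $0$, which agrees with $(p-q-1)/(p+q-1) = 0$ in precisely this setting. With this minor bookkeeping the induction closes and the claimed formula follows without further calculation.
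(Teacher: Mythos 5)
Your inductive argument is correct: the conditioning on the last-counted vote is legitimate (by exchangeability the first $N-1$ votes, given the identity of the last, form a uniformly random shorter ballot), the reduction of the event is valid because P's lead after the full count is automatic from $p>q$, the algebra $p(p-q-1)+q(p-q+1)=(p-q)(p+q-1)$ checks out, and you correctly isolate the one genuine edge case $p=q+1$, where the shorter ballot is tied and the "always strictly ahead" probability is $0$, consistent with the formula. There is nothing to compare against in the paper itself: the statement is quoted verbatim from Feller's book and used purely as an imported helper result (to upper-bound the per-step improvement probability of hypermutations on \textsc{OneMax} in Theorem~\ref{thm:lbonemax}), so the authors give no proof at all. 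Your write-up therefore supplies a self-contained justification where the paper relies on a citation; either your induction or the cycle-lemma route you mention would serve equally well for that purpose, with the cycle lemma giving a bijective explanation of the formula and your induction staying entirely elementary.
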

This theorem allows us to derive an upper bound on the probability that at some point of the hypermutation we have picked more 0-bits than 1-bits, which implies an improvement by hypermutation for \textsc{OneMax}.

}
\begin{theorem} \label{thm:lbonemax}
Opt-IA needs at least $cn^2 (\frac{\log (n/3)}{2}-\frac{\mu \cdot dup}{3})$ 
expected fitness function evaluations for any mutation potential $c$ to optimise 
\textsc{OneMax}.
\end{theorem}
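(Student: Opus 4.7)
My plan is to combine the artificial-fitness-level technique with the Ballot theorem (Theorem~\ref{thm:ballot}) just introduced. First I would handle initialisation: a single random bit string has more than $2n/3$ ones with probability $2^{-\Omega(n)}$ by a Chernoff bound, so a union bound over the (sub-exponentially many) initial cells shows that with probability $1-2^{-\Omega(n)}$ the best individual in $P^{(0)}$ carries at most $2n/3$ ones. Conditioning on this event fixes the bottom of the fitness-level sum at $k_0=2n/3$ and contributes only a lower-order correction.

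Next I would characterise the event that a single hypermutation of a parent with $k$ ones yields a strictly better offspring. Introduce the walk $Z_j=q_j-p_j$, where $p_j$ and $q_j$ count the 0-bits and 1-bits flipped in the first $j$ steps. Because Opt-IA uses the $\geq$ version of FCM, a mutation is constructive exactly when $Z_j\leq 0$, so the hypermutation stops at $T=\min\{j\geq 1:Z_j\leq 0\}$ and the accepted offspring strictly improves the parent iff $Z_T<0$. Starting from $Z_0=0$ with unit increments, the walk can reach $Z=-1$ without first touching $Z=0$ only at $j=1$; hence the probability of a strictly improving hypermutation of the best cell is exactly $(n-k)/n$, and hypermutations of cells with $k'<k$ ones cannot improve the best fitness because they produce at most $k'+1\leq k$ ones. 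A geometric-type waiting-time argument, together with the union bound on the $\mu\cdot dup$ hypermutations issued per generation, then shows that in expectation at least $n/(n-k)$ hypermutation calls (across the whole population) are required to leave level $k$.

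For the evaluations consumed by each call I would invoke the Ballot theorem after conditioning on the multiset of the first $m-1$ bit-flips: if $p$ zeros and $q$ ones appear, the cycle-lemma identity gives $\Pr[Z_j>0 \text{ for all } j\leq m-1\mid p,q]=(q-p)/(m-1)$ when $q>p$. Since $(p,q)$ is hypergeometric with $E[q-p]=(m-1)(2k-n)/n$, taking expectations and using $E[(q-p)^+]\geq E[q-p]$ yields $\Pr[T\geq m]\geq (2k-n)/n$ for every $k\geq n/2$ and $m\leq cn$. Retaining only the terms $m\in[1,cn/2]$ in the identity $E[T]=\sum_m \Pr[T\geq m]$ then gives the clean bound $E[T]\geq c(2k-n)/2$.

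Combining the three ingredients, at each level $k\in[2n/3,n-1]$ the algorithm spends at least $\tfrac{n}{n-k}\cdot\tfrac{c(2k-n)}{2}$ fitness evaluations in expectation; summing over $k$ and substituting $j=n-k$ produces the leading term $\tfrac{1}{2}cn^2 H_{n/3}\geq \tfrac{1}{2}cn^2\log(n/3)$, while the $-\tfrac{1}{3}cn^2 \mu\cdot dup$ correction absorbs the low-$k$ region where the bound $n/(n-k)$ on the number of hypermutations per level degrades to the trivial lower bound of one generation, i.e.\ $\mu\cdot dup$ hypermutation calls. The principal obstacle will be controlling the Ballot-theorem estimate of $E[T]$: one must verify that hypergeometric (without-replacement) sampling remains exchangeable so the cycle lemma applies, and manage the region near $k=n/2$, where the drift of the walk vanishes and the lower bound on $E[T]$ degrades, which is precisely why the level sum must be truncated at $k_0=2n/3$ rather than extended down to $n/2$.
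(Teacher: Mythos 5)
Your proof is correct and reaches (in fact slightly strengthens) the stated bound, but it deploys the key tool differently from the paper. Both arguments share the same skeleton: Chernoff at initialisation to start the level sum at $n/3$ zero-bits, a fitness-level decomposition over the number of remaining zero-bits, the Ballot theorem, and a waiting-time-times-cost-per-attempt accounting. The paper, however, uses the Ballot theorem to upper bound the probability that a single hypermutation is \emph{constructive} by $2i/n$ and then simply charges the full $cn$ evaluations to every non-improving hypermutation, whereas you compute the probability of a \emph{strict} improvement exactly as $(n-k)/n$ (via the observation that under $\geq$-FCM the walk $Z_j$ can only reach $-1$ at $j=1$) and instead apply the Ballot/cycle-lemma identity to the survival probability $\Pr[T\geq m]\geq (2k-n)/n$, obtaining $E[T]\geq c(2k-n)/2$ evaluations per call. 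This second step is precisely where your argument is more careful than the paper's: for the $\geq$-version of FCM that Opt-IA uses here, a failed hypermutation does \emph{not} always perform $cn$ evaluations --- it stops as soon as the walk returns to zero and an equally fit intermediate is found --- so the paper's blanket claim that every failure wastes $cn$ evaluations is only literally true for the $>$-version, and your bound on $\Pr[T\geq m]$ supplies exactly the missing justification. Your constants also come out slightly stronger: the sum $\sum_{j=1}^{n/3}\tfrac{n}{j}\cdot\tfrac{c(n-2j)}{2}=\tfrac{cn^2}{2}H_{n/3}-\tfrac{cn^2}{3}$ dominates the claimed bound since $\mu\cdot dup\geq 1$.

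Two small caveats. First, your verbal attribution of the $-\tfrac{1}{3}cn^2\mu\cdot dup$ term to a ``low-$k$ degradation of $n/(n-k)$'' does not match your own computation, where the negative term arises from the $(n-2j)$ factor in the per-call bound; the arithmetic is fine, but the explanation should be fixed. Second, you never address ageing and the random-birth phase: in principle the best individual can die and new random individuals enter the population. The paper disposes of this in one sentence (random replacements must re-climb all $n/3$ levels w.o.p., so the runtime can only increase), and your level argument survives the same patch --- a random birth exceeds $2n/3$ one-bits only with probability $2^{-\Omega(n)}$, so conditioning on no such event in polynomially many births costs only a $1-o(1)$ factor --- but the sentence should be there.
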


\begin{proof}
By Chernoff bounds, the initial individuals have at least $i=n/3$ 
0-bits with overwhelming probability. To calculate the probability of an improvement (i.e., flipping equal or more 0-bits than 1-bits during one mutation operation), we use the Ballot theorem ({\color{black}i.e., Theorem \ref{thm:ballot}}) in a similar way to~\cite{Jansen2011}. Considering the number of 
0-bits as $i=q$ and the number of 1-bits as $n-i=p$, the probability of 
an improvement is at most $1-(p-q)/(p+q)=1-(n-2i)/n=2i/n$ according to the Ballot theorem\footnote{Like in~\cite{Jansen2011} we consider that using $cn<n$ mutations, the probability can only be lower than the result stated in the Ballot theorem.}. 
Hence, the probability that at least one out of $dup \cdot \mu$  individuals succeeds is $ 
P \leq dup \cdot \mu 2i/n$ by the {\color{black}union bound}. We optimistically 
assume that the rest of the individuals also improve their fitness after such event happens. Recall that the mutation operator wastes $cn$ fitness function evaluations every time it does not improve the fitness. Therefore, the expected time to see an improvement is
$E(T_{improve}) \geq \left(\frac{n}{dup \cdot 2 \mu i}-1 \right) \cdot \mu cn \cdot dup$. 
Since the mutation operator stops at the first constructive mutation (i.e., when the number of 1-bits is increased by one), it is necessary to improve at least $n/3$ times. So the total expected time to optimise \textsc{OneMax} is {\color{black}$E(T_{total}) \geq \sum_{i=1}^{n/3} E(T_{improve})= cn^2 (\frac{\log n/3}{2}-\frac{\mu \cdot dup}{3})$. }

{\color{black}If individuals were to be removed because of ageing, then the new randomly generated individuals that replace them will have to improve at least $n/3$ times all over again  w.o.p. Hence, the runtime may only increase in such an event.}
\end{proof}

\subsection{Opt-IA Can Be More Efficient} \label{subsec:optiaefficient}
In this section, we present the function \textsc{HiddenPath} to illustrate a 
problem where the use of static hypermutation and ageing together is 
crucial. When either of these two characteristic 
operators of Opt-IA is not used, we will prove that the expected runtime is at 
least superpolynomial. {\color{black}$\textsc{HiddenPath}\colon\{0,1\}^{n}\rightarrow 
\mathbb{R}$} can be described by a series of modifications to the well-know 
\textsc{ZeroMax} function. The distinguishing solutions are those with 
five 0-bits and those with $n-1$ 0-bits, 
along with $\log{n}-3$ solutions of the form $1^{n-k}0^{k}$ for $5\leq k \leq 
\log{n} +1$ (called \textsc{Sp} points). The solutions with exactly $n-1$ 0-bits constitute the local 
optima of \textsc{HiddenPath} (called \textsc{LocalOpt}), and the solutions with exactly five 0-bits form a gradient with fitness increasing with more 0-bits in the rightmost five bit positions. {\color{black}Given that $|x|_0$ and $|x|_1$ respectively denote the number of 0-bits and 1-bits in a bit string $x$, \textsc{HiddenPath} is formally defined as in Definition \ref{def:hiddenpath}. \textsc{HiddenPath} is illustrated in Figure~\ref{hp} where \textsc{Opt} shows the global optimum.

\begin{definition} \label{def:hiddenpath}
Given the definitions of \textsc{Sp} and \textsc{LocalOpt} as above, for any positive constant $\epsilon<1$, the \textsc{HiddenPath} function is defined for all $x \in\{0,1\}^n$ by

\begin{displaymath}
\textsc{HiddenPath}(x)= 
\begin{cases}
		n-\epsilon + \frac{\sum_{i=n-4}^n (1-x_i)}{n}  & \text{if}\; |x|_0=5 \text{ and } x \neq 1^{n-5}0^5,\\
        0 & \text{if}\; |x|_0<5 \text{ or } |x|_0=n,\\
       n-\epsilon+\epsilon k/\log n & \text{if } x \in \textsc{Sp}:=\{x \mid x=1^{n-k}0^k\; and 
\; 5 \leq k \leq \log{n}+1\},  \\
       n & \text{if}\; x \in \textsc{LocalOpt}:=\{x \mid |x|_0=n-1\},\\
      |x|_0 & \text{otherwise}.
 \end{cases}
\end{displaymath}

\end{definition}
}

Since the all 0-bits string returns 
fitness value zero, there is a drift towards solutions with $n-1$ 0-bits
while the global 
optimum (\textsc{Opt}) is the $1^{n-\log{n}-1}0^{\log{n}+1}$ bit string. The solutions 
with exactly five 0-bits work as a net that stops any static hypermutation 
that has an input solution with less than five 0-bits. {\color{black}The path to the global optimum} consists of  $\log{n}-3$ Hamming neighbours and the first 
solution on this path has five 0-bits.  

 \begin{figure}[t!]
 \centering
  \includegraphics[width=0.6\textwidth]{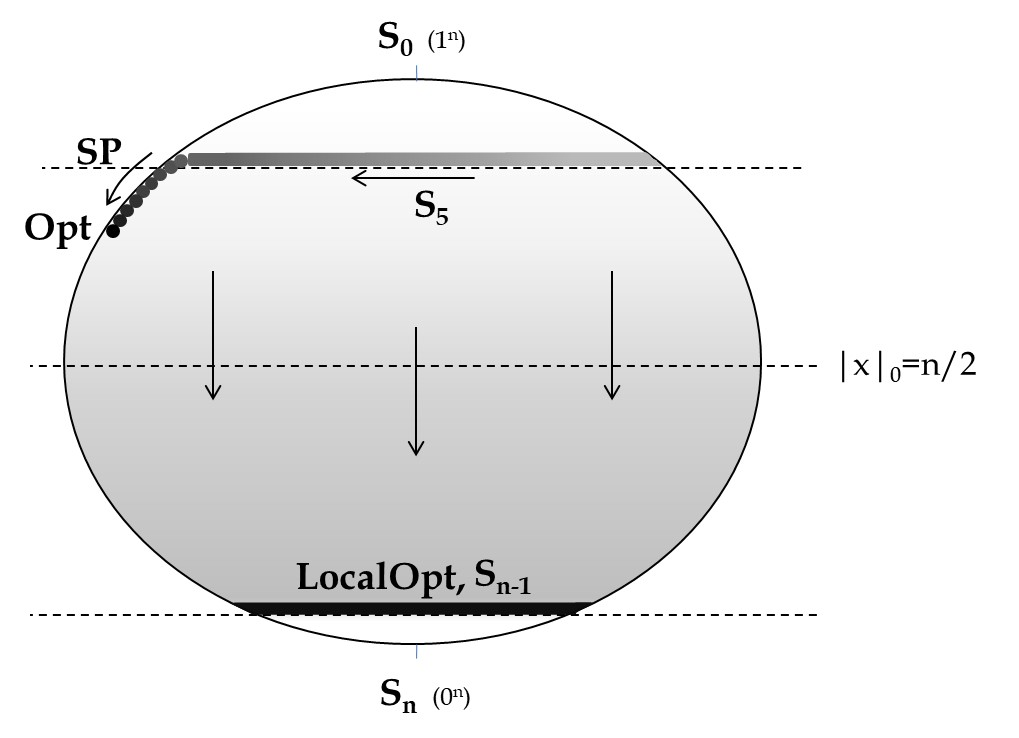}
 \caption{{\color{black} The \textsc{HiddenPath} function (Definition \ref{def:hiddenpath}). The set $S_i$ corresponds to the set of solutions with $i$ 0-bits while the set \textsc{Sp} refers to the set of  solutions in the form $1^{n-k}0^{k}$ for $5\leq k \leq \log{n}+1$. Fitness increases with darker shades of gray.}}
 \label{hp}
 \end{figure}
 
\begin{theorem} \label{th:hiddenpath}
For $c=1$, $dup=1$, $\mu=O(\log n)$ and $\tau=\Omega(n^2 \log n)$, Opt-IA needs expected $O(\tau \mu n+\mu n^{7/2})$ fitness function evaluations to optimise \textsc{HiddenPath}.
\end{theorem}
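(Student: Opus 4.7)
The plan is to decompose the analysis into four phases tracking the population through distinct regions of the \textsc{HiddenPath} landscape, and to show that only a constant number of ageing cycles suffice for some individual to discover the hidden path and climb to the optimum.

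First I would argue, by an artificial fitness levels argument analogous to that of Theorem~\ref{th:optiallfunctiongendive} applied to the ``\textsc{ZeroMax}''-style slope given by the \emph{otherwise} branch of \textsc{HiddenPath}, that each of the $\mu$ individuals climbs to the level $|x|_0 = n-2$ (just below the local optima) in $O(n^2 \log n)$ expected fitness evaluations. Since $\tau = \Omega(n^2 \log n)$, no individual is removed by ageing during this climb with high probability, exactly as in the proof of Theorem~\ref{th:optiallfunctiongendive}.

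The crux of the proof will be to show that from $|x|_0 = n-2$, one hypermutation with FCM ends in the 5-zero set $S_5 := \{x : |x|_0 = 5\}$ with probability $\Theta(1/n)$, comparable to the probability $2/n$ of jumping instead to a local optimum. The dominant scenario to analyse is: exactly one of the two 1-bits is flipped at some step $j_1^\ast \in \{3,\ldots,n-7\}$ and the remaining $n-6$ flipped bits are 0-bits in any order. I would verify that $j_1^\ast \geq 3$ keeps the running lead $o_j - z_j$ strictly negative at every intermediate step (so FCM does not stop at an $|x|_0 \in \{n-1,n-2\}$ state of fitness $\geq n-2$) and that $j_1^\ast \leq n-7$ prevents FCM from first stopping at an earlier $|x|_0 = 5$ reached via an all-0-bit prefix at step $n-7$. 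Counting these orderings and dividing by the total number of length-$(n-5)$ ordered sequences gives probability $10(n-9)/(n(n-1)) = \Theta(1/n)$. Consequently, conditional on leaving the $|x|_0 = n-2$ level, each individual reaches $S_5$ with constant probability, and with $\mu$ individuals climbing per cycle at least one lands in $S_5$ with probability $\Omega(1)$. Since each ageing cycle consists of $O(\tau)$ generations of $\mu$ hypermutations of $O(n)$ evaluations, it costs $O(\tau \mu n)$ evaluations, and only $O(1)$ cycles are required in expectation, yielding the $O(\tau \mu n)$ term.

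Once an individual is in $S_5$ with fitness $n - \epsilon + k/n$, I would note that it is preserved by the $(\mu+\lambda)$-selection because it outranks any still-climbing individual (fitness $\leq n-2$) and coexists with any remaining local optima (fitness $n$). A strict gradient improvement $k \to k+1$ requires a specific 2-bit mutation (one 1-bit in the last five positions, one 0-bit in the first $n-5$), occurring with probability $\Theta((5-k)^2/n^2)$; summing inverse probabilities from $k=0$ to $k=4$ yields $O(n^2)$ expected hypermutations to traverse the gradient to $1^{n-5}0^5$. From there the \textsc{Sp} path is climbed by single-bit flips of the rightmost 1-bit, each with probability $\Theta(1/n)$, over $\log n - 3$ steps, for $O(n \log n)$ further hypermutations. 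Because every strict improvement resets the individual's age and the expected per-improvement waiting time is much less than $\tau$, a standard Markov/Chernoff argument ensures the leading individual survives the climb with probability $1 - o(1)$. The combined cost of this phase is at most $O(\mu n^3)$, comfortably within the $O(\mu n^{7/2})$ term of the stated bound.

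The main obstacle will be the precise combinatorial analysis in the escape phase: one must enumerate exactly those hypermutation sequences whose first constructive state is at $|x|_0 = 5$, excluding the three possible earlier stopping events (a first 1-bit flip landing at $|x|_0 = n-1$; an equal-fitness return to $|x|_0 = n-2$ via some balanced partial prefix; and an all-0-bit prefix arriving at $|x|_0 = 5$ at step $n-7$ instead of the intended step $n-5$). A secondary delicate point will be to confirm that the $S_5$ leader is not displaced by competing local optima via selection and is not removed by ageing during its slow gradient climb; both reduce to the fitness ordering $n > n-\epsilon > n-2$ and the slack between $\tau$ and the per-improvement waiting times.
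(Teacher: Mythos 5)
Your route is genuinely different from the paper's. The paper lets the population be absorbed by the local optima ($|x|_0=n-1$, fitness $n$), waits for all b-cells to reach age $\tau$ simultaneously, and uses the ageing-induced mass extinction to leave a single surviving clone that has hypermutated all $n$ bits (an $S_1$ point), which then seeds $S_5$; you instead try to reach $S_5$ by a direct FCM-truncated hypermutation launched from the level $|x|_0=n-2$. The combinatorial event you isolate is real and your count $10(n-9)/(n(n-1))$ is a valid lower bound; in fact you undercount substantially, since the walk descends to $S_5$ essentially whenever the first 1-bit is not flipped in the first two mutation steps (the only intermediate points with fitness $\geq n-2$ other than $S_5$, \textsc{Sp} and the optimum are at $|x|_0\in\{n-2,n-1\}$, reachable only at steps $1$, $2$ and $4$), so the true probability is $1-O(1/n)$ rather than $\Theta(1/n)$.

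The genuine gap is in what happens afterwards. You assert that an $S_5$ individual ``coexists with any remaining local optima (fitness $n$)'' while it spends $\Theta(n^2)$ expected generations traversing the gradient. It does not: a locally optimal b-cell produces another, distinct, locally optimal b-cell with probability $\Theta(1/n)$ per generation (flip the unique 1-bit and then any 0-bit, in either order), these offspring have fitness $n$ and are always retained, so starting from a single \textsc{LocalOpt} individual the population is monopolised by fitness-$n$ individuals within $O(n\log\mu)$ generations --- long before the $S_5$ lineage, whose fitness stays below $n$ until the global optimum is reached, completes even one of its five $\Theta(n^2)$-expected-time gradient improvements, at which point it is expelled by selection. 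Under your own estimate that each departure from level $n-2$ reaches \textsc{LocalOpt} with probability $2/n$ against $\Theta(1/n)$ for $S_5$, a constant fraction of the $\mu$ individuals land on \textsc{LocalOpt}, so this displacement occurs with probability $1-o(1)$, and each subsequent ``ageing cycle'' fails for the same reason, so the claimed $O(1)$ expected number of cycles does not follow. To close the argument you would have to either (i) prove that with probability $1-o(1)$ no individual ever samples a \textsc{LocalOpt} point before the optimum is found --- which needs the sharper $1-O(1/n)$ estimate above plus an argument that climbers near level $n-2$ are expelled within $O(1)$ generations of the first $S_5$ point entering the population, and a bound on back-jumps from $S_5/\textsc{Sp}/S_{n-5}$ points over the whole $O(n^{5/2})$-generation climb --- or (ii) make the \textsc{LocalOpt} absorption the main case, as the paper does, and use the synchronised death of the aged population together with the surviving complemented clone as the entry point to $S_5$. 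As written, the proposal establishes neither.
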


\begin{proof}
For convenience we will call any solution with $i$ 0-bits (expect the \textsc{Sp} solutions) an $S_i$ solution.
After $O(n \log{n})$ generations in expectation, an $S_{n-1}$ 
solution 
is found by optimising \textsc{ZeroMax}. Assuming the global optimum is not 
found first, consider the generation when an 
$S_{n-1}$ solution is found for the first time. Another $S_{n-1}$ solution is 
created and accepted by Opt-IA with probability at least  $1/n$ since it is 
sufficient to flip the single 1-bit in the first mutation step and any 0-bit 
in the second step will be flipped next with probability 1. Thus, $S_{n-1}$ solutions take over the population in expected 
$O( n \mu)$  generations. Since, apart from the optimum, no other solution has 
higher fitness than $S_{n-1}$ solutions, the 
population consists only of $S_{n-1}$ solutions after the takeover 
occurs. A solution reaches age $\tau$ before the takeover only
with probability $2^{\Omega(\sqrt{n})}$, due to Markov's inequality
applied iteratively for $\Omega(\sqrt{n})$ consecutive phases of length 
$\Theta(n \mu)$. 
We now bound the expected time until {\color{black}the entire} population has the same 
age. Considering that the probability of creating another $S_{n-1}$ 
solution is $\Theta(1/n)$, the probability of creating two copies in one 
single generation is $\binom{\mu}{2} O\left(\frac{1}{n^2}\right)=O \left( 
\frac{\log ^2 
n}{n^2}\right)$. With constant probability this event does not happen in 
$o(n^2/\log^2 n)$ generations. Conditional on that at most one 
additional $S_{n-1}$ solution is created in every generation, we can follow a 
similar argument as in the proof of Theorem~\ref{RLSp}. Hence, we can show that in expected 
$O(\mu^3 n)$ iterations after the takeover, the whole population reaches the 
same age.  
When the population of $S_{n-1}$ solutions with the same age reaches age 
$\tau$, with probability $1/\mu \cdot \left(1-(1/\mu)\right)^{2\mu-1}= \Theta(1/\mu)$ 
a single new clone survives while the rest of the population dies. With 
probability $1-O(1/n)$ the survived clone has hypermutated all $n$ bits 
(i.e., the survived clone is an $S_{1}$ solution). In the following 
generation, the population consists of an $S_1$
solution and $\mu -1$ randomly sampled solutions. {\color{black}With probability 1, the 
$S_1$ solution produces an $S_{5}$ solution via hypermutation. On the other hand, with 
overwhelming probability the randomly sampled solutions still have fitness 
value $n/2 \pm O(\sqrt{n})$, hence the $S_{1}$ solution is removed from the 
population while the $S_{5}$ b-cell is kept. Overall, after $\Theta(\mu)+o(1)$ expected restarts an $S_5$ solution will be found in a total expected runtime of 
$O( \mu \cdot(n \log n + \mu n + \mu^3 n + \tau + 1))= O(\mu \tau)$ generations. We momentarily ignore the event that the $S_5$ solution reaches an 
$S_{n-1}$ point via hypermutation. 

Now, the population 
consists of a single 
$S_5$ solution and $\mu-1$ solutions with at most $n/2 +O(\sqrt{n})$ 
0-bits. We want to bound the expected time until $S_5$ solutions take 
over the population conditional on no $S_{n-1}$ solutions being created. Clones 
of any  $S_5$ solutions are also $S_5$ solutions after hypermutation if one of 
the first two bits to be flipped is a $0$ and the other is a $1$, which happens 
with $O(1/n)$ probability. Moreover, if the outcome of
hypermutation is 
neither an $S_5$, an {\color{black} \textsc{Sp} nor} an $S_{n-1}$ solution, then it is an $S_{n-5}$ 
solution since 
all $n$ bit-flips will have been executed. Since $S_{n-5}$ solutions have 
higher fitness 
value than the randomly sampled solutions, they stay in the population.  In the 
subsequent generation, if hypermutation does not improve an $S_{n-5}$ 
solution (which happens with probability $O(1/n)$), it executes  $n$ 
bit-flips to create yet another $S_{5}$ solution unless the \textsc{Sp} path is found. 

This feedback causes the number of $S_{n-5}$ and 
$S_{5}$ solutions to double in each \mbox{generation} with constant probability 
until they collectively take over the population in $O(\log{\mu})$ generations 
in expectation. 
Then, with constant probability all the $S_{n-5}$ solutions produce an $S_{5}$ 
solution via hypermutation and consequently the population consists 
only of $S_{5}$ solutions. Since the takeover happens in expected 
$O(\log{\mu})$ generations, the probability that it fails to complete in 
$O(\log{\mu})$ generations for consecutive $\Omega(\sqrt{n})$ times is 
exponentially small.  Hence, in $O(\sqrt{n} \log \mu)$ generations w.o.p. (by 
applying Markov's inequality iteratively \cite{OlivetoBookChapter}) {\color{black}the entire 
population} are $S_5$ solutions conditional on $S_{n-4}$ solutions not being created 
before. Since the probability that the static hypermutation 
creates an $S_{n-4}$ solution from an $S_{n-5}$ solution is less than $(4/n) \cdot \mu$, and the probability that it happens in $O(\sqrt{n}\log \mu)$ generations is less than $(4/n) \cdot \mu \cdot O(\sqrt{n}\log \mu)=o(1)$, the takeover occurs with high probability, i.e., $1-o(1)$.

After the whole population consists of only $S_5$ solutions, except 
for the global optimum and the local optima, only other points on the gradient would 
have higher fitness. The probability of improving on the gradient is at least 
$1/n^2$ which is the probability of choosing two specific bits to be flipped (a 
0-bit and a 1-bit) leading towards the b-cell with the best fitness on the 
gradient. Considering that the total number of improvements on the gradient is 
at most $5$, in $O(n^2 \cdot 5)= O(n^2)$ generations in expectation the first 
point of \textsc{Sp} will be found. Now we consider the probability of jumping back to 
the local optima from the gradient, which is 
$\Theta(1)/{\binom{n}{n-4}}=O(n^{-4})$, before finding the first point of the 
\textsc{Sp}.
Due to the {\color{black}Markov's} inequality applied iteratively over 
$\Omega(\sqrt{n})$  consecutive phases of 
length $\Theta(n^2)$ with an appropriate constant, the 
probability that the time to find the first point of \textsc{Sp} is more than 
$\Omega(n^{5/2})$ is less than $2^{-\Omega(\sqrt{n})}$. The probability of 
jumping to a local optimum in $O(n^{5/2})$ steps is at most 
$n^{(-4)}O(n^{5/2})=O(n^{-3/2})$ by the union bound. Therefore, \textsc{Sp} is found 
before any local optima in at most $O(n^{5/2})$ generations with probability 
$1-o(1)$. Hence, the previously excluded event has now been taken into account.}

After $1^{n-5}0^{5}$ is 
added to the population, the best solution on the path is improved with 
probability $\Omega(1/n)$ by hypermutation and in expected $O(n \log {n})$ 
generations the global optimum is found. Since all \textsc{Sp}
and  $S_{5}$ solutions have a Hamming distance smaller than $n-4$ 
and larger than $n/2$ to any $S_{n-1}$ solution, the probability that a 
local optimum  is found before the global optimum is at most $O(n \log{n}) \cdot 
\mu n \binom{n}{4}^{-1}= o(1)$ by the union bound. Thus with probability $1-o(1)$ the time to find the optimum is $O(n \log n)$ generations. 
We pessimistically assume 
that we start over when this event does not occur, which implies that the whole 
process until that point should be repeated {\color{black}$1/(1-o(1))$} times in expectation. Overall the dominating term in the runtime is $O(\tau+ n^{5/2})$ generations. By multiplying this time with 
the maximum possible wasted fitness evaluations per generation ($\mu c n$), the 
upper bound is proven. \end{proof}

In the following two theorems we show that hypermutations and ageing used in conjunction are essential.

\begin{theorem}\label{thm:hard1}
Opt-IA without ageing (i.e., $\tau = \infty$) with $\mu=O(\log{n})$ and 
$dup=O(1)$ cannot 
optimise \textsc{HiddenPath} in less than $n^{\Omega(\log{n})}$ expected fitness function evaluations. 
\end{theorem}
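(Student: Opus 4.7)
The plan is to show that, without ageing, Opt-IA becomes trapped on the plateau $\textsc{LocalOpt}$ of strings with $|x|_0=n-1$ (fitness $n$), from which escape requires reaching an Sp point of fitness $\geq n$, namely $1^{n-\log n}0^{\log n}$ (fitness exactly $n$) or the optimum $1^{n-\log n -1}0^{\log n +1}$ (fitness $n+\epsilon/\log n$). Both targets lie at Hamming distance $\geq n-\log n -2$ from every $\textsc{LocalOpt}$ point, so one static hypermutation escapes with probability at most $1/\binom{n}{\log n +2}=n^{-\Omega(\log n)}$.

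First I would argue that with probability $1-o(1)$ the initial population contains no Sp point and no search point within Hamming distance $\log n$ of the optimum: these sets have cardinality $\mathrm{poly}(n)$ whereas the search space has size $2^n$, so a union bound over the $\mu=O(\log n)$ random initialisations suffices, and by Chernoff every initial $|x|_0$ is $n/2 \pm O(\sqrt n)$. I would then analyse the climb to the trap. Outside the special sets $S_{\le 4}\cup S_5 \cup \textsc{Sp}\cup \textsc{LocalOpt}$, the fitness equals $|x|_0$, so Opt-IA behaves essentially like \oneoneiahypequal on $\textsc{ZeroMax}$; following the arguments used in the proof of Theorem~\ref{optiaallfunc}, a configuration in which every parent belongs to $\textsc{LocalOpt}$ is reached within $\mathrm{poly}(n)$ expected generations. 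Since $(\mu+\lambda)$-selection is fitness-monotone, every parent during the climb has fitness at least $n/2-O(\sqrt n)$ and so $|x|_0\geq n/2-O(\sqrt n)$ throughout; hence the Hamming distance from any parent to any Sp point or to the optimum is at least $\Omega(n)$, shrinking only to $n-\log n -2$ once the climb ends on $\textsc{LocalOpt}$. The probability that a hypermutation produces an Sp point from such a parent is at most $1/\binom{n}{d}\leq n^{-\Omega(\log n)}$ uniformly, and a union bound over the polynomially many hypermutations of the climb gives that no Sp point is visited before $\textsc{LocalOpt}$ with probability $1-o(1)$.

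Next I would bound the per-generation escape probability. At any $x\in\textsc{LocalOpt}$ with its unique 1-bit at position $i$, fitness is $n$, and by the FCM convention (equal-or-better fitness) the constructive targets are exactly (a) other $\textsc{LocalOpt}$ points, (b) the Sp point $1^{n-\log n}0^{\log n}$ and (c) the optimum $1^{n-\log n-1}0^{\log n+1}$: no $S_5$ point, no Sp point with $k<\log n$, and no $S_j$ with $j\in\{0,\dots,4,n\}\cup\{6,\dots,n-2\}$ has fitness $\ge n$. A direct enumeration of the first two mutation steps shows (a) can only arise at step 2 (flip the lone 1-bit then any 0-bit, or a 0-bit then the lone 1-bit; any non-trivial tracking of $|x|_0$ after step 2 confirms $|x|_0$ cannot return to $n-1$ later in the same hypermutation), so (a) never escapes the plateau. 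For (b) and (c) the Hamming distance from $x$ equals $n-k-1$ or $n-k+1$ for $k\in\{\log n,\log n+1\}$; reaching such a target in one hypermutation requires the first $d\geq n-\log n -2$ uniformly sampled distinct bits to coincide with a prescribed $d$-set, an event of probability $1/\binom{n}{d}\leq 1/\binom{n}{\log n +2} = n^{-\Omega(\log n)}$. A union bound over the two targets and the $\mu\cdot dup = O(\log n)$ hypermutations performed in one generation gives per-generation escape probability $n^{-\Omega(\log n)}$.

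Combining, with probability $1-o(1)$ the algorithm enters the trap without ever visiting an Sp point, and from there a standard geometric tail argument yields expected escape time $n^{\Omega(\log n)}$ generations; since every generation consumes at least $\mu\cdot dup =\Omega(1)$ fitness evaluations, the law of total expectation delivers the claimed $n^{\Omega(\log n)}$ lower bound on the expected runtime. The main technical obstacle will be the climb step: one must rule out that among the polynomially many hypermutations performed before $\textsc{LocalOpt}$ is reached, none accidentally lands on an Sp point. The key simplification is the uniform Hamming-distance bound together with the observation that a static hypermutation reaches a point at Hamming distance $d$ from its parent only by selecting a prescribed $d$-subset of coordinates as its first $d$ flips, which happens with probability at most $1/\binom{n}{d}$ regardless of the subsequent stopping behaviour; this immediately feeds the union bound over climb generations.
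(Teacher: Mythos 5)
There is a genuine gap in your climb phase, and it is precisely the part of the argument that the paper's proof is structured to avoid. You claim that every parent encountered before \textsc{LocalOpt} takes over lies at Hamming distance $d=\Omega(n)$ from every dangerous point, and that therefore each hypermutation reaches one with probability at most $1/\binom{n}{d}= n^{-\Omega(\log n)}$ ``uniformly'', so that a union bound yields a clean arrival at the trap with probability $1-o(1)$. This fails for two reasons. First, $1/\binom{n}{d}$ is small only when $\min(d,n-d)$ is large; for parents with $n-O(1)$ 0-bits --- which the \textsc{ZeroMax} climb must pass through, since FCM improvements increase $|x|_0$ by exactly one --- the relevant targets sit at distance $n-O(1)$, where $1/\binom{n}{d}=1/\binom{n}{n-d}$ is only polynomially small. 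Second, and more seriously, you never account for the $S_5$ layer during the climb (you only exclude it from the escape analysis on \textsc{LocalOpt}, where it is indeed harmless). Every point with exactly five 0-bits has fitness at least $n-\epsilon$, hence is a \emph{constructive} mutation for any generic parent with $j\le n-2$ 0-bits and fitness $j$. For such a parent with $j\ge n-5$, if the flip sequence never returns the 0-bit count to $j$ (by the Ballot theorem this happens with probability $1-O((n-j)/n)$), then the $cn=n$ successive flips (the \textsc{HiddenPath} analysis uses $c=1$) drive the count down towards $n-j\le 5$, so the process necessarily evaluates, stops at, and returns a point with five 0-bits. Thus a single hypermutation of such a parent yields an accepted $S_5$ (or $1^{n-5}0^5\in\textsc{Sp}$) point with probability $1-O(1/n)$, not $n^{-\Omega(\log n)}$, and from there the gradient and \textsc{Sp} lead to the optimum in polynomial time. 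Your high-probability entrapment claim is therefore unsupported (and very likely false). A similar problem arises with complements of \textsc{Sp} points, which, once accepted as ordinary \textsc{ZeroMax} improvements, hypermutate into \textsc{Sp} points with probability $1-O(1/n)$.

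The paper's proof never claims the trap is reached with probability $1-o(1)$; it only \emph{lower-bounds} the probability of a clean arrival. Bad events are negligible while all individuals have between $c_1\log n$ and $n-c_1\log n$ 0-bits; over the next $O(\log n)$ levels the conditional probability of ``improve before a bad event'' is only $\Omega(1/(dup\cdot\mu))$ per level; and over the final ten levels the good event (the first flip hits a 1-bit, ten times consecutively, with no other individual interfering) is forced at cost $\Omega(n^{-10})$. The resulting arrival probability $(1-o(1))\,n^{-10}(dup\cdot\mu)^{-O(\log n)}$ is tiny but suffices, because the law of total expectation multiplies it by the conditional escape time $\binom{n}{\log n}=n^{\Omega(\log n)}$. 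Your escape analysis on \textsc{LocalOpt} itself (identifying $1^{n-\log n}0^{\log n}$ and the optimum as the only non-plateau constructive targets) is correct and matches the paper's, but without a valid bound on the probability of actually reaching the all-\textsc{LocalOpt} configuration the stated lower bound does not follow from your argument.
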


\begin{proof}
The only points with higher fitness than solutions with more than $n/2$ 0-bits are \textsc{Sp} points and $S_5$ points. 
If all solutions in the population have less than $n - 
c_1\log{n}$  0-bits {\color{black}or} less than $n - 
c_1\log{n}$ 1-bits for some $c_1>1$, then the Hamming distance of any solution in the population to any \textsc{Sp} solution  or an $S_5$ solution
is at least $\Omega(\log{n})$.  Since there are no other improving 
solution, the probability that an \textsc{Sp} or an $S_5$ solution will be 
discovered in a single hypermutation operation is  
exponentially small according to the last statement of Lemma~\ref{lem:fcm}. As 
a result, conditional on not seeing these points, the algorithm will find a 
solution with $n-c_1\log n$ 0-bits in $O(\mu \cdot dup\cdot n^2 \log n)$ 
fitness function evaluations by optimising \textsc{ZeroMax} (i.e., Theorem 
\ref{th:optiallfunctiongendive}). In the rest of the proof we will calculate the 
probability of reaching an $S_{n-1}$ solution before finding either an \textsc{Sp} point 
or $S_5$ point, or a complementary solution of an \textsc{Sp} point. The latter 
solutions, if accepted, with high probability would hypermutate into \textsc{Sp} 
points. We call an event \textit{bad} where any of the  mentioned points are 
discovered.

{\color{black}
Any solution in the search space can have at most two Hamming 
neighbours on \textsc{Sp} or at most two Hamming neighbours that their 
complementary bit strings are on \textsc{Sp}. The probability of sampling 
any of these neighbours is in the order of $O(1/n)$ since it is necessary to 
flip a specific bit position. Hamming distance to the remaining \textsc{Sp} points 
and their complementary bit strings are at least two. 
{\color{black}The probability of reaching a particular solution with 
Hamming distance at least two is at most $\binom{n}{2}^{-1}=O(1/n^2)$, 
since two particular bits need to be flipped. Excluding the two Hamming 
neighbours (which are sampled with probability $O(1/n)$), the probability of 
reaching any other \textsc{Sp} point is $O(1/n^2)\cdot O(\log{n})=O(\log{n}/n^2)$ by 
the union bound. Now, taking the Hamming neighbours into consideration}, the 
probability that any of the $\log{n}-3$ 
\textsc{Sp} points (or their complementary bit strings) are discovered is $O(1/n)+O(1/n^2)\cdot O(\log{n})=O(1/n)$. For any initial solution with less than $n-10$ 0-bits, the probability
of finding a solution with five 0-bits is at most $\binom{n}{6}^{-1}\binom{n}{5}= O(1/n)$. Since the probability of reaching an $S_5$ point from solutions with more than $n-10$ 0-bits may be much larger, we first calculate the probability of finding $n-11$ 0-bits before a bad event occurs.

The probability of finding a solution which improves the \textsc{ZeroMax} value is at least $\Theta(1/n)$ (even when we exclude the potential improvements whose complementary bit strings are on \textsc{Sp}). Since at every static hypermutation, the probabilities of finding an $S_5$ solution, an \textsc{Sp} solution or its complementary bit string are all in the order of $O(1/n)$ (up to $n-10$ 0-bits), the conditional probability that a solution which improves the current best \textsc{ZeroMax} value is found before any other improvement  is in the order of $\Theta(1/n)/(\Theta(1/n)+O(dup\cdot\mu/n) )=\Omega(1/(dup\cdot\mu))$. The probability that it  happens $c_1 \log{n}$ times immediately after the first solution with more  than $n-c_1 \log{n}$ 0-bits is added to the 
 population is at least $\Omega(dup\cdot\mu)^{-c_1(\log{n})}=(dup\cdot\mu)^{-O(\log{n})}$. This 
sequence of $c_1 
\log{n}$ improvements implies that a solution with $n-11$ 0-bits is added to the 
population. We now consider the probability that one individual finds the local optimum (i.e., an $S_{n-1}$ solution) by improving its \textsc{ZeroMax} value in 10 consecutive generations and that none of the other individuals improve in these generations. The probability that the current best individual improves in the next step is at least $1/n$ and the probability that the other individuals do not improve is at least $(1-c/n)^{\mu} \geq 1/e$. Hence, the probability that this happens consecutively in the next 10 steps is at least $(1/ne)^{10}=\Omega(n^{-10})$.

Once a locally optimal solution is found, with 
high 
probability it takes over the population before the optimum is found and the 
expected runtime conditional on the current 
population consisting only of  solutions with $n-1$ 0-bits is at least
$\binom{n}{\log{n}} \geq \frac{(n- 
\log{n})^{\log{n}}}{(\log{n})^{\log{n}}}$.
By the law of total expectation, the unconditional expected runtime is lower bounded by
$ E(T) \geq \left(1-o\left(1\right)\right) n^{-10} (dup\cdot\mu)^{-O(\log{n})} 
\frac{(n-\log{n})^{\log{n}}}{(\log{n})^{\log{n}}}$.
This expression is in the order of 
$n^{\Omega (\log{n})}$  for any  $\mu=O(\log{n})$ and $dup= O(1)$.
}
\end{proof}

\begin{theorem}\label{thm:hard2}
With Probability at least $1-e^{-\Omega(n)}$, Opt-IA using SBM and ageing cannot optimise \textsc{HiddenPath} in less than $n^{\Omega(n)}$ fitness function evaluations. 
\end{theorem}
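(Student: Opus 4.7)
The plan is to show that Opt-IA with SBM cannot, with overwhelming probability, produce any b-cell in the distinguishing set $G := \textsc{Sp}\cup S_5\cup\{\textsc{Opt}\}$ within $n^{\Omega(n)}$ fitness function evaluations. Since the fitness landscape forces any algorithm to visit some point of $G$ before locating \textsc{Opt}, this proves the theorem. A b-cell in $G$ can enter the population only via a random (re-)initialisation after ageing or via an SBM mutation of an existing parent, and I will bound the per-event probability of each channel and then union-bound over the time horizon.

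First, a uniformly random bit string lies in $G$ with probability at most $|G|/2^n = O(n^5)/2^n = 2^{-\Omega(n)}$ (using $|\textsc{Sp}|=\log n-3$, $|S_5|=\binom{n}{5}$, and $|\textsc{Opt}|=1$), and by a Chernoff bound also satisfies $|x|_0\in[n/4,3n/4]$ with probability $1-2^{-\Omega(n)}$. A union bound over all $O(T\mu)$ re-initialisation events in the first $T$ iterations ensures every randomly sampled b-cell avoids $G$ and has typical Hamming weight with probability $1-T\mu\cdot 2^{-\Omega(n)}$. Second, from any parent $x$ with $|x|_0\ge n/4$, an SBM offspring $y\in G$ must differ from $x$ in at least $n/4-(\log n+1)=\Omega(n)$ positions, because every point of $G$ has at most $\log n+1$ zeros. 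The probability that SBM flips $k=\Omega(n)$ bits in one step is at most $\binom{n}{k}(1/n)^k\le(e/k)^k=n^{-\Omega(n)}$, so a union bound over the $O(T\mu\cdot dup)$ SBM events yields a mutation-channel failure probability at most $T\mu\cdot dup\cdot n^{-\Omega(n)}$, conditional on every parent satisfying $|x|_0\ge n/4$.

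To close the argument I would verify this conditional assumption. Restricted to the complement of $G\cup\textsc{LocalOpt}$, \textsc{HiddenPath} coincides with $|x|_0$, and local optima carry the higher fitness value $n$, so elitist $(\mu+1)$-selection never discards an individual in favour of one with strictly smaller $|x|_0$ (unless the replacement is a local optimum, in which case $|x|_0=n-1$). Combined with Step~1 this means that throughout the run every surviving individual has either $|x|_0\ge n/4$ (if it descends from an initialisation satisfying the Chernoff bound) or $|x|_0=n-1$ (if it is a local optimum), so the hypothesis of the SBM bound is maintained. Choosing $T=n^{cn}$ for a sufficiently small constant $c>0$, both channel failure probabilities become $e^{-\Omega(n)}$, which establishes the theorem. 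The main obstacle will be this last step: controlling the interaction between ageing-induced resets and $(\mu+1)$-elitism so that the mixed population arising immediately after a reset — containing both ageing-survivor local optima and freshly sampled random individuals — preserves the invariant $|x|_0\ge n/4$ until the local optima retake the population, so that the per-step SBM jump bound $n^{-\Omega(n)}$ remains valid for the entire time horizon.
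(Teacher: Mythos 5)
Your proposal is correct and follows essentially the same route as the paper's proof: a Chernoff bound places the initial (and any re-initialised) individuals at Hamming distance $\Omega(n)$ from $\textsc{Sp}\cup S_5$, a single SBM step bridges such a distance only with probability $n^{-\Omega(n)}$, and since fitness coincides with $|x|_0$ off the target set, accepted improvements only move the population further away, so a union bound over the time horizon finishes the argument. The "obstacle" you flag at the end is dispatched in the paper by exactly this one-line monotonicity observation (any survivor of an ageing reset descends from a far-away parent and random refills are far away w.o.p.), so no additional machinery is needed.
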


\begin{proof}
{\color{black}
Since the number of 1-bits in an 
initial solution is binomially distributed  with expectation $n/2$ the probability that an initial solution has less than 
$n/3$ 1-bits is bounded above by $e^{-\Omega(n)}$ using Chernoff bounds. Hence, w.o.p., the 
population has a Hamming distance at least  $n/3-\log{n}-1>n/4$ from any 
solution on the path and any solution with five 0-bits 
(${\color{black}\textsc{Sp}}\cup S_5$) by the union bound. Therefore, the 
probability of finding either any of the $\log n$ points on 
{\color{black}\textsc{Sp}} or one of the $\binom{n}{5}$ points on $S_5$  is $\left 
(\log n +\binom{n}{5} \right) \cdot 1/n^{n/4} (1-1/n)^{n-n/4} \leq 
n^{-\Omega(n)}$. 

Since accepting any improvements, except for the ${\color{black}\textsc{Sp}}\cup S_5$
solutions, increases the distance to the ${\color{black}\textsc{Sp}}\cup S_5$ solutions, 
the probability of jumping to an ${\color{black}\textsc{Sp}} \cup S_5$ solution further decreases throughout 
the search process.
}
\end{proof}

%
%

\subsection{When Opt-IA is detrimental} \label{subsec:optiafail}
In this section we present a function class for which Opt-IA is inefficient. The class of functions, which we call \textsc{HyperTrap$_{\gamma}$}, {\color{black}is defined formally in Definition  \ref{def:hypertrap}}. \textsc{HyperTrap$_\gamma$} is inspired by the \textsc{Sp-Target} function introduced in~\cite{NeuSudWit2008} and used in~\cite{Jansen2011} as an example where hypermutations outperform SBM. Compared to \textsc{Sp-Target}, \textsc{HyperTrap$_{\gamma}$} has local and global optima inverted with the purpose of trapping hypermutations. Also there are some other necessary modifications to prevent the algorithm from finding the global optimum via large mutations. The parameter $\gamma$ defines the distance between the local optima and the path to the global optimum.

{\color{black}In \textsc{HyperTrap$_{\gamma}$}, the solutions with $|x|_1 < n/2$ are evaluated by \textsc{OneMax}. Solutions of the form $1^i0^{n-i}$ with $n/2 \leq i \leq n$, {\color{black}shape a short path which is called \textsc{Sp}}. The last point of \textsc{Sp}, i.e., $1^n$ is the global optimum ({\color{black}\textsc{Opt}}). Local optima of \textsc{HyperTrap$_{\gamma}$} {\color{black}(\textsc{LocalOpt})} are formed by the points with $|x|_1 \geq 3n/4$  which have a Hamming distance of at least 
$\gamma n$ to all points in \textsc{Sp}. Also, the points with $|x|_1=n/2$ are ranked among each other such that bit strings with more 1-bits in the beginning have higher fitness. This ranking forms a gradient 
from $0^{n/2}1^{n/2}$, which has the lowest fitness, to $1^{n/2-1}0^{n/2+1}$, 
which has the highest fitness. 
Finally, the fitness of points which do not belong to any of the mentioned 
sub-spaces is evaluated by \textsc{ZeroMax} (i.e., the fitness is $|x|_0$).}


{\color{black}
\begin{definition} \label{def:hypertrap}
Given the definitions of \textsc{Sp}, \textsc{Opt} and \textsc{LocalOpt} as above and {\color{black}$H(x, \textsc{Sp})$} showing the minimum Hamming distance of the individual $x$ to all \textsc{Sp} points, the \textsc{HyperTrap$_\gamma$} function with $0<\gamma\leq 1/8$ is defined for all $x\in\{0,1\}^n$ by

 \[
\textsc{HyperTrap}_{\gamma}(x):= 
\]
 \[
\begin{cases}
		|x|_1 & \text{if} \;\;|x|_1<n/2,\\
        n/2+ \frac{\sum_{i=1}^n (n-i)\cdot(x_i)}{n} & \text{if} \;\; |x|_1=n/2, \\
                n^2 \cdot |x|_1 & \text{if} \;\; x \in \textsc{Sp}:=\{x \mid x=1^i0^{n-i} \;\;\; 
{\color{black}\text{and}} \;\; n/2\leq i<n\},\\
		  n^3  & \text{if}\;\; x \in \textsc{LocalOpt}:=\{x \mid |x|_1 \geq 
3n/4 \;\; {\color{black}\text{and} \;\; H(x,\textsc{Sp})} \geq \gamma n\},\\
      n^4 & \text{if} \;\; x=1^n=\textsc{Opt},\\
      |x|_0 & \text{if} \;\;|x|_1>n/2 \;\; {\color{black}\text{and}} \;\; x \notin 
(\textsc{Sp} \cup \textsc{LocalOpt} \cup \textsc{Opt}).\\      
 \end{cases}
 \label{f}
\]
 \end{definition}
}

 \begin{figure}[t!]
 \centering
  \includegraphics[width=0.6\textwidth]{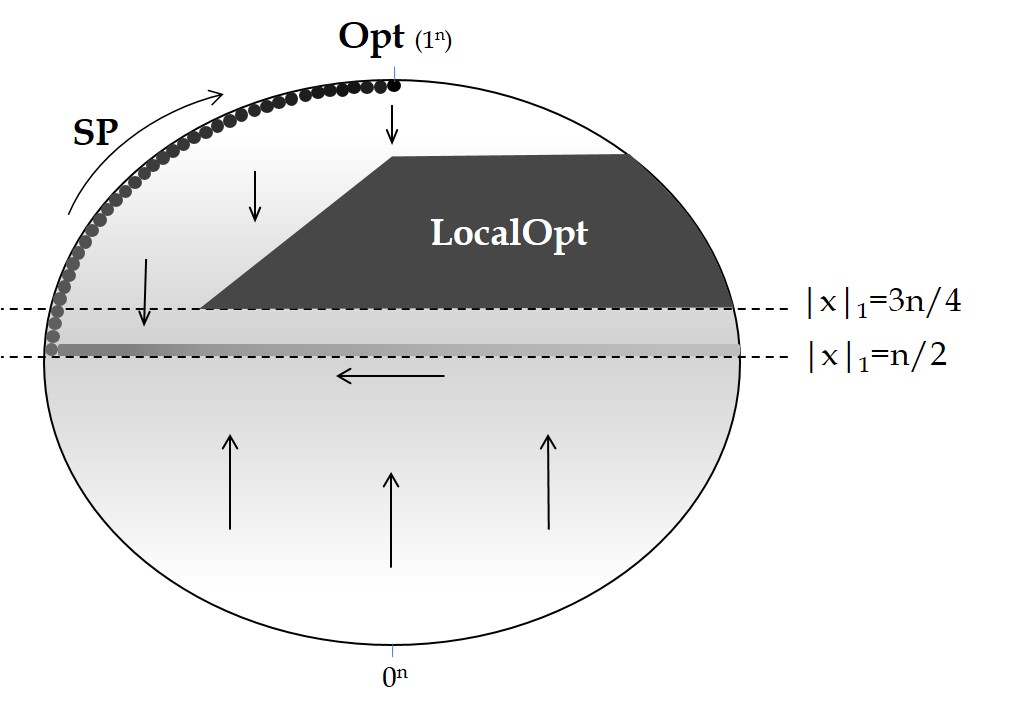}
 \caption{{\color{black}$\textsc{HyperTrap}_{\gamma}$ (Definition \ref{def:hypertrap}). \textsc{Sp} corresponds to the points in the form of $1^i0^{n-i}$ with $n/2 \leq i \leq n$. LocalOpt shows the locally optimal points which have $|x|_1 \geq 3n/4$ and Hamming distance of at least 
$\gamma n$ to all points in \textsc{Sp}. Fitness increases with darker shades of gray.}}
 \label{ht}
 \end{figure}
 
{\color{black}This function is depicted in Figure \ref{ht}}. We will show that there exists a $\textsc{HyperTrap}_{\gamma}$ such that Opt-IA with mutation potential $cn$ gets trapped in the local optima. 
\begin{theorem}
With probability $1-2^{-\Omega(\sqrt{n})}$, Opt-IA with mutation potential $c n$ cannot optimise $\textsc{HyperTrap}_{c/8}$ in less than exponential time.   
\end{theorem}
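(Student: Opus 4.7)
The plan is to decompose the theorem into two complementary claims: (A) with probability at least $1 - 2^{-\Omega(\sqrt{n})}$, some b-cell enters $\textsc{LocalOpt}$ before any b-cell reaches $\textsc{Opt}$; and (B) once a b-cell lies in $\textsc{LocalOpt}$, the expected time to reach $\textsc{Opt}$ is $2^{\Omega(n)}$. Combining these via the law of total probability yields the theorem.

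Claim (B) rests on the strict fitness ordering of $\textsc{HyperTrap}_{c/8}$: $\textsc{LocalOpt}$ has fitness $n^3$, every $\textsc{Sp}$ point has fitness $n^2 i \leq n^2(n-1) < n^3$, and only $\textsc{Opt}=\{1^n\}$ has strictly larger fitness $n^4$. Elitist selection therefore rejects every move from a $\textsc{LocalOpt}$ b-cell back into $\textsc{Sp}$, leaving $\textsc{Opt}$ as the sole strictly improving destination. For a single hypermutation applied to $x\in \textsc{LocalOpt}$ with $|x|_1=m\geq 3n/4$ to produce $1^n$ at any intermediate step, the first $n-m$ bit positions sampled without replacement must coincide exactly with the $n-m \leq n/4$ zero positions of $x$; by an argument analogous to Lemma~\ref{lem:fcm}, this has probability at most $1/\binom{n}{n-m} = 2^{-\Omega(n)}$. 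A union bound over the $\mu\cdot dup$ hypermutations per generation and over any polynomial number of generations preserves the exponentially small upper bound, giving an exponential expected escape time.

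For claim (A), I would first invoke Chernoff bounds to establish that with probability $1-2^{-\Omega(\sqrt{n})}$ each of the $\mu$ initial b-cells has $|x|_1 = n/2\pm O(n^{3/4})$ and lies at Hamming distance $\Omega(n)$ from every element of $\textsc{Sp}\cup\textsc{LocalOpt}\cup\{1^n\}$. Under the OneMax, ZeroMax and $|x|_1=n/2$ gradient components the population is then driven onto $\textsc{Sp}$ near $1^{n/2}0^{n/2}$ and begins to climb the short path. At each $\textsc{Sp}$ point $1^i 0^{n-i}$ the unique single-flip improvement is at position $i+1$, occurring in the first step of hypermutation with probability $1/n$; when this fails, I would argue that with probability $\Omega(1)$ the hypermutation proceeds through all $cn$ bit flips without encountering any constructive intermediate state and terminates at a string $y$ whose $cn$ flipped positions are essentially a uniform random subset of $[n]$. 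Hypergeometric concentration then gives $|y|_1$ concentrated around $i(1-2c)+cn \pm O(\sqrt{n})$, which for $i$ in a critical range determined by the algorithm's $c$ exceeds $3n/4$ with constant probability; a second concentration estimate using the choice $\gamma = c/8$ shows that $y$ is simultaneously at Hamming distance at least $cn/8$ from every $\textsc{Sp}$ point with probability $1-o(1)$, placing $y$ in $\textsc{LocalOpt}$. Since the algorithm must attempt $\Omega(n)$ progress flips along $\textsc{Sp}$ before any chance of reaching $1^n$, the compounded per-attempt trapping probability gives an overall trapping probability of at least $1-2^{-\Omega(\sqrt{n})}$.

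The main obstacle will be the coupled analysis of the hypermutation's terminal state distribution from a critical $\textsc{Sp}$ point: one must simultaneously rule out early FCM termination at another $\textsc{Sp}$ point or at $1^n$ during the $cn$-step flip sequence (using a union bound over Sp-distances and the $1/\binom{n}{t}$ probability of matching a specific flip pattern) and control the tail of the Hamming distance from $y$ to the nearest Sp point. The parameter choice $\gamma=c/8$ is what makes both ingredients work: small enough that $\textsc{LocalOpt}$ occupies a constant fraction of the $|y|_1\geq 3n/4$ layer, yet large enough that the typical hypermutation output lies outside every Hamming ball of radius $\gamma n$ centered on a $\textsc{Sp}$ point. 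A minor complication, handled by pessimistic restarting, is that the ageing operator may reinitialise b-cells mid-run; each such reinitialisation re-enters the trapping argument from a fresh typical state and costs only a polynomial factor which the exponential lower bound absorbs.
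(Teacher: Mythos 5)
Your overall skeleton (reach \textsc{Sp}, show the algorithm falls into \textsc{LocalOpt} before reaching $1^n$, then show escape from \textsc{LocalOpt} takes exponential time) matches the paper, and your Claim~(B) is essentially the paper's closing step. However, your trap-entry mechanism in Claim~(A) has a genuine gap: you place the b-cell into \textsc{LocalOpt} via the \emph{terminal} state $y$ of a full $cn$-step hypermutation, and your own concentration estimate $|y|_1\approx i(1-2c)+cn$ shows this cannot work for general mutation potential. Maximising $i(1-2c)+cn$ over the admissible $i\in[n/2,n]$ gives at most $n(1-c)$ when $c<1/2$ and at most $n/2$ when $c\geq 1/2$, so the requirement $|y|_1\geq 3n/4$ for membership in \textsc{LocalOpt} fails for every $c\geq 1/4$ --- in particular for $c=1$, where the terminal state is the exact complement and has at most $n/2$ ones. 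The paper's proof avoids this by exploiting FCM at an \emph{intermediate} mutation step: since a \textsc{LocalOpt} point has fitness $n^3$, strictly above every \textsc{Sp} fitness $n^2 i$, the hypermutation stops as soon as an intermediate sample lands in the trap. Starting from an \textsc{Sp} point with at least $99n/100$ ones whose first flipped bit is a prefix $1$ (probability at least $99/100$), the paper examines step $k=6cn/(30-5c)\leq n/4-n/100$: every solution sampled in the first $n/4-n/100$ steps still has more than $3n/4$ ones, and by Chernoff bounds the $k$-th sample has already accumulated at least $\gamma n$ flipped bits in the prefix of length $(3/4-\gamma)n$, hence Hamming distance at least $\gamma n$ to all relevant \textsc{Sp} points. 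This calibrated intermediate-step argument is the idea your proposal is missing, and without it the statement is not established for $c\geq 1/4$.

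Two secondary points. First, in Claim~(B) you assert $\binom{n}{n-m}^{-1}=2^{-\Omega(n)}$ without verifying $n-m=\Omega(n)$; this does hold, but only because the condition $H(x,\textsc{Sp})\geq\gamma n$ forces at least $\gamma n/2$ zeros into the length-$m$ prefix of any $x\in\textsc{LocalOpt}$ (the distance from $x$ to $1^m0^{n-m}$ is twice the number of prefix zeros), so you should say so. Second, your ``compounded per-attempt trapping probability'' does not explain where the $\sqrt{n}$ in the failure probability comes from: the paper obtains it by first bounding the probability of advancing by more than $\sqrt{n}$ positions along \textsc{Sp} in one accepted step by $2^{-\Omega(\sqrt{n})}$, which guarantees $\Omega(\sqrt{n})$ generations are spent near the end of the path and hence $\Omega(\sqrt{n})$ independent trapping attempts; without controlling these path jumps your union bound over attempts has no handle on the exponent.
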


\begin{proof}
{\color{black}
We will show that the population will first follow the \textsc{ZeroMax} (or \textsc{OneMax})  gradient until it samples a solution with $n/2$ 0-bits and then the gradient of  $\sum_{i=1}^n (n-i)\cdot(x_i)$ until it samples an \textsc{Sp} point with approximately $n/2$ 0-bits. Afterwards, we will prove that large jumps on  \textsc{Sp} are unlikely. Finally, we will show that with overwhelming probability a 
locally optimal solution is sampled before a linear number of \textsc{Sp} points are 
traversed. We optimistically assume that $\tau$ is large enough such that individuals do not die before finding the optimum.  

With overwhelmingly high probability, all randomly  
 initialised individuals have $(1/2 \pm \epsilon)n$ 0-bits for any 
arbitrarily small $\epsilon=\theta(1)$. Starting with such points, the 
probability of jumping to the global optimum is exponentially small according  to Lemma \ref{lem:fcm}. Being optimistic, we assume that the algorithm does not sample a locally optimal point for now. Hence, until an \textsc{Sp} 
point is found, the current best fitness can only be improved if a 
point with either a higher \textsc{ZeroMax} (or \textsc{OneMax})  or   a higher $\sum_{i=1}^n (n-i)\cdot(x_i)$ value than the current best individual is sampled. Since there are less than $n^2$ different values of $\sum_{i=1}^n (n-i)\cdot(x_i)$, and less than $n$ different values of \textsc{ZeroMax} (or \textsc{OneMax}), the current best individual cannot be improved more than $n^2 + n$ times after initialisation without sampling an \textsc{Sp} point.

Let \textsc{Sp$_{lower}$} denote the \textsc{Sp} points with less than $(1/2+ 2 \epsilon)n$ 1-bits, and \textsc{Sp$_{upper}$} denote $\textsc{Sp}\setminus \textsc{Sp}_{lower}$.  We will now show that with overwhelmingly high probability an \textsc{Sp$_{lower}$} point will be sampled before an \textsc{Sp$_{upper}$} point. 
The search points between $(1/2 \pm \epsilon)n$ 1-bits have at least a  distance of $\epsilon n$ to the \textsc{Sp$_{upper}$} points. Using Lemma \ref{lem:fcm}, we can bound the probability of sampling an 
\textsc{Sp$_{upper}$} by $\binom{n}{\epsilon n}^{-1}$ from any input 
solution with $(1/2 \pm \epsilon)n$ 1-bits. Excluding \textsc{Sp} points, a solution with  $(1/2 \pm \epsilon)n$ 1-bits  
has an improvement probability of at least $1/n^2$ (i.e., $\min\{1/n^2, \Theta(1)\}=1/n^2$ with the second term being the probability of improving the \textsc{ZeroMax} or \textsc{OneMax} value 
and the first term the probability of improving when the solution has exactly $n/2$ 0-bits).
Thus, the conditional 
probability that the best solution is improved
before any search point in the population is 
mutated into an \textsc{Sp$_{upper}$} point is  $(n^{-2})/(n^{-2}+ dup \cdot \mu \cdot \binom{n}{\epsilon n}^{-1}) =1- 2^{-\Omega(n)}$.  
This implies that an \textsc{Sp$_{lower}$} point is sampled before an \textsc{Sp$_{upper}$} point with overwhelmingly high probability.
Indeed, by the union bound, this event happens $ n^2 + n$ times consecutively after initialisation  with probability  at least $1-  (n^2+n) \cdot 2^{-\Omega(n)}= 1- 2^{-\Omega(n)}$, 
thus a point in \textsc{Sp$_{lower}$} is sampled before an \textsc{Sp$_{upper}$} point with overwhelmingly high probability.} Each point of \textsc{Sp} has at least $n/2$ 1-bits at the beginning of its bit
string. In order to improve by any value in one mutation operation, 1-bits 
should never be touched before the
mutation operator stops. Hence, the probability of improving by $X$ on
\textsc{Sp} is $p(X)< (1/2)^X$. This yields that the probability of improving by
$\sqrt{n}$ in one generation is at most $dup \cdot \mu \cdot (1/2)^{\sqrt{n}}$, as 
there are $\mu$ individuals in the population. We have now shown that it is 
exponentially unlikely that even an arbitrarily small fraction of the path 
points are avoided by jumping directly to path points with more 1-bits.

 Let $t$ be the first iteration when 
the current \textsc{Sp} solution has at least $99n/100$ 1-bits for the first 
time. We will first lower bound the probability of improving in order to upper 
bound the conditional probability that more than $\sqrt{n}$ new 1-bits are 
added given that an improving path point is sampled. An improvement is achieved 
if the 0-bit adjacent to the last 1-bit is flipped in the first mutation step, 
which happens with probability $1/n$. Thus, the conditional probability of 
improving more than $\Omega(\sqrt{n})$ points on  \textsc{Sp} is at most 
$2^{-\Omega(\sqrt{n})}/(1/n) = 2^{-\Omega(\sqrt{n})}$. Therefore, in generation $t$, 
the current best individual cannot have 
more than $(99n/100)+\Omega(\sqrt{n})$ 1-bits with probability $1-dup \cdot \mu \cdot 2^{-\Omega(\sqrt{n})}$ by the union bound. Similarly, the probability that no jump of size 
$\Omega(\sqrt{n})$  happens in $poly(n)$ generations is at least
$1- dup \cdot \mu \cdot 2^{-\Omega(\sqrt{n})}$.  So, we can conclude that the
number of generations to add another $n/200$ 1-bits to the prefix of the current 
best solution is at least $(n/200)/O(\sqrt{n})= \Omega(\sqrt{n})$ with the same probability. 
%

Now, we
show that during this number of generations, the algorithm gets trapped in the
local optima with overwhelming probability with the optimistic assumption that 
only the best b-cell in the population can be mutated into a locally optimal 
solution. Considering the best current individual, in each generation a 1-bit is 
flipped with probability at least 99/100 in the first mutation step. Thus, the 
probability of not touching a 1-bit in $\Omega(\sqrt{n})$ generations
is less than $(1/100)^{\Omega(\sqrt{n})}$. Considering that $\mu$ and $dup$ are both $poly(n)$, the
probability of flipping a 1-bit from the prefix in this number of generations and of not 
locating the optimum in the same amount of time is
{\color{black}
\begin{align*} P \geq \left(1- \left(\frac{1}{100}\right) ^{\Omega(\sqrt{n})}\right)\cdot \left(1- \frac{\sqrt{n}}{200} \cdot dup \cdot  \mu \cdot 2^ 
{-\Omega(\sqrt{n})} \right) \geq 1-2^{-\Omega(\sqrt{n})}.
\end{align*}
} 
After flipping a 1-bit, the mutation operator mutates at most $c n$ bits until it finds an improvement. All sampled solutions in the first $n/4-n/100$ mutation steps will have more than $3n/4$  1-bits and thus satisfy  the first condition of the local optima. If the Hamming distance between one of the first $n/4-n/100$ sampled solutions and
all \textsc{Sp} points is at least $\gamma n$, then the algorithm is in the trap. We only consider the \textsc{Sp} points with a prefix 
containing more than $3n/4-\gamma n$ 1-bits since \textsc{Sp} points with less 
than $3n/4- \gamma n$ 1-bits have already Hamming distance more than $\gamma n$ 
to the local optima. 


Similarly to the analysis done in~\cite{Jansen2011}, we consider the $k$th mutation step for $k:=4\gamma n (1+1/5)/(3-4\gamma)= \frac{6 c n}{30-5 c} \leq cn$ where the 
last expression is due to $\gamma=c/8$. After $k$ steps, the expected 
number of bits flipped in the prefix of length  $n(3/4-\gamma) $ is at least 
$k(3/4-\gamma)= (1+1/5) \gamma n$. For any mutation potential $0<c\leq 1$, $\frac{6 c n}{30-5 c} \leq n/4- n/100$.  
Using Chernoff bounds, we can show that the probability of having less than $\gamma n$ 0-bits in the prefix is 
 $P(X \leq (1-0.2)E(X)) \leq e^{\frac{-\gamma n \cdot 6}{1000}}=e^{-\Omega(n)}$.

Altogether, with probability $(1-e^{-\Omega(\gamma n)}) \cdot (1-2^{-\Omega(\sqrt{n})})=1-e^{-\Omega(\sqrt{n})}$ a point in the local optima is sampled. 
In $O(\log \mu)=O(\log n)$ generations this individual takes over the 
population. Once in the local optima, the algorithm needs at least $dup \cdot 
\mu \cdot 
1/\binom{n}{\gamma n}$ time to find the global optimum {\color{black}according to Lemma \ref{lem:fcm}}. 
\end{proof}
{\color{black}In the following theorem we see how the \oneoneea using SBM optimises \textsc{HyperTrap$_{\gamma}$} in polynomial time.}
\begin{theorem}
The \oneoneea optimises \textsc{HyperTrap$_{\gamma}$} in $O(n^{3+\epsilon})$ steps w.o.p. $1-e^{-\Omega(n^{\epsilon})}$ for $\epsilon >0$.
\end{theorem}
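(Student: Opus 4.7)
The plan is to decompose the (1+1)~EA run into three phases reflecting the fitness stratification of \textsc{HyperTrap$_{\gamma}$}, to bound each phase by a drift argument together with its concentration version, and to handle the lethal danger of falling into \textsc{LocalOpt} by a single union bound over the $O(n^{3+\epsilon})$ generations of the budget. By a Chernoff bound, the initial bit string $x^{(0)}$ satisfies $|x^{(0)}|_1 \in \bigl[\,n/2 - n^{(1+\epsilon)/2},\, n/2 + n^{(1+\epsilon)/2}\,\bigr]$ with probability at least $1-e^{-\Omega(n^{\epsilon})}$, so the algorithm starts either on the \textsc{OneMax} slope (if $|x^{(0)}|_1 < n/2$) or in the ``otherwise'' region (if $|x^{(0)}|_1 > n/2$, where the fitness equals $|x|_0$), in both cases at Hamming distance at most $n^{(1+\epsilon)/2}$ from the ridge $\{x : |x|_1 = n/2\}$ and in particular not in \textsc{LocalOpt}.

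In Phase~I (reaching $|x|_1 = n/2$), on either side of the ridge the accepted single-bit flips strictly move $|x|_1$ toward $n/2$, so standard multiplicative drift applied to the potential $n/2 - |x|_1$ on the \textsc{OneMax} side (and $|x|_1 - n/2$ on the ``otherwise'' side) yields expected time $O(n\log n)$, and the tail version of the drift theorem pushes this to $O(n\log n + n^{1+\epsilon})$ steps with probability at least $1-e^{-\Omega(n^{\epsilon})}$. In Phase~II (reaching some \textsc{Sp} point), set $\phi(x) := 2\,\bigl|\{ i \leq n/2 : x_i = 0 \}\bigr|$, which equals the Hamming distance from $x$ to $1^{n/2} 0^{n/2}$ whenever $|x|_1 = n/2$. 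On this plateau every 1-bit flip is rejected unless it happens to land on an \textsc{Sp} point, and the only 2-bit flips that strictly improve the gradient $\sum_i (n-i) x_i$ and reduce $\phi$ are the crossing swaps interchanging a 0-bit at position $i \leq n/2$ with a 1-bit at position $j > n/2$; there are $(\phi/2)^2$ such pairs, each realised by SBM with probability at least $1/(en^2)$, giving $E[\Delta \phi \mid \phi] \leq -\phi^2/(2en^2)$. The tail version of multiplicative drift thus bounds Phase~II by $O(n^2 \log n + n^{2+\epsilon})$ steps with probability $1-e^{-\Omega(n^{\epsilon})}$, and the argument terminates as soon as any \textsc{Sp} point is reached. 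Finally in Phase~III (climbing \textsc{Sp} to $1^n$), $1^{i+1} 0^{n-i-1}$ is produced from $1^i 0^{n-i}$ by flipping only bit $i+1$, with probability at least $1/(en)$; summing the $n/2$ resulting geometric waiting times gives expected time $O(n^2)$ and a standard Chernoff-type tail for sums of independent geometric random variables yields $O(n^2 + n^{2+\epsilon})$ steps with probability $1-e^{-\Omega(n^{\epsilon})}$.

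The main obstacle is arguing that the run never jumps into \textsc{LocalOpt}. By definition every $y \in \textsc{LocalOpt}$ has $|y|_1 \geq 3n/4$ and $H(y,\textsc{Sp}) \geq \gamma n$; during Phases~I and~II the current individual satisfies $|x|_1 \leq n/2 + n^{(1+\epsilon)/2}$ and hence its Hamming distance to every local optimum is at least $n/4 - n^{(1+\epsilon)/2} = \Omega(n)$ for $\epsilon < 1$, while during Phase~III it lies in \textsc{Sp} and so its Hamming distance to every local optimum is at least $\gamma n$. In either case a single SBM step reaches \textsc{LocalOpt} only by flipping a prescribed set of $\Omega(n)$ bits, an event of probability at most $n^{-\Omega(n)} = e^{-\Omega(n \log n)}$, so a union bound over the $O(n^{3+\epsilon})$ generations of the budget yields total failure probability $e^{-\Omega(n \log n)}$, negligible against $e^{-\Omega(n^{\epsilon})}$. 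Summing the three phase bounds therefore yields an overall runtime of $O(n^{2+\epsilon}) \subseteq O(n^{3+\epsilon})$ with probability $1-e^{-\Omega(n^{\epsilon})}$, as claimed.
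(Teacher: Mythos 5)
Your three-phase decomposition, the treatment of Phases~I and~III, and the union bound over the time budget to exclude a jump into \textsc{LocalOpt} all match the paper's argument in substance (the paper reaches the plateau by AFL, traverses the gradient by repeatedly swapping the rightmost 1-bit with the leftmost 0-bit for an $O(n^3)$ bound, climbs \textsc{Sp} in $O(n^2)$, and combines iterated Markov with a per-step union bound for the trap). The genuine problem is your Phase~II drift inequality $E[\Delta \phi \mid \phi] \leq -\phi^2/(2en^2)$. You justify it by exhibiting the $\phi$-decreasing crossing swaps, but an upper bound on the drift also requires controlling the accepted mutations that \emph{increase} $\phi$, and these exist and are not negligible. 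Concretely, from a plateau point SBM can flip a 1-bit at a position $i \leq n/2$ and a 1-bit far to the right to $0$ while flipping two $0$-bits just right of position $n/2$ to $1$: the offspring stays on the plateau and the gradient $\sum_i (n-i)x_i$ strictly increases (with $i = n/2$, the second 1-bit at position $n$, and new 1-bits at positions $n/2+1$, $n/2+2$, the change is $(n/2-1)+(n/2-2)-(n/2)-0 = n/2-3 > 0$), so the move is accepted, yet the number of first-half $0$-bits goes up. Counting such four-bit mutations (each of probability $\Theta(n^{-4})$, with $\Theta(n^3)$ accepted choices in reachable configurations, e.g.\ when the only second-half 1-bit sits at position $n$) shows the positive contribution to $E[\Delta\phi]$ can be of order $1/n$ already for constant $\phi$, which swamps your claimed negative drift $\Theta(\phi^2/n^2)$; the net drift of $\phi$ can then point away from \textsc{Sp}, so multiplicative drift cannot be applied to $\phi$.

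The conclusion of Phase~II is salvageable by switching to a monotone potential. Under elitism the gradient value $g(x)=\sum_{i=1}^n (n-i)x_i$ never decreases while the search point stays on the plateau; writing $g_{\max}=g(1^{n/2}0^{n/2})$, the sum of the gains $j-i$ over the $(\phi/2)^2$ crossing pairs equals $(\phi/2)\,(g_{\max}-g(x))$, and since each crossing swap occurs with probability at least $1/(en^2)$ and no accepted move decreases $g$, one obtains $E[\Delta g] \geq (g_{\max}-g)/(en^2)$ and multiplicative drift gives $O(n^2\log n)$ for Phase~II (the paper instead argues via a linear number of specific swaps of probability $\Omega(1/n^2)$ each, for $O(n^3)$). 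Either repair yields the stated $O(n^{3+\epsilon})$ bound with the claimed probability; as written, however, your Phase~II step fails. A minor additional point: your trap-distance estimate $n/4-n^{(1+\epsilon)/2}=\Omega(n)$ requires $\epsilon<1$, which should be stated explicitly (as in the paper, $\epsilon$ is taken to be an arbitrarily small constant).
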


\begin{proof}
With overwhelming probability, the algorithm is initialised within $n/2 \pm 
n/10$ 1-bits by Chernoff bounds. Since the distance to any trap point is 
linear and the probability that SBM flips at least $\sqrt{n}$ bits in a 
single iteration ($\binom{n}{\sqrt{n}}n^{-\sqrt{n}}=2^{-\Omega(\sqrt{n})}$) is 
exponentially small, so is the probability of mutating into a trap point. As 
the algorithm approaches the bit string with $n/2$ 1-bits, the distance to the 
trap remains linear. Conditional on not entering the trap, by standard AFL arguments it 
takes the \oneoneea at most $O(\sqrt{n})$ steps in expectation to find a 
point with $n/2$ 1-bits, i.e., on the gradient. After finding such a point, 
the \oneoneea improves on the gradient with probability at least {\color{black}$1/n^2 \cdot 
(1-1/n)^{n-2} \geq 1/(en^2)$} which is the probability of flipping the rightmost 1-bit 
and the leftmost 0-bit while leaving the rest of the bits untouched. To reach the first 
point of \textsc{Sp}, there are a linear number of 1-bits that need to be shifted 
to the beginning of the bit string. It therefore takes $O(n^3)$ steps in expectation to find \textsc{Sp}. The 
\oneoneea improves on \textsc{Sp} with probability at least $1/n \cdot 
(1-1/n)^{n-1} \geq 1/en$, which is the probability of flipping the leftmost 
0-bit and not touching the other bits. Hence, the global 
optimum is found in $O(n^2)$ steps in expectation giving a total expected 
runtime of $O(n^3)$ conditional on not falling into the trap. By applying 
Markov's inequality iteratively over $\Omega(n^{\epsilon})$ consecutive phases 
of length $\Theta(n^3)$ with an appropriate constant, the probability that the 
optimum is not found within $O(n^{3+\epsilon})$ steps is less than 
$e^{-n^\epsilon}$ with $\epsilon$ being an arbitrarily small constant. Since the 
probability of finding a local {\color{black}optimum} from the gradient points or 
\textsc{Sp} points is $n^{-\Omega(n)}$ in each step, the probability of not 
falling into the trap in $n^{3+\epsilon}$ steps is less than 
$(n^{3+\epsilon}\cdot n^{-c'n})$ by the union bound. Overall, the total 
probability of finding the optimum within $O(n^{3+\epsilon})$ steps is bigger 
than $(1- e^{-n^\epsilon})\cdot(1-n^{3+\epsilon}\cdot n^{-c'n})=1- 
e^{-\Omega(n^\epsilon)}$.
\end{proof}

\subsection{On Trap Functions} \label{subsec:trap}
In~\cite{optia}, where Opt-IA was originally introduced, the effectiveness of the algorithm was tested for optimising the following simple trap function:
\begin{align*}
\textsc{Simple Trap}(x):=
\begin{cases} 
	\frac{a}{z}(z-|x|_1) & \text{if} \; |x|_1\leq z,\\
    \frac{b}{n-z}(|x|_1-z) & \text{otherwise},
\end{cases}
\end{align*}
where $z \approx n/4$, $b=n-z-1$, $3b/2 \leq a \leq 2b$ and the optimal solution is the $0^n$ bit string.

{\color{black}The reported experimental results were averaged over 100 independent runs each with a termination criterion of reaching $5 \times 10^5$ fitness function evaluations. For all of the results, the population size (i.e., $\mu$) was 10 and $dup=1$. In these experiments, Opt-IA using either hypermutations or 
hypermacromutation never find the optimum of  \textsc{Simple Trap} already for 
problem sizes $n> 50$.  However, the following theorem shows that Opt-IA$^*$ indeed optimises \textsc{Simple Trap} efficiently.
}

\begin{theorem}\label{thm:trap}
{\color{black}Opt-IA} needs $O(\mu n^2 \log n)$ expected fitness function 
evaluations to 
optimise \textsc{Simple Trap} with $\tau=\Omega(n^{2})$ for $c=1$ and $dup=1$.
\end{theorem}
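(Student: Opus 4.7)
The plan is to decompose the run into three stages, show each incurs $O(\mu n^2 \log n)$ fitness evaluations, and then verify that the ageing deadline $\tau = \Omega(n^2)$ is comfortably larger than any level-waiting time so that the current-best b-cell is w.o.p.\ never discarded. By Chernoff bounds, the initial population lies entirely in the ``b-side'' region $\{x : |x|_1 > z\}$ with overwhelming probability, where the fitness gradient pulls towards the trap $1^n$ (fitness $b$).

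For the first stage I would apply the artificial fitness levels method to the best b-cell. Because all $\mu$ parents are cloned ($dup=1$), the current best is always cloned; from a parent with $|x|_1 = k \in (z,n)$ the first mutation step of static hypermutation flips a 0-bit with probability $(n-k)/n$, producing $|y|_1 = k+1$ with strictly larger fitness $b(k+1-z)/(n-z)$, at which point FCM terminates and $y$ is accepted by elitist selection. Hence the population's best fitness strictly improves within $n/(n-k)$ expected generations at level $k$; summing $\sum_{k=n/2}^{n-1} n/(n-k) = O(n\log n)$ and multiplying by the per-generation evaluation cost $\mu \cdot cn = O(\mu n)$ yields $O(\mu n^2 \log n)$ evaluations to reach $1^n$.

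The second stage exploits a deterministic jump. Once the best b-cell is $1^n$, static hypermutation picks $j$ distinct bits uniformly at random and produces $|y|_1 = n-j$ deterministically. Using $3b/2 \leq a \leq 2b$ and $z \leq n/4$, the smallest $j$ at which $\geq$-FCM accepts, i.e.\ at which $a(z-n+j)/z \geq b$, is $j^{*} = \lceil n - z(1-b/a) \rceil \geq 7n/8$, leaving $|y|_1 = \lfloor z(1-b/a) \rfloor \leq n/8$ with fitness $\geq b$. Thus one generation ($O(\mu n)$ evaluations) pushes an offspring onto the a-side; selection retains such an offspring with constant probability per attempt (strictly if $z(1-b/a)\notin\mathbb{Z}$, otherwise via random tie-breaking), so in $O(1)$ expected generations the population contains an a-side b-cell. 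The third stage mirrors the first on the a-side: from a parent with $|x|_1 = k \leq n/8$ the first mutation step flips a 1-bit with probability $k/n$, producing $|y|_1 = k-1$ with strictly larger fitness $a(z-k+1)/z$, so the AFL sum $\sum_{k=1}^{n/8} n/k = O(n\log n)$ multiplied by $O(\mu n)$ per generation gives another $O(\mu n^2 \log n)$ evaluations to reach $0^n$.

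The principal obstacle is ruling out ageing interference in all three stages. Because the per-generation improvement probability at every visited level is $\Omega(1/n)$, the probability that a particular level requires more than $\tau = \Omega(n^2)$ generations is $(1-\Omega(1/n))^{\Omega(n^2)} = e^{-\Omega(n)}$; a union bound over the $O(n)$ traversed levels bounds the probability that ageing ever discards a current-best b-cell by $e^{-\Omega(n)}$. Conditioning on this event and absorbing the complementary restart cost into a constant factor via total expectation, exactly as in the proof of Theorem~\ref{optiaallfunc}, yields the claimed $O(\mu n^2 \log n)$ bound.
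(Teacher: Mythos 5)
Your proposal is correct and follows essentially the same route as the paper's proof: an artificial-fitness-levels climb with improvement probability at least $(n-i)/n$ on the side with more than $z$ ones and $i/n$ on the other side, combined with the observation that $\tau=\Omega(n^2)$ dwarfs the $O(n)$ expected waiting time at any level, so losing the best b-cell to ageing is exponentially unlikely. Your explicit stage-two computation of the deterministic hypermutation jump from $1^n$ to a point with at most $n/8$ ones and fitness at least $b$ fills in a step that the paper's much terser proof leaves implicit.
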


\begin{proof}

Given that the number of 1-bits in the 
best solution is $i$, the probability of improving is at least $(n-i)/n$ if the 
best solution has more than $z$ 1-bits and $i/n$ otherwise. By following the 
proof of Theorem \ref{onemax}, at least one individual will reach $1^n$ or 
$0^n$ in $ O(\mu n^2 \log n)$ fitness function evaluations in expectation as 
long as no individual dies due to ageing.

The age of an individual reaches $n^2$ only if the improvement fails to 
happen in $n^2$ generations which happens with probability at most 
$(1-1/n)^{n^2}=2^{-\Omega(n)}$ since the improvement probability is at least 
$1/n$. This implies that the expected number of restarts by ageing is 
exponentially small.
%
%
%
%
\end{proof}

Given that Opt-IA was tested in~\cite{optia} also with the parameters suggested 
by Theorem \ref{thm:trap} (i.e., $c=1$, $dup=1$, $\tau=\infty$),  
we speculate that either FCM was mistakingly not used or the stopping criterion 
(i.e., the total number of allowed fitness evaluations, i.e., $5 \times 10^5$) 
was too small.
We point out that, for large enough $\tau$ also using hypermacromutation as mutation operator would lead to an expected 
runtime of $O(\mu n^2 \log n)$ for \textsc{Trap} functions~\cite{JansenZarges2011a}, with 
or without FCM. 
In any case, it is not necessary to apply both hypermutations and 
hypermacromutation together to efficiently optimise \textsc{Trap} functions as reported in~\cite{optia}. On the other hand, the 
inversely proportional hypermutation operator considered in~\cite{Jansen2011} 
would fail to optimise this function efficiently because it cannot flip $n$ bits when on the local optimum.

\section{Not Allowing Genotype Duplicates} \label{sec:diversity}

None of the algorithms considered in the previous sections use the genotype diversity mechanism. In this section, we do not allow genotype 
redundancies in the population as proposed in the original algorithm~\cite{optia,optia-transaction}. 
This change potentially affects the behaviour of the algorithm. {\color{black}In the following, we will first consider the ageing operator in isolation with genotype diversity (i.e., no genotypic duplicates are allowed in the population). Afterwards we will analyse the complete Opt-IA algorithm with the same diversity mechanism (as originally introduced in the literature).}

\subsection{\realmuonerlsageing with genotype diversity}

In this subsection, we analyse  \realmuonerlsageing (Algorithm \ref{realrlsageing}) with   genotype diversity for optimising \textsc{Cliff$_d$} for which \muonerlsageing without genotype 
diversity was previously analysed in Section \ref{sec:ageing}. 
The main difference compared to the analysis there is that taking over the population on the local optima is now harder since identical individuals are not allowed. 
The proof of the following theorem shows that the algorithm can still take over and use ageing to escape from the local optima as long as the population size is not too large. 

\begin{algorithm}[t]
\begin{algorithmic}[1]
\STATE{{\color{black}$t=0$},}
 \STATE{{\color{black}initialise $P^{(t)}=\{x_1,...,x_\mu\}$, a population of $\mu$ individuals uniformly at random and set $x_i^{age}=0$ for $i=\{1,...\mu\}$}.}
 \WHILE {the optimum is not found}
\STATE{ select $x\in P^{(t)}$ uniformly at random,}
\STATE{ create $y$ by flipping one bit of $x$,}
\STATE{Hybrid ageing~$(P^{(t)}, \{y\}, \tau, \mu)$,}
\STATE{Selection~($P^{(t)},\{y\}, \mu, 1$),} 
\STATE{$t=t+1$.}
\ENDWHILE
  \end{algorithmic}
  \caption{\realmuonerlsageing with genotype diversity (i.e., $div=1$).}
 \label{realrlsageing}
\end{algorithm}

\begin{theorem}
For {\color{black}constant} $\mu$ and $\tau=\Theta(n)$, the \realmuonerlsageing  with
genotype diversity optimises \textsc{Cliff$_d$} 
in expected $O(n \log n)$  fitness function evaluations for any linear $d \leq n/4 -\epsilon$.
\end{theorem}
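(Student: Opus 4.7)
The plan is to adapt the three-phase analysis used in Theorem~\ref{RLSp} for \muonerlsageing, modifying it to deal with the absence of genotypic copies (which were central to the takeover argument there). The three phases are: climbing the first \textsc{OneMax} slope up to the local optimum level $|x|_1 = n-d$, exploiting ageing to insert a bot-of-cliff individual with $|x|_1 = n-d+1$ into the population, and climbing the second slope to $1^n$. I will aim to show that with constant probability all three complete within $O(n \log n)$ fitness evaluations, after which a standard restart argument gives the claimed expected bound.

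For Phase 1, I would apply the artificial fitness level method to the $(\mu+1)$~RLS$_1$ dynamics with diversity on the \textsc{OneMax} portion: Chernoff bounds restrict the initial fitness to $n/2 + O(\sqrt{n})$, the probability of improving from fitness level $i$ is at least $(n-i)/(\mu n)$, and summing $\mu n/(n-i)$ from the initial level up to $n-d$ gives $O(n\log n)$ for constant $\mu$. Because improvement probabilities at every visited level are $\Omega((n-i)/n)$ while $\tau = \Theta(n)$ is much larger than the per-level expected waiting time, no individual is lost to ageing during the climb with probability $1-o(1)$; in addition, the plateau contains $\binom{n}{d} = 2^{\Omega(n)}$ distinct points, so the diversity filter does not prevent $\mu$ distinct occupants from settling on it. Once all $\mu$ slots are on the plateau, every single-bit flip produces a worse offspring that is rejected, so ages grow in lockstep and in a further $O(\tau) = O(n)$ generations the oldest member reaches age $\tau+1$ and dies with probability $1-1/\mu$.

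Phase 2 focuses on the event that, in the same generation as the oldest plateau member is killed by ageing, a different plateau member is selected for mutation and flips one of its $d$ zero-bits. The joint probability of these events is $\Omega((1-1/\mu)(d/n)) = \Omega(1)$ for linear $d$ and constant $\mu$, and because the selected parent can be chosen among the younger plateau occupants (whose ages are below $\tau$), the inherited age of the bot-of-cliff offspring stays below $\tau$ and it survives the ageing step, filling the vacancy created by the death. In Phase 3, since every single-bit mutation of a bot-of-cliff individual is strictly improving (giving either a new plateau point or an individual with $n-d+2$ ones on the second slope), descendants receive age $0$, and a second AFL argument gives $O(n\log n)$ further generations to reach $1^n$. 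The main obstacle will be ruling out that the second-slope lineage is repeatedly displaced back into the basin of the local optimum: the condition $d \leq n/4 - \epsilon$ is crucial, as it forces the bot-of-cliff fitness to remain strictly above that of random replacements, and a careful accounting of the selection dynamics is needed to show that, with constant probability, the climbing lineage reaches $1^n$ before being overtaken by freshly generated local optima. Multiplying the constant success probabilities of the three phases with their $O(n\log n)$ timings completes the argument.
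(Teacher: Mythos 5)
Your Phase~1 is essentially sound (the paper in fact gets $O(n)$ generations for the first climb, since the improvement probability stays $\Theta(d/n)=\Theta(1)$ all the way up to the level $n-d$), but your Phase~2 escape mechanism does not work, and this is a genuine gap. You propose that only the oldest plateau member dies while a \emph{younger} plateau member simultaneously spawns the bottom-of-cliff offspring, which inherits an age below $\tau$ and ``fills the vacancy''. The problem is what happens afterwards: the escapee has fitness $n-2d+3/2$, which for linear $d$ is far below both the plateau level $n-d$ and the intermediate level $n-d-1$. As long as $\mu-1$ plateau members remain alive, in every subsequent generation a plateau member is selected and flips a 1-bit with probability $\frac{\mu-1}{\mu}\cdot\frac{n-d}{n}=\Theta(1)$, producing an offspring of fitness $n-d-1$; the $(\mu+\lambda)$-truncation selection then removes the worst of the $\mu+1$ individuals, which is precisely your escapee (or its slightly improved descendant, since the lineage's fitness stays below $n-d-1$ until it has climbed almost all the way to $1^n$). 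The escaping lineage is therefore eliminated with constant probability per generation and would have to win $\Theta(n)$ consecutive races against this event, i.e.\ it survives only with probability $2^{-\Omega(n)}$. The whole point of the ageing escape, both in Theorem~\ref{RLSp} and in the paper's proof here, is that the \emph{entire} old population must die in the same generation in which the bottom-of-cliff offspring is created, so that the escapee competes only against freshly initialised random individuals of fitness roughly $n/2$ (this is where the upper bound on $d$ enters, making $n-2d+3/2$ exceed the fitness of random replacements), which then need $\Omega(n)$ generations to become a threat.

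The paper's proof accordingly does the bookkeeping you skip. After takeover the ages are not ``in lockstep'' but spread over an $O(\mu)$ window, because each arrival on the plateau is a strict improvement and resets age to $0$; one must therefore bound the probability that the early arrivals, which reach age $\tau$ first, all survive their per-generation death rolls for the $O(\mu)$ generations until the youngest also exceeds $\tau$. This probability is $(1/\mu)^{(\mu-1)\cdot O(\mu)}=\Theta(1)$ only because $\mu$ is constant. Then, in a single generation, with probability $\frac{d}{n}\cdot\frac{1}{\mu}\left(1-\frac{1}{\mu}\right)^{\mu}=\Theta(1)$ a bottom-of-cliff offspring is created, all $\mu$ parents die, and the offspring --- which inherits age greater than $\tau$ and must itself survive its own death roll, another point your version misses --- is the sole survivor; from there the argument of Theorem~\ref{RLSp} applies verbatim. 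Your Phase~3 worry about the climber being overtaken is resolved exactly by this simultaneous extinction rather than by any further selection analysis. The remaining ingredients (the $O(n\log n)$ second climb and the constant expected number of restarts) are fine, but without replacing your Phase~2 the proof does not go through.
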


\begin{proof}

{\color{black}By Chernoff bounds}, with overwhelming probability the initial individuals are sampled with $n (1/2 \pm \epsilon)$ 1-bits  for any arbitrarily small $0<\epsilon=\Theta(1)$.  Since the population size is constant and there is a constant probability of improving in the first mutation step, a local optimum is found in at most $O(n)$ steps by picking the best individual and improving it $O(n)$ times.

  If there is a single locally optimal solution in the population, then with probability $1/\mu$  this individual is selected as parent and 
produces an offspring with fitness $n-d-1$ (i.e., one bit away) with  probability 
$(n-d-i)/n \leq (n-d-\mu)/n =\Theta(1)$ where $i$ is the number of individuals already with fitness $n-d-1$. 
In the next generation, with probability $(i+1)/\mu $ one of the individuals on the local optimum or one step away from it is selected as parent and produces 
an offspring on either the local optimum or one bit away with probability at least $(n-d-\mu)/n =\Theta(1)$. Hence, in expected time $\mu 
\cdot \Theta(1)= \Theta(\mu)$ all $\mu$ individuals are  on the local optimum. 

When the last inferior solution is replaced by an individual on the local optimum, the other individuals have ages in the order of $O(\mu)$. 
Thus, the probability that the rest of the population does not die until the youngest individual reaches age $\tau$, is at least $(1/\mu)^{({\mu-1})\cdot O(\mu)}=\Theta(1)$, 
the probability that $\mu-1$ individuals above age $\tau$ survive $O(\mu)$ consecutive generations.  
 
In the first generation when the last individual reaches age $\tau$, with probability $d/n$ an offspring is created at the bottom of the cliff (i.e., with a fitness value of $n-d+1$) and with
probability $1/\mu \cdot (1-1/\mu)^{\mu} = \Theta(1)$ all the parents die together at that step 
and the offspring survives. The rest of the proof follows the same arguments as the proof of Theorem \ref{RLSp}. 

Overall, the total expected time to optimise \textsc{Cliff$_d$}  is dominated by the time 
to climb the second \textsc{OneMax} slope which takes $O(n \log n)$ steps in expectation. 
\end{proof}

\subsection{Opt-IA with genotype diversity}

In this subsection, we analyse Opt-IA (Algorithm \ref{modified-optia}) with genotype diversity to optimise all the functions for which Opt-IA without genotype diversity was analysed in Section \ref{sec:optia}. The following are straightforward corollaries of Theorem \ref{optiaallfunc} and Theorem \ref{thm:trap}. Since the ageing mechanism never triggers here and the proofs of those theorems do not depend on creating genotype copies, the arguments are still valid for Opt-IA  with genotype diversity. 
\begin{corollary}
The upper bounds on the expected runtime of Opt-IA with genotype diversity and ageing parameter 
$\tau$ large enough for \textsc{OneMax}, \textsc{LeadingOnes}, $\textsc{Jump}_k$ and $\textsc{Cliff}_d$ are as follows:\\ $
 E(T_{\textsc{OneMax}}) =O\left(\mu \cdot dup \cdot  n^2 \log{n}\right),\;
 \\E(T_{\textsc{LeadingOnes}}) =O\left(\mu \cdot dup \cdot  n^3\right),\;
 \\E(T_{\textsc{Jump}_k}) =O\left(\mu \cdot dup \cdot \frac{n^{k+1} \cdot 
e^k}{k^k}\right),\;
\\ \text{and}\;  E(T_{\textsc{Cliff}_d}) = O\left(\mu\cdot dup \cdot \frac{n^{d+1} \cdot e^d}{d^d}\right).\;
 $ 
\end{corollary}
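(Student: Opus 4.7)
The plan is to argue that Theorem~\ref{th:optiallfunctiongendive} (and similarly Theorem~\ref{thm:trap}) transfers essentially verbatim to the variant with genotype diversity. The two potential interactions between the original proof and the new diversity filter are (i) the ageing mechanism and (ii) the possibility that improving offspring are discarded for coinciding genotypically with existing parents. I would verify both in turn.

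For (i), I would reuse the ageing estimate already given in the proof of Theorem~\ref{th:optiallfunctiongendive}: the probability that the current best individual reaches age $\tau$ before producing an improvement at any single fitness level is bounded by $(1-p_{\min})^{\tau}\le e^{-n}$, and a union bound over the $O(n)$ traversed fitness levels yields a failure probability of $o(1)$. This estimate refers only to the per-level improvement probability of the current best individual, so it is unaffected by whether the population is permitted to contain genotypic duplicates, and hence still holds under the diversity filter.

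For (ii), the artificial fitness level argument underlying the \oneoneiahype bounds attributes all progress to offspring that strictly improve the best fitness value currently in $P^{(t)}$. Any such offspring $y$ satisfies $f(y)>f(x)$ for every $x\in P^{(t)}$, so $y$ cannot share a genotype with any individual of $P^{(t)}$, since coinciding genotypes would force equal fitness. Hence $y$ always survives the diversity filter in Algorithm~\ref{alg:selection}, and the lower bound on the per-generation improvement probability used in Theorem~\ref{th:optiallfunctiongendive} is preserved. Combining the two observations, the same asymptotic upper bounds of $O(\mu\cdot dup\cdot n^2\log n)$, $O(\mu\cdot dup\cdot n^3)$, $O(\mu\cdot dup\cdot n^{k+1}e^k/k^k)$ and $O(\mu\cdot dup\cdot n^{d+1}e^d/d^d)$ transfer to Opt-IA with genotype diversity. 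The only step that genuinely needs a new argument is the second one; it is a short argument rather than an obstacle because ``distinct fitness implies distinct genotype'' is immediate, but it is worth stating explicitly since proofs elsewhere in the paper (e.g.\ Theorem~\ref{th:hiddenpath}) rely crucially on genotypic copies for taking over a population and therefore do not transfer in the same easy way.
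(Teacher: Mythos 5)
Your proposal is correct and follows essentially the same route as the paper, which dispatches this corollary in a single sentence by noting that ageing never triggers for $\tau$ large enough and that the proofs of Theorem~\ref{optiaallfunc} do not rely on creating genotype copies. Your point (ii) — that a strictly improving offspring must be genotypically distinct from every member of $P^{(t)}$ because identical genotypes force identical fitness — simply makes explicit the observation the paper leaves implicit, and is a welcome addition rather than a deviation.
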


\begin{corollary}
 Opt-IA with genotype diversity needs $O\left(\mu n^2 \log n\right)$ expected fitness function evaluations to optimise \textsc{Simple Trap} with $\tau=\Omega(\mu n^{1+\epsilon})$, $c=1$ and $dup=1$.
\end{corollary}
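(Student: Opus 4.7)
The plan is to lift the proof of Theorem \ref{thm:trap} essentially unchanged. Recall that the proof there rests on two observations that are both independent of whether the population holds genotype duplicates: (i) on \textsc{Simple Trap} the per-generation probability that an individual with $|x|_1 = i$ produces an improving offspring via hypermutation with FCM is at least $\max\{(n-i)/n,\,i/n\} \geq 1/n$, since a single well-chosen first bit flip is already constructive; and (ii) for $\tau$ sufficiently large, the ageing operator effectively never triggers because the probability $(1-1/n)^\tau$ of failing to improve during $\tau$ consecutive generations is super-polynomially small. Together, via the \textsc{OneMax}-style AFL argument used in Theorem \ref{thm:trap}, these yield the bound $O(\mu n^2 \log n)$.

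The only new subtlety introduced by the diversity filter is that an improving offspring could in principle be rejected because its genotype coincides with an existing population member. However, on \textsc{Simple Trap} the fitness depends only on $|x|_1$, so an improving step moves the offspring onto a strictly new fitness level; under $(\mu+\lambda)$ selection, the current best individual is typically the unique occupant of its level, so the improving offspring effectively lands on an empty level. The probability that its bit pattern still coincides with one of the $\mu$ existing population members is at most $\mu / \binom{n}{|x|_1}$, which is exponentially small across the bulk of the search trajectory and polynomially bounded within the last $O(\log n)$ levels near $|x|_1 = 0$; a union bound over the polynomial number of improvements shows that the diversity filter rejects an improvement only with probability $o(1)$, so the $\Omega(1/n)$ per-generation improvement probability transfers unchanged.

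It remains to verify the ageing parameter. With $\tau = \Omega(\mu n^{1+\epsilon})$ the probability that an individual whose improvement rate is at least $\Omega(1/n)$ fails to improve during $\tau$ consecutive generations is at most $(1 - \Omega(1/n))^\tau = e^{-\Omega(\mu n^{\epsilon})}$, which is super-polynomially small. A union bound over the at most $\mu$ individuals in the population and over the polynomial number of generations preceding the discovery of the optimum then shows that no individual dies due to ageing with probability $1-o(1)$. The main (though mild) obstacle in the whole proof is the collision analysis above; once that is absorbed, the corollary reduces to the same AFL argument as in Theorem \ref{thm:trap} and yields the claimed expected runtime of $O(\mu n^2 \log n)$.
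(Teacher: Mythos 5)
Your proposal is correct and takes essentially the same route as the paper, which dispatches this corollary in one line by observing that for the given $\tau$ the ageing mechanism never triggers and that the proof of Theorem~\ref{thm:trap} nowhere relies on creating genotype duplicates. Your collision analysis, while harmless, can be reduced to a one-line observation: a duplicate genotype necessarily has equal fitness, so an offspring that strictly improves on the current best individual can never be removed by the diversity filter, and the tracked lineage of improvements is therefore unaffected.
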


The following theorem shows the same expected runtime for Opt-IA with genotype diversity for \textsc{HiddenPath} as that of the Opt-IA without genotype diversity proven 
in {\color{black}Theorem~\ref{th:hiddenpath}}.  However, we reduce the population size to be constant. 
The proof follows the main arguments of the proof of Theorem \ref{th:hiddenpath}. Here we only discuss the probability of events which should be handled differently considering that genotype duplicates are not allowed. 

\begin{theorem}
For $c=1$, $dup=1$, $\mu=\Theta(1)$ and $\tau=\Omega(n^2 \log n)$, Opt-IA with genotype diversity needs $O(\tau \mu n+\mu n^{7/2})$ expected fitness function evaluations to optimise \textsc{HiddenPath}.
\end{theorem}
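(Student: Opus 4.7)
The plan is to adapt the proof of Theorem~\ref{th:hiddenpath} to account for the genotype diversity constraint and the smaller population size $\mu=\Theta(1)$. The overall structure---climbing the \textsc{ZeroMax} slope to reach $S_{n-1}$, taking over the population, waiting for simultaneous ageing, escaping via a full-bit-flip hypermutation down to $S_1$, then on to $S_5$, climbing the gradient to reach the path \textsc{Sp}, and finally climbing \textsc{Sp} to the optimum---can be preserved, but each takeover argument has to be revisited in the no-duplicates setting.

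First I would observe that reaching the first $S_{n-1}$ individual takes $O(n\log n)$ expected generations via \textsc{ZeroMax}, unaffected by diversity since improvements produce new genotypes anyway. The critical new step is the takeover of $S_{n-1}$ solutions: because there are exactly $n$ distinct $S_{n-1}$ genotypes and $\mu$ is constant, the diversity mechanism always leaves at least $n-\mu=\Theta(n)$ unused $S_{n-1}$ genotypes. From any $S_{n-1}$ parent, a hypermutation under $\geq$-FCM produces some $S_{n-1}$ individual with probability $\Theta(1/n)$ (flip the single 1-bit, then any 0-bit); the offspring is then accepted by the diversity filter with probability $(n-k)/n=\Theta(1)$ when $k<\mu$ distinct $S_{n-1}$ points are already present. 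Hence the takeover completes in $O(\mu n)$ expected generations. The age-alignment argument of Theorem~\ref{th:hiddenpath} transfers directly: with $\mu=\Theta(1)$, the probability of two $S_{n-1}$ offspring being accepted in the same generation is $O(1/n^2)$, so the gaps between individual ages remain bounded by a constant for $o(n^2)$ generations with constant probability, and alignment occurs in $O(\mu^3 n)=O(n)$ expected generations, well within $\tau$.

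The ageing trigger then fires as in the original proof: with constant probability a single clone survives while the rest die, and with probability $1-O(1/n)$ its hypermutation sweeps through all $n$ positions, producing an $S_1$. The next hypermutation produces an $S_5$ via the same sequence-of-constructive-moves argument used in Theorem~\ref{th:hiddenpath}, and the diversity mechanism plays no role here since only a single individual is active. Subsequent takeover of $S_5$ points goes through because $\binom{n}{5}\gg\mu$, the gradient climb towards $1^{n-5}0^5$ contributes $O(n^2)$ generations, and the Markov-iteration argument bounding the time to reach the first \textsc{Sp} point by $O(n^{5/2})$ w.o.p.\ is unchanged. The final climb along \textsc{Sp} requires $O(n\log n)$ generations. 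Multiplying the dominant $O(\tau + n^{5/2})$ generations by the at most $\mu c n$ fitness evaluations per generation yields the claimed $O(\tau\mu n+\mu n^{7/2})$ bound.

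The main technical obstacle is verifying that age alignment and the subsequent single-clone survival still occur with the required constant probability under genotype diversity: every accepted offspring must differ from the current parent population. Because the $n-\mu$ unused $S_{n-1}$ genotypes constitute a $1-o(1)$ fraction of all $S_{n-1}$ points for constant $\mu$, the acceptance rate per successful hypermutation remains $\Theta(1)$, and the probabilistic arguments borrowed from Lemma~5 of~\cite{ageing} and Theorem~\ref{th:hiddenpath} go through with only constant-factor changes in the hidden constants. A secondary subtle point is that the ageing-generation requires the surviving clone to be produced by a hypermutation whose outcome is genotypically distinct from its parent in $P^{(t)}$; this is automatic since flipping all $n$ bits of an $S_{n-1}$ gives an $S_1$, which is never in $P^{(t)}$ at this phase.
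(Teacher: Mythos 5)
Your treatment of the $S_{n-1}$ takeover, the age alignment, the ageing trigger, and the final climb matches the paper's argument, and those steps are fine. However, there is a genuine gap at the $S_5$ takeover, which you dismiss with ``goes through because $\binom{n}{5}\gg\mu$.'' In the proof of Theorem~\ref{th:hiddenpath} the takeover is driven by a doubling feedback loop: each $S_5$ solution that fails to stop early hypermutates into its complement, an $S_{n-5}$ solution, and each such $S_{n-5}$ solution hypermutates back into an $S_5$ solution, so the joint subpopulation doubles with constant probability and takes over in $O(\log\mu)$ generations. Under genotype diversity this mechanism breaks: the full $n$-bit flip of an $S_{n-5}$ solution produces exactly the $S_5$ solution already in the population, which the diversity filter rejects. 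Having many distinct $S_5$ genotypes available is necessary but does not repair the mechanism. The paper replaces it with a different argument: a \emph{new, distinct} $S_5$ solution is created from an existing one with probability at least $\tfrac{5}{n}\cdot\tfrac{n-5-\mu}{n-1}=\Omega(1/n)$ (first flipped position a $1$ in the parent, second flipped position a $0$ in every current $S_5$ member), giving a multiplicative growth factor of $1+\Omega(1/n)$ per generation and hence a takeover time of $O(n\log\mu)=O(n)$ generations for constant $\mu$.

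This slower takeover creates a second obligation that your proposal does not address: the window during which the randomly reinitialised individuals could climb \textsc{ZeroMax} back to an $S_{n-1}$ solution is now $O(n)$ generations rather than $O(\log\mu)$, and if an $S_{n-1}$ solution re-enters the population the whole escape attempt is wasted. The paper handles this by bounding the time $T^*$ for a fresh random individual to reach $S_{n-1}$: its expectation is $\Omega(n\log n)$ via the Ballot theorem (Theorem~\ref{thm:ballot}), and its variance is $\Theta(n^2)$ by summing geometric phase lengths, so Chebyshev's inequality gives failure probability $O(1/\log n)$ within $O(n)$ generations; a separate union bound of $O(n^{-3})$ covers direct jumps from $S_5$ or \textsc{Sp} solutions to $S_{n-1}$. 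Without both the corrected takeover mechanism and this variance argument, you cannot conclude that the takeover succeeds with probability $\Omega(1)$, which is what keeps the expected number of restarts constant and preserves the claimed bound.
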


\begin{proof}
We follow the analysis of the proof of Theorem \ref{th:hiddenpath} for Opt-IA without genotype diversity. 
Although the analysis did not benefit from genotype duplicates, not allowing them potentially affects the runtime of the events where the population takes over.
The potentially affected events are:
\begin{itemize}

\item For $S_{n-1}$ solutions to take over the population, the probabilities are different here since a new $S_{n-1}$ solution will not be accepted if it is identical to any current $S_{n-1}$ solutions. Here, after finding the first $S_{n-1}$ solution, the rest are created and accepted by Opt-IA with probability at least $\Omega(1/n \cdot (n-\mu)/(n-1))=\Omega(1/n)$. Therefore, the argument made in the proof of Theorem \ref{th:hiddenpath} does not change the runtime asymptotically. 

\item The arguments about the expected time needed for $S_{n-1}$ solutions to reach the same age after the takeover are the same as in the proof of Theorem \ref{th:hiddenpath}; 
without genotype duplicates, the probability of creating another $S_{n-1}$ is still $\Omega(1/n)$. Hence, the probability of creating two copies in the same generation is still 
unlikely and we can adapt the arguments made in the proof of Theorem \ref{th:hiddenpath}. Therefore, in expected $O(\mu^3n)$ generations after the takeover of $S_{n-1}$, the population 
reaches the same age. 

\item  In the proof of Theorem \ref{th:hiddenpath}, the expected time for $S_5$ solutions to take over the population of recently initialised individuals is bounded relying on having multiple copies of one $S_5$ and one $S_{n-5}$ solution.  This proof strategy cannot be applied with the genotypic diversity mechanism which only allows unique solutions in the population.

{\color{black} 
In order to create a unique $S_5$ solution from another $S_5$ solution, it is sufficient that the first bit position to be flipped has value $1$ in the parent bit string and the second position to be flipped has value 0 in all $S_5$ solutions currently in the population, including the parent. Such a mutation occurs with probability at least  $(\frac{5}{n}\cdot \frac{n-5-\mu}{n-1}) \geq 4/n=\Omega(1/n)$. This lower bound in the probability implies that the number of $S_5$ individuals in the population is increased by a factor of $1+\Omega(1/n)$ at every generation in expectation. For any constant $c$, the exponent $t$ that satisfies $1\cdot(1+c/n)^t=\mu$ is in the order of $O(n\log{\mu})$. Thus, in at most $O(n \log{\mu})$ generations in expectation the $S_5$ solutions take over the population if no solutions with higher fitness have been added to the population before the takeover.  Due to the assumption that $\mu=\Theta(1)$, the expected number of generations reduces to $O(n)$. By Markov's inequality, the probability that the expected time is in the order of $O(n)$ is $\Omega(1)$.

Only the solutions on \textsc{Sp} and $S_{n-1}$ solutions have better fitness than $S_5$ solutions. The rest of the proof of Theorem~\ref{th:hiddenpath} can still be applied if the only remaining non $S_5$ solutions in the population are \textsc{Sp} solutions. So, we will only show that it is unlikely that an $S_{n-1}$ solution will be sampled before the population consists only of $S_{5}$ and \textsc{Sp} solutions. The number of 0-bits in randomly initialised solutions is in the interval of  $[n/2 -n^{2/3}, n/2 +n^{2/3}]$ with probability $1-2^{\Omega(n^{1/3})}$ due to Chernoff bounds. We can then follow the strategy from Theorem~\ref{thm:lbonemax} to show that the expectation of $T^*$, the time until an $S_{n-1}$ solution descending from  a randomly created solution is sampled, is in the order of $\Omega(n \log{n})$. In order to bound the probability that this event will not happen in $O(n)$, we will bound the variance of this runtime. Since FCM stops after a solution with more $0$-bits is sampled,  we can pessimistically divide the runtime into phases of length $T_i$, $i\in \{2,\ldots, n/2+n^{2/3}\}$, where the best among the newly created solutions has $i$ 1-bits. The length of phase  $T_i$ is distributed according to a geometric distribution with success probability at most  $\mu\cdot 2 i/n$ due to the Ballot theorem (i.e., Theorem \ref{thm:ballot}) and the union bound summed up over $\mu$ candidate solutions which can be improved at every generation. Being geometrically distributed, the variance of $T_i$ is at most $(1-(d \cdot i/ n))/(d\cdot i /n)^2$ for some constant $d  > 2 \mu$. Summing up over all phases of independently distributed lengths, we obtain the variance of $T^*$ as:
\begin{align*}
 Var(T^*) \leq \sum\limits_{i=2}^{n/2+n^{2/3}} \frac{1-(d\cdot i/ n)}{(d\cdot i /n)^2} \leq \frac{n^2}{d^2}\sum\limits_{i=2}^{n/2+n^{2/3}}\frac{1}{i ^2} =\frac{n^2}{d^2}\frac{\pi^2}{6}=\Theta(n^2).
\end{align*}
 Due to Chebyshev's Inequality, the probability that such an event happens in $O(n)$ generations instead of its expectation, which is in the order of $\Omega(n\log{n})$, is at most $O(1/\log{n})$. Conversely, the probability that such a failure does not occur is $1-O(1/\log{n})$.

The path solutions have between $5$ to $\log{n}+1$ 1-bits, thus the probability that the hypermutation operator yields an $S_{n-1}$ solution as output given an \textsc{Sp} or $S_{5}$ solution as input is at most $\binom{n}{n-4}^{-1}=O(n^{-4})$. The probability that such a mutation occurs in $O(n)$ generations is at most $O(n^{-3})$ by the union bound. Considering all the possible failure events, the takeover happens in $O(n)$ generations with $\Omega(1)$ probability before any $S_{n-1}$ individual is added to the population.
}
\end{itemize}

The rest of the proof of Theorem \ref{th:hiddenpath} is not affected by genotype diversity. 
\end{proof}

\section{Conclusion} \label{sec:conclusion}
We have presented an analysis of the standard Opt-IA artificial immune system. 
We first highlighted how both the ageing and hypermutation operators may allow 
to efficiently escape local optima that are particularly hard for standard 
evolutionary algorithms. Concerning hypermutations, we proved that FCM is essential to the 
operator and suggested  considering a mutation \emph{constructive} if the 
produced fitness is at least as good as the previous one. The reason is that far away points of equal fitness should be attractive for 
the sake of increasing exploration capabilities. Our analysis on the \textsc{Jump$_k$} function suggests that hypermutation with FCM is generally preferable to the SBM operator when escaping the local optima requires a jump of size $k$ such that $(k/e)^k \ge n$. This advantage is least pronounced when there is a single solution with better fitness than the local optima as in the case of \textsc{Jump$_k$} and the performance difference with the SBM in terms of expected escape time scales multiplicatively with the number of acceptable solutions at distance $k$. Hence, the \textsc{Jump$_k$} function may be considered as a worst-case scenario concerning the advantages of hypermutations over SBM for escaping local optima.

Concerning ageing, we showed for the first time 
that the operator can be very efficient when coupled with {\color{black}SBM} 
and hypermutations. To the best of our knowledge, the operator  allows the best 
known expected runtime (i.e., $O(n\log{n})$) for hard \textsc{Cliff$_d$} functions (this expected runtime has recently been matched by a simple hyperheuristic \cite{LissovoiOlivetoWarwicker2019}).
Afterwards, we presented  a class of functions where both the characteristics of 
ageing and hypermutation are crucial, hence  Opt-IA is efficient while  standard 
evolutionary algorithms are inefficient even if coupled with one extra AIS 
operator (either cloning, ageing, hypermutation or contiguous somatic mutation). 
Finally, we proved that all the positive results presented for the Opt-IA algorithm
without genotype diversity also hold for Opt-IA with genotype 
diversity as used in the original algorithm. However, small population sizes may 
be required if ageing has to be triggered to escape from local optima with 
genotype diversity.
To complete the picture we presented a class of problems where the use of hypermutations and ageing is detrimental while standard evolutionary algorithms are efficient.

Our analysis shows that for easy problems for which local search strategies are efficient, using hypermutations and ageing may be detrimental.
We have shown this effect for the simple \textsc{OneMax} and \textsc{LeadingOnes} functions for which we have proven a linear slow-down in the expected runtime compared to local search 
strategies and simple evolutionary algorithms. While such a slow-down may be considered an acceptable expense in exchange for being efficient on more complicated optimisation problems, 
we have shown for \textsc{HyperTrap$_{c/8}$} that the consequences may be more drastic, by making the difference between polynomial
and exponential runtimes. Indeed, the function is easy for local search 
algorithms with neighbourhoods of size greater than 1 and for simple EAs.
However, hypermutations make Opt-IA fail with overwhelming probability and ageing does not help the algorithm to escape from the trap permanently. We point out that recently {\it Fast Hypermutation} operators have been presented that allow to hillclimb efficiently while still keeping the efficiency at escaping local optima \cite{CorusOlivetoYazdaniFASTAIS20018}. However, such Fast AIS still suffer on \textsc{HyperTrap$_{c/8}$}. 

On the other hand, for more complicated multimodal functions, with closer characteristics to the optimisation problems that occur in practice, 
we have shown several advantages of hypermutations and ageing to escape from local optima (i.e., \textsc{Jump$_k$} and \textsc{Cliff$_d$}).
Furthermore, the combination of hypermutation and ageing may be advantageous to locate 
new promising basins of attraction that are hard to find via more traditional optimisation techniques such as local search or EAs using SBM.
We have illustrated such effect in the analysis of \textsc{HiddenPath}, where the combination of hypermutations and ageing allow Opt-IA to locate a new basin of attraction that 
initially has lower fitness than the easy-to-find local optima. However, in the long run, having identified this new basin of attraction allows the algorithm to find the global optimum.

Overall, we believe this work is a significant contribution towards the understanding of which kind of problems it is advantageous to use artificial immune systems on rather than 
evolutionary algorithms and for which it is detrimental. Future work should focus on providing such advantages and disadvantages for classical combinatorial optimisation problems, in similar fashion to the recently presented results for the NP-hard \textsc{Partition} problem that have been achieved by building upon the present work \cite{CorusOlivetoYazdaniPartition2018}.

\smallskip
{\color{black}\textbf{Acknowledgement:}}
The research leading to these results has received funding from the EPSRC under 
grant agreement no EP/M004252/1.

\section*{References}

\bibliography{mybibfile}

\end{document}